\documentclass[10pt]{article}

\usepackage[a4paper,top=30mm, bottom=50mm, left=25mm, right=25mm]{geometry}

\usepackage{lmodern}
\usepackage[T1]{fontenc}
\usepackage{palatino}

\usepackage{amsmath,amsthm}
\usepackage{amssymb, amsbsy}
\setcounter{tocdepth}{3}
\usepackage{gensymb}

\usepackage[dvipsnames]{xcolor}
\usepackage[pagebackref]{hyperref}
\hypersetup{
% pdfKeywords={},
% pdftoolbar=true,
% pdfmenubar=true,
colorlinks=true,
linkcolor=Brown,
citecolor=OliveGreen,
% filecolor=magenta,
urlcolor=RoyalBlue,
}

\usepackage{graphicx}
\graphicspath{ {images/} }
\usepackage{import}
\usepackage{sectsty}
\usepackage{enumitem, outlines}
\usepackage{lineno}

\usepackage{cite}
\usepackage[edges]{forest}

\usepackage{amsfonts}
\usepackage[linesnumbered,vlined,ruled]{algorithm2e}
% \SetAlFnt{\footnotesize}
% \SetAlCapFnt{\large}
% \SetAlCapNameFnt{\large}

\usepackage{makecell}

% \usepackage[linesnumbered,vlined,ruled]{algorithm2e}
% \SetAlFnt{\footnotesize}
% \SetAlCapFnt{\small}
% \SetAlCapNameFnt{\small}

% 
\usepackage{setspace}
\usepackage[list=true]{subcaption}
\usepackage[title]{appendix}
\usepackage{multirow}
\usepackage{array}

\newtheorem{theorem}{Theorem}
\newtheorem{lemma}{Lemma}

\date{}

\let\oldnl\nl% Store \nl in \oldnl
\newcommand{\nonl}{\renewcommand{\nl}{\let\nl\oldnl}}

\begin{document}

\title{Mutual Visibility by Fat Robots with Slim Omnidirectional Camera}

% \author{
% {Kaustav Bose\footnote{Advanced Computing and Microelectronics Unit, Indian Statistical Institute, Kolkata, India; email: kaustavbose27@gmail.com}} \and

% {Abhinav Chakraborty\footnote{Advanced Computing and Microelectronics Unit, Indian Statistical Institute, Kolkata, India; email: ac.com}} \and

% {Krishnendu Mukhopadhyaya\footnote{Advanced Computing and Microelectronics Unit, Indian Statistical Institute, Kolkata, India; email: km.com}}
% }

\author{
 {Kaustav Bose\footnote{Advanced Computing and Microelectronics Unit, Indian Statistical Institute, Kolkata, India; email: kaustavbose27@gmail.com}} \and
 
 {Abhinav Chakraborty \footnote{Advanced Computing and Microelectronics Unit, Indian Statistical Institute, Kolkata, India; email: abhinav.chakraborty06@gmail.com}}  \and
 
{Krishnendu Mukhopadhyaya \footnote{Advanced Computing and Microelectronics Unit, Indian Statistical Institute, Kolkata, India; email: krishnendu@isical.ac.in
}} 
 
 }

\maketitle              % typeset the header of the contribution

\begin{abstract}
In the existing literature of the \textsc{Mutual Visibility} problem for autonomous robot swarms, the adopted visibility models have some idealistic assumptions that are not consistent with practical sensing device implementations. This paper investigates the problem in the more realistic visibility model called \emph{opaque fat robots with slim omnidirectional camera}. The robots are modeled as unit disks, each having an omnidirectional camera represented as a disk of smaller size. We assume that the robots have compasses that allow agreement in the direction and orientation of both axes of their local coordinate systems. The robots are equipped with visible lights which serve as a medium of communication and also as a form of memory. We present a distributed algorithm for the \textsc{Mutual Visibility} problem which is provably correct in the semi-synchronous setting. Our algorithm also provides a solution for \textsc{Leader Election} which we use as a subroutine in our main algorithm. Although \textsc{Leader Election} is trivial with two axis agreement in the full visibility model, it is challenging in our case and is of independent interest.
\end{abstract}

\section{Introduction}\label{sec:introduction}
\subsection{Motivation}

Robot swarms are distributed multi-robot systems consisting of simple and inexpensive robots that can collaboratively execute complex tasks. Such systems offer several advantages over traditional single robot systems, such as scalability, robustness and versatility. Employing robot swarms is particularly tempting in risky and hazardous scenarios, such as disaster management, military operations, etc. Coordination of autonomous robot swarms has been the object of study in a number of fields, including robotics, artificial intelligence, control theory and distributed computing. Within distributed computing, in particular, there has been a series of studies on the algorithmic aspects of distributed coordination of robot swarms (see \cite{flocchini2019distributed} for a comprehensive survey).  

In the traditional framework of theoretical studies, robot swarms are modeled as systems of \emph{autonomous} (there is no centralized control), \emph{anonymous} (no unique identifiers that can be used in computation), \emph{identical} (indistinguishable by physical appearance) and \emph{homogeneous} (all of them execute the same algorithm) computational entities that can freely move on the plane. The robots do not have access to any global coordinate system. They each have their own local coordinate system. Each robot is equipped with sensors that allow it to perceive its surroundings. The robots operate in Look-Compute-Move (LCM) cycles, i.e., when a robot becomes active, it takes a snapshot of its surroundings (Look), computes a destination point based on the snapshot (Compute) and moves towards the computed destination (Move). There are three types of schedulers considered in the literature that describe the timing of operations of the robots. In the fully synchronous model ($\mathcal{FSYNC}$), time is logically divided into global rounds and all the robots are activated in each round. The robots take their snapshots simultaneously and then execute their moves concurrently. The semi-synchronous model ($\mathcal{SSYNC}$) is the same as $\mathcal{FSYNC}$ except for the fact that not all robots are necessarily activated in each round. The most general model is the asynchronous model ($\mathcal{ASYNC}$), where there are no assumptions except that all robots are activated infinitely often. The standard movement models are \emph{rigid} and \emph{non-rigid}. In the \emph{rigid} model, a robot can move towards its computed destination accurately in one step. On the other hand, the \emph{non-rigid} model assumes that a robot may stop before it reaches its destination. However, there is a fixed but unknown $\delta > 0$ so that each robot traverses at least the distance $\delta$ unless its destination is closer than $\delta$.
 
 In the full visibility model, it is assumed that each robot can see all the other robots in the swarm. However, limitations in sensing capabilities may not allow the robots to obtain a complete view of the configuration. For example, if the robots are equipped with cameras as sensing devices, then the visibility of a robot can be obstructed by the presence of other robots. A simple model that captures this scenario is the opaque point robot model. In this model, the robots are assumed to be dimensionless points in the plane and two robots are visible to each other if and only if no other robot lies on the line segment joining them. A more realistic model is the opaque fat robot model, where the robots are seen as disks. In this model, two robots are visible to each other if and only if there exists an unobstructed line between two points on the surface of the robots. Although this model addresses the impracticality of the physical aspect of the previous point robot model, the visibility assumptions are still impractical. In order to implement this model in practice, the entire surface of a robot has to be covered with cameras, which is expensive and impractical. A more economical way of obtaining an omnidirectional view is to have a circular arrangement of fewer number of cameras around the center of the robot. This can be modeled as having a single camera represented as a disk whose diameter is smaller than the diameter of the robot. We say that a robot is visible to another robot if and only if there is an unobstructed line from a point on the surface of the former to the camera of the latter. This model was introduced by Honorat et al. in \cite{tcs/HonoratPT14}. The region obstructed by a single robot in this model has the shape of a truncated infinite cone instead of a line or a strip in the case of opaque point robots and traditional opaque fat robots models respectively (see Fig. \ref{fig: 3type}). Some implications of this model which show a stark difference from the previous models are mentioned in the following.
 
 \begin{itemize}
     \item A robot cannot always ascertain from its view whether or not it is on the convex hull of the configuration (see Fig. \ref{fig: false_hull}). 
     
     \item A robot cannot always ascertain from its view whether or not it is on a collinear configuration, i.e., all the robots are on the same line (see Fig. \ref{fig: coll1}  and Fig.  \ref{fig: coll2}).
     
     \item The visibility relation is not symmetric, i.e., $r_1$ being visible to $r_2$ does not imply that  $r_2$ is also visible to $r_1$ (see Fig. \ref{fig: asymm}). 
     
 \end{itemize}

  \begin{figure}[thb!]
     \centering
     \begin{subfigure}[b]{0.25\linewidth}
         \centering
         \includegraphics[width=\linewidth]{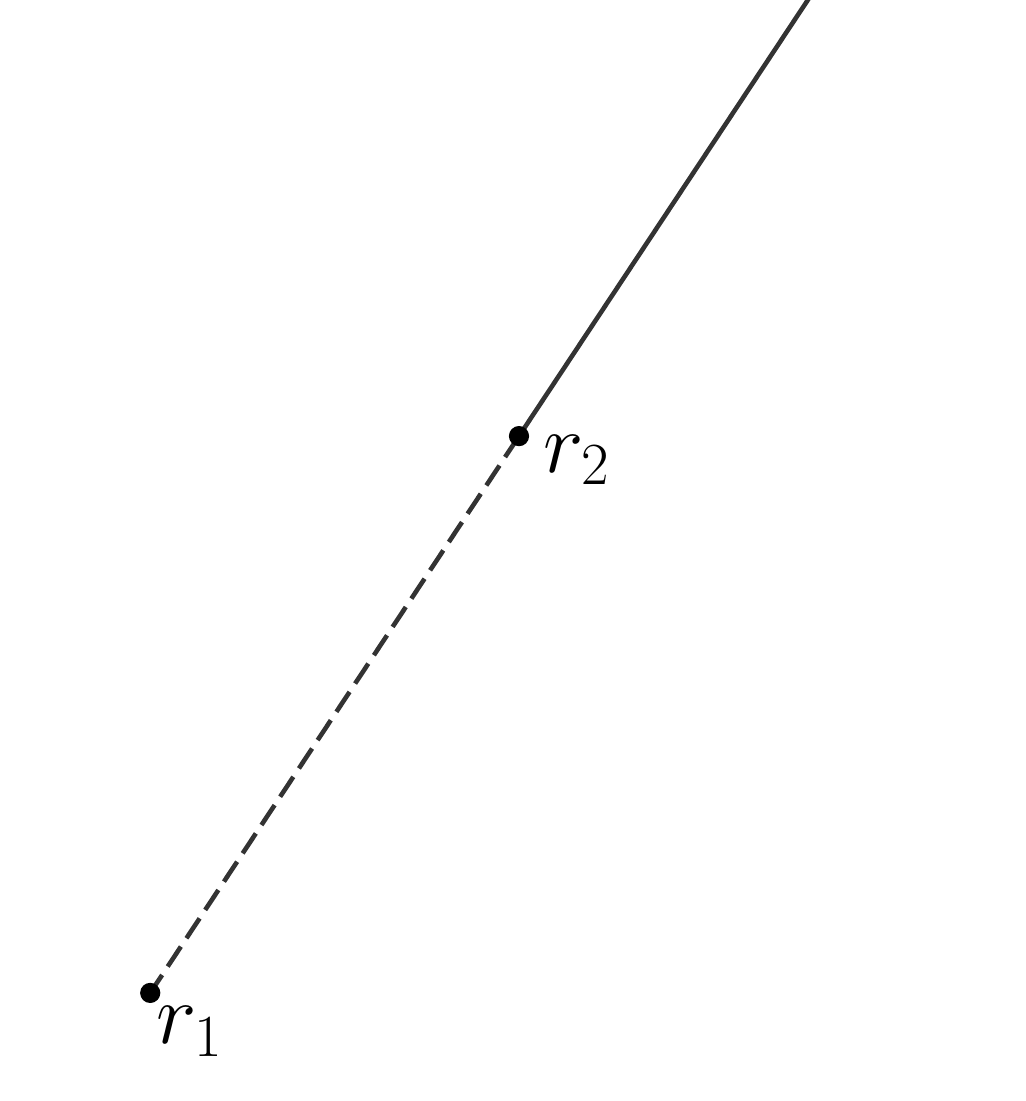}
         \caption{}
          \label{}
     \end{subfigure}
     \hfill
     \begin{subfigure}[b]{0.25\linewidth}
         \centering
         \includegraphics[width=\linewidth]{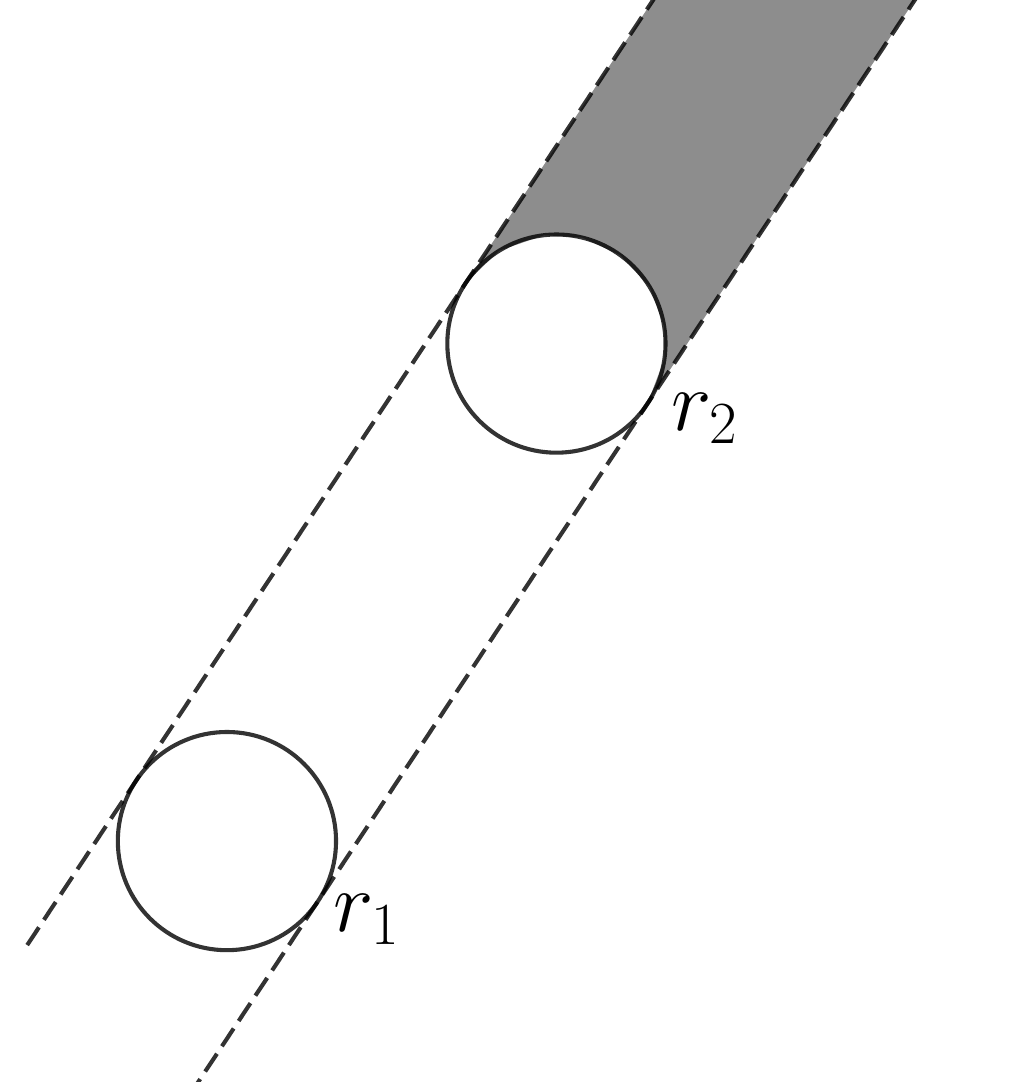}
         \caption{}
          \label{} 
    \end{subfigure}
     \hfill
     \begin{subfigure}[b]{0.25\linewidth}
         \centering
         \includegraphics[width=\linewidth]{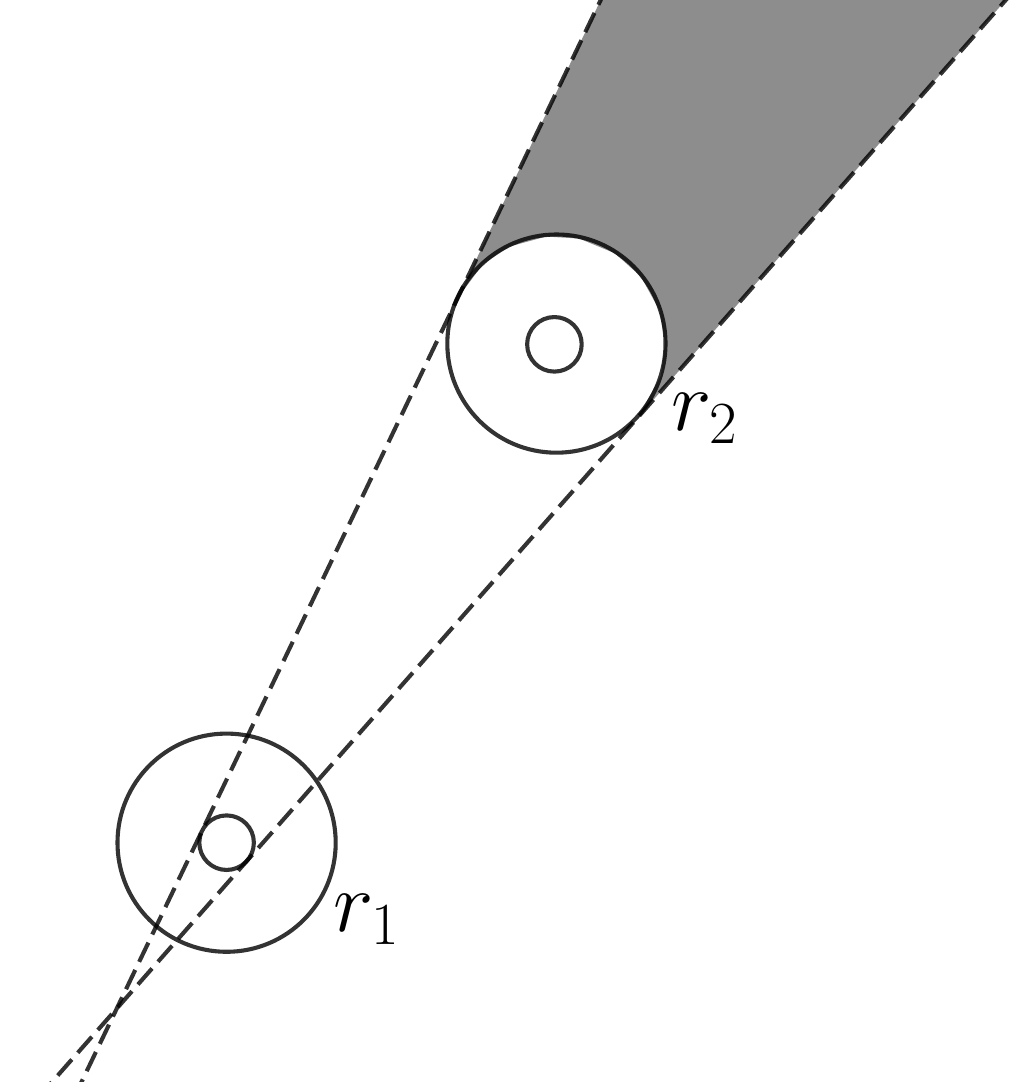}
         \caption{}
          \label{}         
     \end{subfigure}
\caption[Short Caption] {The region invisible to $r_1$ due to the obstruction by $r_2$. a) Opaque point robots. b) The traditional model of opaque fat robots. c) Fat robots with slim omnidirectional camera.}
\label{fig: 3type}
\end{figure}

%%%%%%%%%%%%%%%%%%%%%%%%%%%%%%%%%%%%%%%%%%%%%%%%%%%%%%%%%%%%%%%%%%%%%%%%%%%%%%%%

\begin{figure}[thb!]
     \centering
     \begin{subfigure}[b]{0.6\linewidth}
         \centering
         \includegraphics[width=\linewidth]{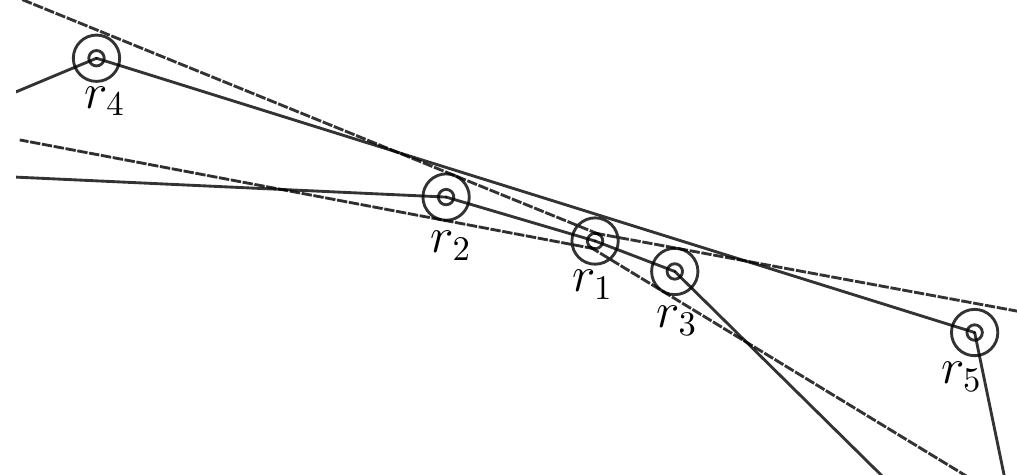}
         \caption{}
          \label{fig: false_hull}
     \end{subfigure}
     \\
     \begin{subfigure}[b]{0.35\linewidth}
         \centering
         \includegraphics[width=\linewidth]{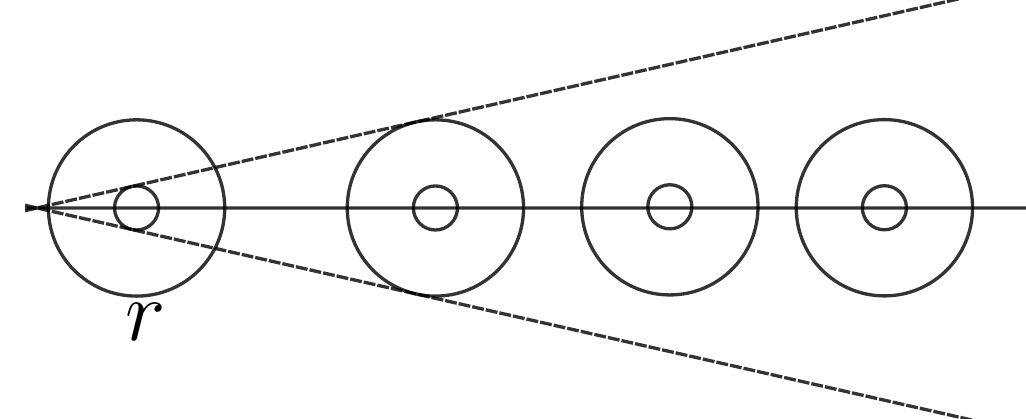}
         \caption{}
          \label{fig: coll1} 
    \end{subfigure}
    \hfill
     \begin{subfigure}[b]{0.35\linewidth}
         \centering
         \includegraphics[width=\linewidth]{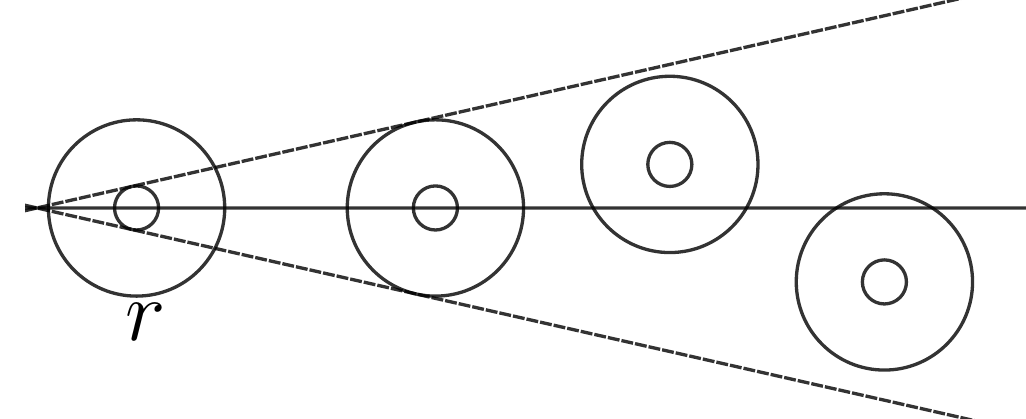}
         \caption{}
          \label{fig: coll2} 
    \end{subfigure}
     \\
     \begin{subfigure}[b]{0.35\linewidth}
         \centering
         \includegraphics[width=\linewidth]{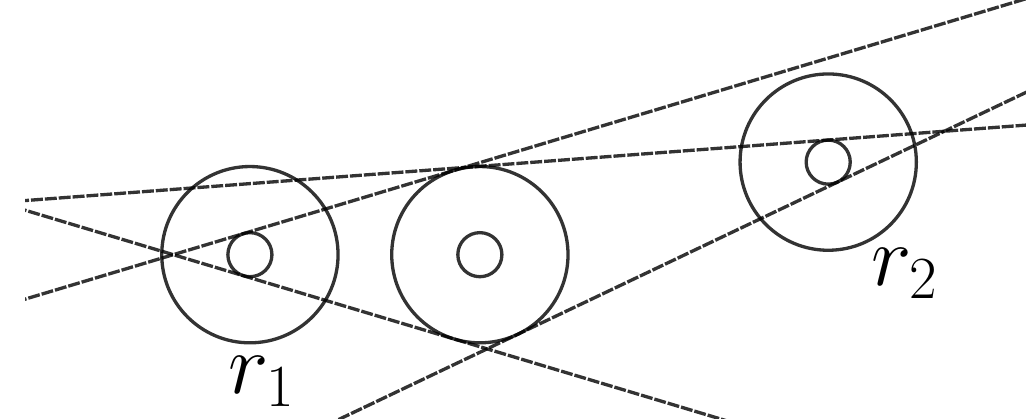}
         \caption{}
          \label{fig: asymm}         
     \end{subfigure}
\caption[Short Caption]{a) The robot $r_1$ `thinks' that it is on the convex hull as it lies on the convex hull of the robots visible to it, but it is actually an interior robot. b)-c) The robot $r$ cannot distinguish between a collinear and a non-collinear configuration. d) The robot $r_1$ is visible to $r_2$, but  $r_2$ is not visible to $r_1$.  }
%  \label{fig: turning}
\end{figure}

 The fundamental problem in the obstructed visibility scenario is the \textsc{Mutual Visibility} problem, where the robots are required to reach a configuration in which all robots can see each other. Although the problem has been extensively studied in the opaque point robot model and also in the traditional fat robot model, it is yet to be studied for opaque fat robots with slim omnidirectional camera. In fact, to the best of our knowledge, the only work which considered robots with slim omnidirectional camera is that of Honorat et al. \cite{tcs/HonoratPT14}, where  the gathering problem for four robots was studied.

%  Although this model addresses the impracticality of the physical aspects of the previous point robot model, the visibility assumptions are still not consistent with sensing device implementations found in real-life robots. Real omnidirectional cameras on mobile robots typically include a classical camera whose optical axis targets a spherical or conical mirror in order to get a 360 degrees panoramic view. Furthermore, these cameras preserve the \emph{single point of view} property, i.e., the omnidirectional vision that is obtained by the robot through the camera is as if a single point was used for viewing the environment \cite{tcs/HonoratPT14}. In order to implement the standard opaque fat robot model, the entire surface of a robot has to be covered with cameras, which are expensive and impractical. Motivated by this, Honorat et al. introduced \emph{opaque fat robots with slim omnidirectional camera} where the camera embedded on the robot is modeled as a disk whose diameter is strictly smaller than the diameter of the robot. The fundamental problem in the obstructed visibility scenario is the \textsc{Mutual Visibility} problem, where the robots are required to reach a configuration in which all robots can see each other. The \textsc{Mutual Visibility} problem has been extensively studied in the opaque point robot model and also in the traditional fat robot model. However, the problem is yet to be studied for opaque fat robots with slim omnidirectional cameras. This paper initiates the study of the \textsc{Mutual Visibility} problem in this model.

 \subsection{Our Contribution}
 
 In this paper, we initiate the study of the \textsc{Mutual Visibility} problem for opaque fat robots with slim omnidirectional camera. In particular, each robot is modeled as a disk of unit radius and its camera is also a disk having the same center but a smaller radius. We consider the \textit{luminous robots} model ($\mathcal{LUMI}$) \cite{flocchini2019distributed} in which each robot is equipped with a visible light that can assume a constant number of predefined colors. The lights serve as a medium of communication and also as a form of memory. We assume that the robots have two axis agreement, i.e., all robots agree on the direction and orientation of both $X$ and $Y$ axes of their local coordinate systems. The total number of robots in the swarm is not known beforehand.  However, an upper bound on the width of the initial configuration is known in advance. In this setting, we have presented a distributed algorithm for the \textsc{Mutual Visibility} problem using 7 colors. We have proved the correctness of our algorithm in the semi-synchronous ($\mathcal{SSYNC}$) setting. Our algorithm also provides a solution for the \textsc{Leader Election} problem, which we use as a subroutine in our main algorithm. Although \textsc{Leader Election} is trivial with two axis agreement in the full visibility model, it is challenging in our case and is of independent interest. We have provided extensive simulation studies assessing our algorithm with respect to different performance metrics. To set up the simulation environment, we had to solve the problem of determining whether a robot is visible to another in a configuration of robots with slim omnidirectional camera. The simulation framework can be useful for conducting experiments for future works in this model.

 \subsection{Related Works}

The study of the \textsc{Mutual Visibility} problem was initiated by Di Luna et al. \cite{di2014mutual} for oblivious robots in $\mathcal{SSYNC}$ for the opaque point robot model. They solved the problem by forming a convex $n$-gon, where $n$ is the total number of robots in the swarm. However, they assumed that $n$ is known beforehand. Later, Sharma et al. \cite{sharma2015bounds} analyzed and improved the round complexity of the algorithm in $\mathcal{FSYNC}$. With the assumption that the total number of robots is not known in advance, the problem was first studied by Di Luna et al. \cite{di2017mutual} in the luminous robots ($\mathcal{LUMI}$) model. In the rigid movement model, they solved the problem (a) with $2$ colors in $\mathcal{SSYNC}$, and (b) with $3$ colors in $\mathcal{ASYNC}$. In the non-rigid movement model, they solved the problem (a) with $3$ colors in $\mathcal{SSYNC}$  and (b) with $3$ colors in $\mathcal{ASYNC}$ under one axis agreement. Sharma et al. \cite{sharma2015mutual} later improved the algorithms in terms of the number of colors required. Then, a series of papers \cite{vaidyanathan2015logarithmic,sharma2016complete,sharma2017log} appeared, aiming towards reducing the time complexity of the algorithm. Bhagat et al. \cite{bhagat2017optimum} first solved the problem in $\mathcal{ASYNC}$ without any agreement on the coordinate axes. The problem was studied for robots with finite persistent memory and no communication capability in \cite{BhagatM19,Bhagat20}. The problem was also considered for faulty robots \cite{aljohani2018complete,POUDEL2021116} and robots operating on a grid-based terrain \cite{ADHIKARY2022,HectorVST22,SharmaVT20}. All the papers mentioned till now have considered the opaque point robot model. In the traditional opaque fat robot model, the problem was studied in \cite{sharmafat18,SharmaBM18}. In \cite{sharmafat18}, Sharma et al. proposed an algorithm that solved the problem in $\mathcal{FSYNC}$ using 9 colors. Then, in \cite{SharmaBM18}, Sharma et al. proposed an algorithm that solves the problem in $\mathcal{SSYNC}$. However, they assumed that the number of robots $n$ is known beforehand and no two robots visible to each other in the initial configuration are more than $n$ units apart. Agathangelou et al. \cite{DBLP:conf/podc/AgathangelouGM13} studied the gathering problem in the traditional opaque fat robot model in $\mathcal{ASYNC}$. The first phase of their gathering algorithm corresponds to forming a mutually visible configuration. They assumed that $n$ is known beforehand and the robots have an agreement on the clockwise-counterclockwise orientation. 

% The problem was studied for robots with finite persistent memory and no communication capability in \cite{BhagatM19,Bhagat20,BhagatCM19}.

\section{Model and Terminology}\label{mod}

\subsection{Model}

 \noindent\textbf{Robots.} The robots are assumed to be anonymous, autonomous, identical and homogeneous. Each robot is modeled as a disk of radius equal to 1 unit. The robots do not have access to any global coordinate system. Each robot perceives its surroundings based on its local coordinate system. We assume that their local coordinate systems agree on the direction and orientation of both axes. 
 
%  Since all robots have a radius equal to 1 unit, they have an agreement on a common measure of unit distance. 

  \noindent\textbf{Visibility.} Each robot is equipped with a slim omnidirectional camera. The embedded camera is modeled as a disk whose center coincides with the center of the robot. The diameter of the camera is strictly smaller than the diameter of the robot. For any robot $r$, $\mathcal{B}_{r}$ and $\mathcal{C}_{r}$ will respectively denote the disks representing the body and the camera of $r$. For a set of robots $\mathfrak{R} = \{r_i, \ldots, r_n \}$, $r_j$ will be visible to $r_i$ iff $\exists$ points $p_i$ and $p_j$ on the boundaries of $\mathcal{C}_{r_i}$ and $\mathcal{B}_{r_j}$ respectively such that the line segment joining $p_i$ and $p_j$ (not including the end-points) lies inside $\mathbb{R}^2 \setminus (\bigcup\limits_{k \neq i} \mathcal{B}_{r_k} \cup \mathcal{C}_{r_i})$. 
  
%   Also, if $r_i$ can see any portion of the boundary of $r_j$, then we assume that it can determine the position of the center of $r_j$. 

    \noindent\textbf{Lights.} Each robot is equipped with a visible light that can assume a constant number of pre-defined colors. The lights serve as a communication medium and also as a form of memory. In this paper, our algorithm uses 7 colors, namely \texttt{off}, \texttt{defeated}, \texttt{leader}, \texttt{subordinate}, \texttt{no space}, \texttt{expand} and \texttt{final}. Initially, all robots have their lights set to \texttt{off}.

    \noindent\textbf{Look-Compute-Move cycles.} Each robot, when active, operates in \textsc{Look-Compute-Move} cycles. In the \textsc{Look} phase, a robot obtains a snapshot of the positions and colors of all robots which are visible to it. In the \textsc{Compute} phase, based on the observed configuration, each robot computes a destination point and a color according to a deterministic distributed algorithm. The destination point may be its current position as well. Finally, in the \textsc{Move} phase, it sets its light to the decided color and moves towards the destination point (if the destination point is different from its current position). When a robot transitions from one LCM cycle to the next, all of its local memory is erased, except for the color of the light.

    \noindent \textbf{Scheduler.} We assume that the robots are controlled by a semi-synchronous ($\mathcal{SSYNC}$) scheduler. In this model, time is logically divided into global rounds. In each round, one or more robots are activated. Each robot is activated infinitely often. The active robots take their snapshots simultaneously and then execute their moves concurrently. As a consequence, no robot is observed while moving.

    \noindent \textbf{Movement.} We assume that the robots have \emph{non-rigid} movements. This means that a robot may stop before it reaches its destination. However, there is a fixed  $\delta > 0$ so that each robot traverses at least the distance $\delta$ unless its destination is closer than $\delta$. For simplicity, we assume that $\delta \geq 2$. This is a reasonable assumption because, in practice, a robot should be able to move a distance of at least the extent of its diameter in one go. Also, with some minor modifications, our algorithm can be easily adapted to work without this assumption. 
    
    % However, we assume that  $\delta \geq 2$. 
    
    % The existence of a fixed $\delta$ is necessary, because otherwise, a robot may stop after moving distances $\frac{1}{2}, \frac{1}{4}, \frac{1}{8}, \ldots$ and thus, not allowing any robot to traverse a distance of more than 1.

\subsection{Terminology}

We shall denote the set of robots by $\mathfrak{R} = \{r_1, r_2, \dots, r_n\}$, $n \geq 3$. The position of a robot will be specified by the position of its center. So when we say that a robot is at a point $p$ on the plane, we shall mean that its center is at $p$. Also, when we say that the distance between two robots is $d$, we shall mean that the distance between their centers is $d$. With respect to the local coordinate systems of the robots, positive and negative directions of the $X$-axis will be referred to as \emph{east} and \emph{west} respectively, and the positive and negative directions of the $Y$-axis will be referred to as \emph{north} and \emph{south} respectively. Since the robots have two axis agreement, they agree on east, west, north and south. The orientation of the $X$-axis will be called horizontal, and the orientation of the $Y$-axis will be called vertical.

\section{Technical Difficulties and an Overview of Our Strategy}

In this section, we highlight some key difficulties of the problem in the current model and give an overview of our algorithm. A more formal description of the algorithm is provided in Section \ref{section 5}.

  A common approach to solve the \textsc{Mutual Visibility} problem is to bring all the robots on the convex hull of the configuration. The robots on the convex hull set their lights to some particular color. This helps the interior robots to identify the convex hull. The interior robots then move to the convex hull. The main difficulty towards employing this strategy in our setting is that it is not always possible for a robot to correctly determine whether it is on the convex hull or not (see Fig. \ref{fig: false_hull}). A robot can only be sure of being on the convex hull if it has $\geq 180^{\degree}$ unobstructed view. However, a robot on the convex hull may not necessarily have  $\geq 180^{\degree}$ unobstructed view. In this case, we may ask such a robot (which `thinks' that it may be on the convex hull but is not sure about it) to move outward so that it may get an obstructed view. However, the robots obstructing its view may also do the same and create obstruction in the resulting configuration as well. Also, since the convex hull may change due to the movements, a robot that was sure of being on the convex hull in the previous configuration and had changed its color accordingly may become an interior robot in the new configuration.

  Since the strategy of bringing the interior robots on a convex hull appears to be difficult to implement in this model, we try a different approach. Instead of trying to bring the interior robots on to the convex hull, we will sequentially build the convex hull anew. For this, we shall first elect a leader. It may appear that leader election is trivial with two axis agreement, since we can elect the leader as the southmost robot, and in the case of a tie, the eastmost robot among the southmost robots. However, we again face the problem of mistaken identity: a robot that is not southmost may not find any robot on its south (due to visibility obstructions) and thus wrongly infer that it is the southmost. We shall call such a robot a \textit{false southmost robot} (see Fig. \ref{fig: false}). So we see that the current visibility model prevents a straightforward solution to leader election by exploiting the two axis agreement assumption. We now give a brief description of how we solve the problem. When a robot finds (according to its local view) that it is southmost, it will move southwards in order to get an unobstructed view and determine whether it is actually southmost or not. If a robot finds that it is jointly southmost, it will only move if it does not see any robot to its east on the same horizontal line. Any other robot will change its color to \texttt{defeated}. We say that a robot is \textit{sure of being southmost} if it has an unobstructed view of the south of the plane. This happens when all other robots are $\geq (1-c)$ units to its north. We can show that eventually, there will be a robot that is sure of being the southmost. Now, if we ask this robot to become the leader by changing its color to \texttt{leader}, then the problem is that it may be overtaken by other robots which are also moving southwards (upon wrongly thinking of being southmost). Thus, we may have multiple robots with color \texttt{leader}. However, it can be shown that if a robot is able to reach at least $\frac{D}{\sqrt{3}}$ units ($D$ is an upper bound on the horizontal width of the initial configuration) to the south of the rest of the swarm, then there will be no subsequent movements by false southmost robots. To carry out the next stage of our mutual visibility algorithm, we also need an empty space of vertical width of at least 10 units. So we ask a robot to become the leader only when it finds that it is $\geq$ max$\{10, \frac{D}{\sqrt{3}}\}$ units south of the rest of the swarm.

%   So instead, we shall ask it to change its color to \texttt{candidate} and start moving southwards. Then one such candidate robot will be finally become the leader. Also, the southward movements of the candidate robots will ensure that all other robots will be to the north of the leader.

%   
%   We give a brief overview of how we solve leader election. When a robot finds (according to its local view) that it is westmost, it will move westwards in order to get an unobstructed view and determine whether it is actually westmost or not. If there are multiple westmost  We can show that eventually one will be sure that it is westmost. Now if we ask this robot to become leader by changing its color to \texttt{leader}, then the problem is that it may be overtaken by other robots which are also moving westwards (upon wrongly thinking to be westmost). Thus we may have multiple robots with color \texttt{leader}. So instead, when a robot finds itself to be westmost, it will change its color to \texttt{candidate} and start moving southwards.  

  \begin{figure}[thb!]
     \centering
    %  \begin{subfigure}[b]{0.6\linewidth}
    %      \centering
    %      \includegraphics[width=\linewidth]{fig8.pdf}
    %      \caption{}
    %      \label{fig: false_hull}
    %  \end{subfigure}
    %  \hfill
    %  \begin{subfigure}[b]{0.6\linewidth}
    %      \centering
         \includegraphics[width=0.6\linewidth]{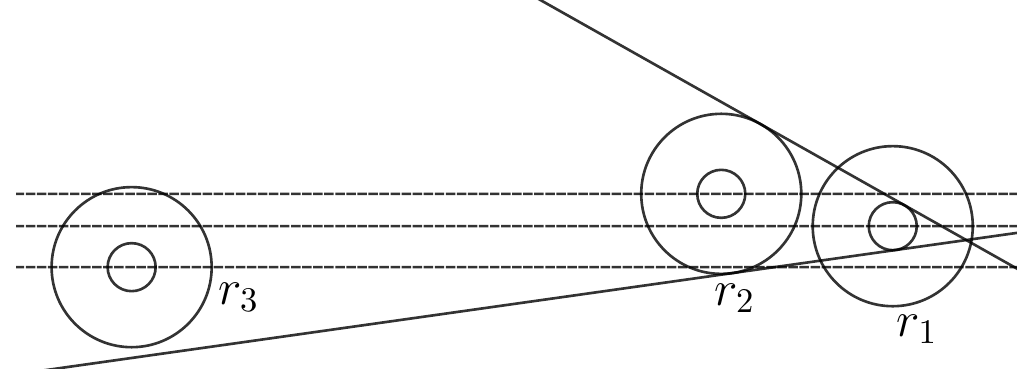}
         
         \caption{ The robot $r_1$ is a false southmost robot.}
          \label{fig: false}         
%      \end{subfigure}
% \caption[Short Caption]{}
% \label{fig: deadlock}
\end{figure}

 Once the leader is elected, we enter the second stage of our algorithm. Here, the other robots will build a mutually visible structure with the leader. In particular, the mutually visible structure will be a chain, as shown in Fig. \ref{fig12}, where the leader is at the southmost point, called the \emph{tip} of the chain. We want the chain to be \emph{regular} in the sense that all the sides and the  angles are equal. The length of each side is called the \emph{stretch} of the chain, and the value of each exterior angle is called the \emph{turning angle} of the chain. Notice that, as shown in Fig. \ref{fig: turning}, the turning angle of the chain needs to be large enough in order to have mutual visibility. However, if the turning angle is too large, we may not be able to accommodate all the robots in the chain. We can prove that the minimum turning angle required for mutual visibility decreases if we increase the stretch. Therefore, depending on the total number of robots in the team, the stretch and the turning angle of the chain have to be adjusted so that all the robots can be accommodated within the chain while maintaining mutual visibility. The problem is that the size of the team is not known beforehand. So, the robots will start building the chain with some predefined value of stretch and turning angle, and then if they find out that the chain cannot accommodate all the robots, it will be rebuilt with a larger stretch and a smaller turning angle.

 \begin{figure}[thb!]
     \centering
     \begin{subfigure}[b]{0.46\linewidth}
         \centering
         \includegraphics[width=\linewidth]{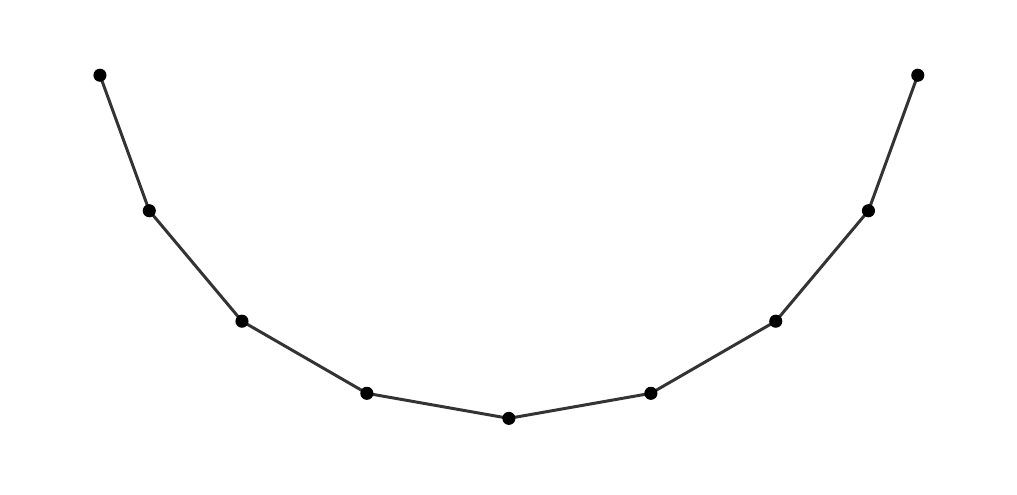}
         \caption{}
          \label{fig12}
     \end{subfigure}
     \hfill
     \begin{subfigure}[b]{0.46\linewidth}
         \centering
         \includegraphics[width=\linewidth]{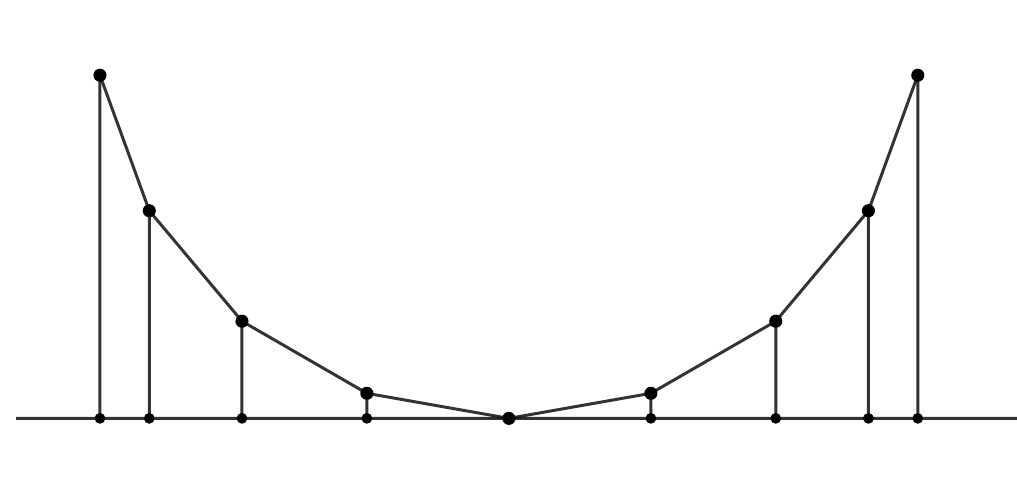}
         \caption{}
          \label{fig13}         
     \end{subfigure}
\caption[Short Caption]{a) The mutually visible chain. b) The corresponding base chain.}
%  \label{fig: turning}
\end{figure} 

 Let us now describe the process of building the chain. As mentioned earlier, the robots will start building a chain with some predefined value of stretch and turning angle. Take the projection of the chain on the horizontal line through the leader, as shown in Fig. \ref{fig13}. We call this projection the \emph{base chain}. Recall that when the leader is set, all other robots are to the north of the leader. These robots will then sequentially come and place themselves on the base chain. The first robot will place itself on the east of the leader, the second robot will place itself on the west of the leader, the next one will go to the east, and so on. Notice that the distance between two consecutive points on the base chain gradually decreases. When this distance becomes less than 2 units, a new robot cannot be accommodated. When this happens, the next robot will set its light to \texttt{no space}. Then a chain with a larger stretch and a smaller turning angle will be chosen and the robots will reposition themselves on the corresponding base chain. Finally, we will obtain a base chain where all robots can be accommodated. Then, the robots will move vertically to form the corresponding chain.

 \begin{figure}[thb!]
     \centering
     \begin{subfigure}[b]{0.46\linewidth}
         \centering
         \includegraphics[width=\linewidth]{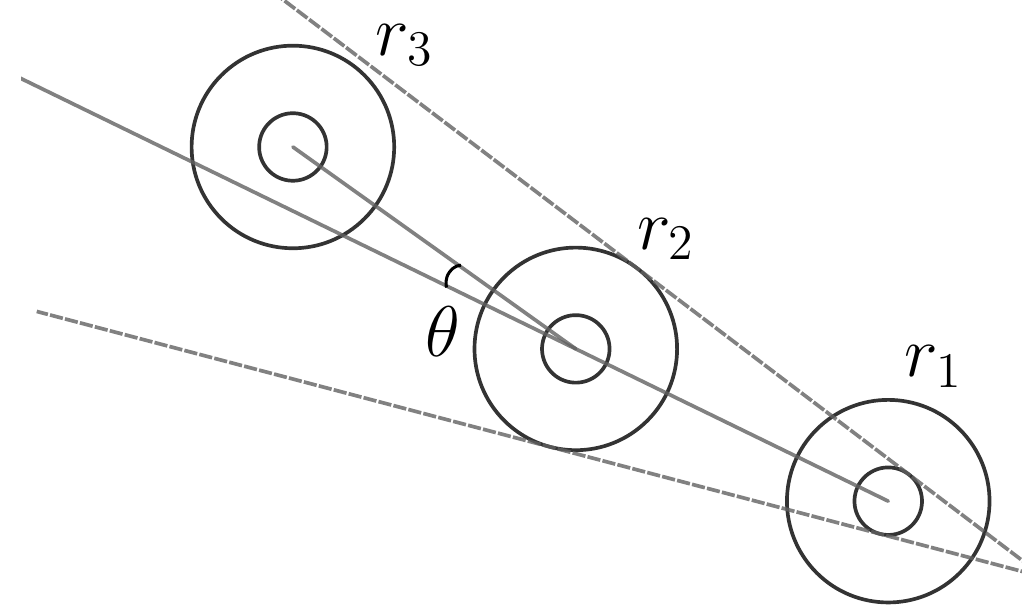}
         \caption{}
%          \label{fig8}
     \end{subfigure}
     \hfill
     \begin{subfigure}[b]{0.46\linewidth}
         \centering
         \includegraphics[width=\linewidth]{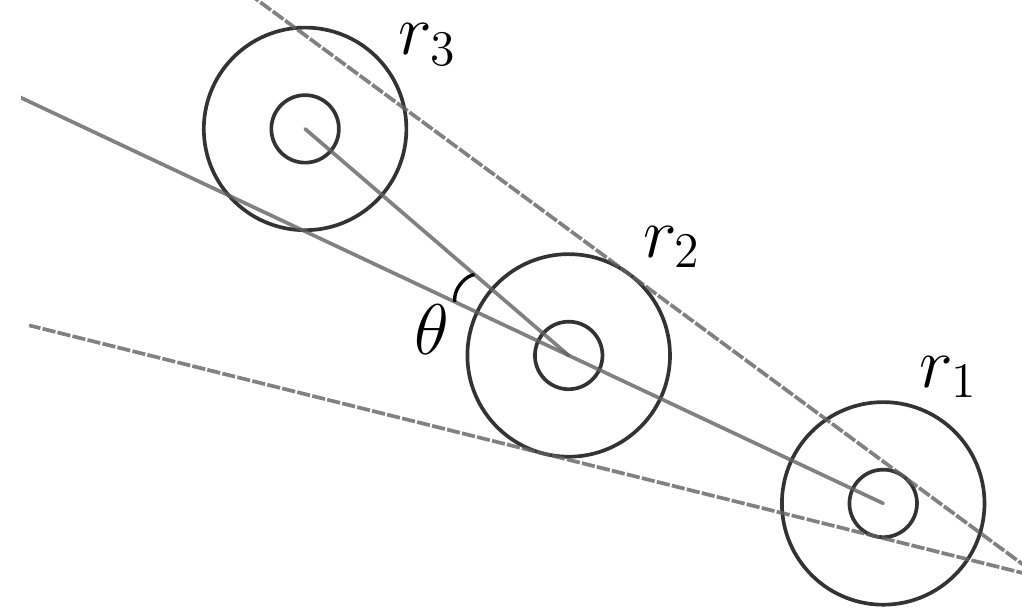}
         \caption{}
%          \label{fig9}         
     \end{subfigure}
\caption[Short Caption]{a) The robot $r_3$ is not visible to $r_1$. b)  The robot $r_3$ is  visible to $r_1$.}
 \label{fig: turning}
\end{figure}

\section{Basic Properties of the Mutually Visible Chain}

In this section, we prove some basic properties of the mutually visible chain.

\begin{lemma}\label{lem: 3rob}
 Let $r_1, r_2$ and $r_3$ be three robots located at three consecutive points of a chain of stretch $d$ and turning angle $\theta$. The robots $r_1$ and $r_3$ will be visible to each other if and only if  $\sin \theta > \frac{1-c}{d}$.
\end{lemma}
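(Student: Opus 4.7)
The plan is to recast visibility as an analytic feasibility problem in one line-parameter and to solve it explicitly. I would place $r_2$ at the origin and, exploiting the reflective symmetry of the isoceles triangle $r_1 r_2 r_3$ (whose interior angle at $r_2$ equals $\pi - \theta$), set $r_1 = (-d\sin(\theta/2), -d\cos(\theta/2))$ and $r_3 = (-d\sin(\theta/2), d\cos(\theta/2))$. Since the body and camera disks are closed sets, the definition of visibility rephrases as follows: $r_3$ is visible to $r_1$ iff there exists a straight line $\ell$ with $\mathrm{dist}(r_1,\ell) \leq c$, $\mathrm{dist}(r_3,\ell) \leq 1$, and $\mathrm{dist}(r_2,\ell) > 1$. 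The reflection about the $x$-axis maps this condition onto the analogous one for $r_1$ visible to $r_3$, so mutual visibility reduces to finding one such $\ell$.

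Next, I would parameterize $\ell$ by the angle $\phi$ of its unit normal $(\cos\phi, \sin\phi)$ and a signed distance $-q$ from $r_2$, so that $\mathrm{dist}(r_2,\ell) = q$. A short computation of the dot products of the normal with $r_1$ and $r_3$ yields $\mathrm{dist}(r_1,\ell) = |d\sin(\phi+\theta/2) - q|$ and $\mathrm{dist}(r_3,\ell) = |d\sin(\theta/2 - \phi) - q|$. For a fixed $\phi$, the largest admissible $q$ is $q_{\max}(\phi) = \min\{\,d\sin(\phi+\theta/2) + c,\; d\sin(\theta/2 - \phi) + 1\,\}$; the first argument is increasing in $\phi$ and the second decreasing, so $q_{\max}$ is maximized at the balance point $d\sin(\phi^*+\theta/2) + c = d\sin(\theta/2-\phi^*) + 1$. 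A sum-to-product simplification then gives $\sin\phi^* = (1-c)/(2d\cos(\theta/2))$ and $q^* = d\sin(\theta/2)\cos\phi^* + (1+c)/2$.

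Visibility holds iff $q^* > 1$, i.e., $d\sin(\theta/2)\cos\phi^* > (1-c)/2$. Squaring, substituting the value of $\sin^2\phi^*$, and simplifying with $2\sin(\theta/2)\cos(\theta/2) = \sin\theta$ and $\sin^2(\theta/2) + \cos^2(\theta/2) = 1$, this inequality collapses to $d^2\sin^2\theta > (1-c)^2$, which is exactly $\sin\theta > (1-c)/d$. The main obstacle I anticipate is verifying that the algebraic optimum $\phi^*$ lies in the admissible range: one needs $\phi^* \leq \theta/2$ so that the signs inside the absolute values are resolved correctly, and a direct check shows this side condition is itself equivalent to $\sin\theta \geq (1-c)/d$, so the two regimes glue cleanly and yield both directions of the ``iff''. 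A cleaner geometric reading is that the extremal $\ell$ is the common external tangent to $\mathcal{C}_{r_1}$ and $\mathcal{B}_{r_2}$ on the $r_3$-side, making angle $\arcsin((1-c)/d)$ with the segment $r_1 r_2$; the visibility condition is precisely that this tangent reaches $\mathcal{B}_{r_3}$, which by elementary triangle geometry reduces to $\sin\theta > (1-c)/d$.
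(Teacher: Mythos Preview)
Your proof is correct, and the analytic optimization is carried out cleanly; the balance-point computation and the subsequent reduction to $d^2\sin^2\theta>(1-c)^2$ check out, and your remark about the side condition $\phi^*\le\theta/2$ being equivalent to the visibility inequality itself is the right way to close the case split.

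The paper, however, goes straight to the geometric picture you only mention at the end. It simply declares the borderline configuration to be the one in which $\mathcal{C}_{r_1}$, $\mathcal{B}_{r_2}$ and $\mathcal{B}_{r_3}$ share a common tangent, places that tangent as the horizontal line $Y=1$ with $r_2$ at the origin, and reads off that the center of $r_1$ has height $1-c$ on the one hand and $d\sin\theta$ on the other, giving $\sin\theta=(1-c)/d$ at criticality. The ``if and only if'' is then asserted as evident from the tangent picture. So where you parameterize all candidate lines and optimize, the paper jumps directly to the extremal line and computes a single coordinate. What your route buys is an explicit, self-contained derivation of both directions of the equivalence without appealing to a figure or to the phrase ``it is easy to see''; what the paper's route buys is brevity --- three lines instead of a page of trigonometry. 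Your closing paragraph (the common external tangent to $\mathcal{C}_{r_1}$ and $\mathcal{B}_{r_2}$, making angle $\arcsin((1-c)/d)$ with $r_1r_2$) \emph{is} the paper's proof, so you have in effect written both arguments.
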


\begin{proof}

 Let $A, B$ and $C$ denote the centers of $r_1, r_2$ and $r_3$ respectively. We have $d(A, B) = d(B, C) = d$. Assume that the turning angle $\theta$ be such that the three circles $\mathcal{C}_{r_1}, \mathcal{B}_{r_2}$ and $\mathcal{B}_{r_3}$ share a tangent as shown in Fig. \ref{fig14}. We shall show that $sin\theta = \frac{1-c}{d}$. It is easy to see that our claim follows from this.

Let us draw Fig. \ref{fig14} in a slightly different way so that the common tangent is horizontal. In particular, fix a coordinate system with the origin at $B$ and $BC$ as the negative direction of the $X$ axis (see Fig. \ref{fig15}). So the equation of the common tangent is $Y = 1$. Then clearly, the $Y$-coordinate of $A$ is $1-c$. Also, since $|AB| = d$, we can say that the $Y$-coordinate of $A$ is $d\sin\theta$. Hence, $d\sin\theta=1-c$ $\implies \sin\theta = \frac{1-c}{d}$. 

\end{proof}

 \begin{figure}[thb!]
     \centering
     \begin{subfigure}[b]{0.35\linewidth}
         \centering
         \includegraphics[width=\linewidth]{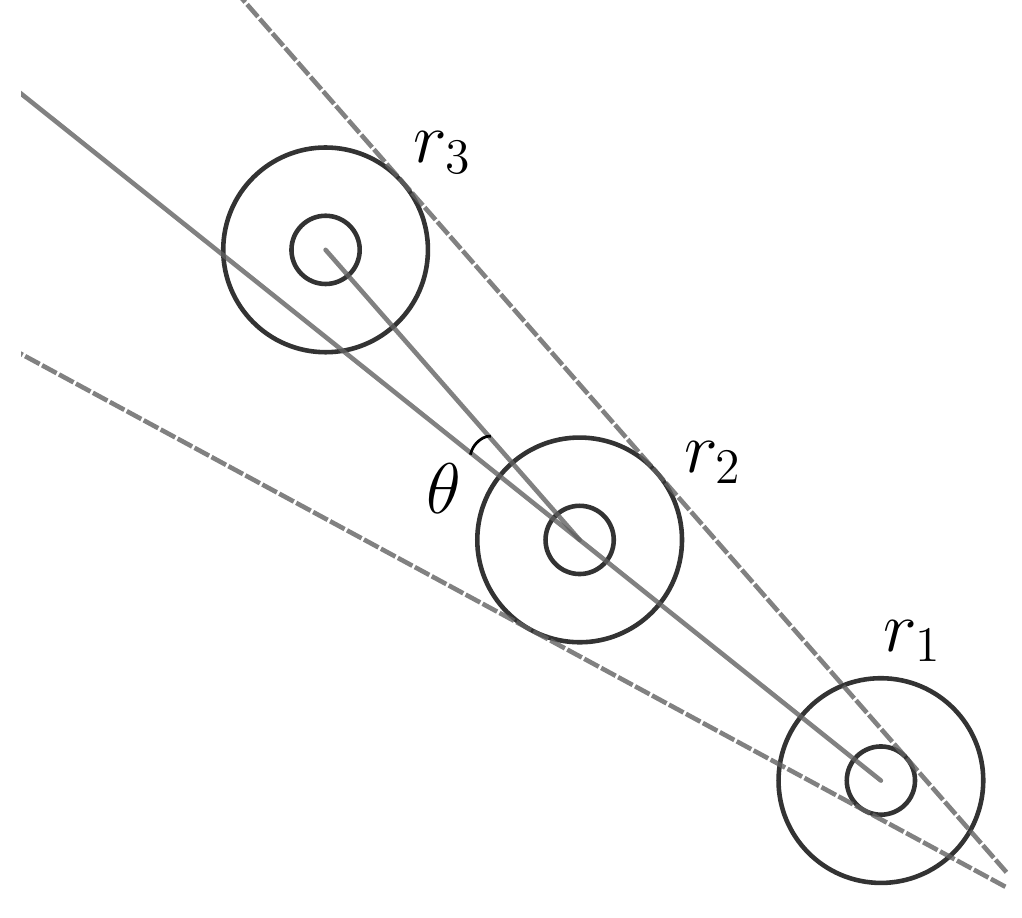}
         \caption{}
          \label{fig14}
     \end{subfigure}
     \hfill
     \begin{subfigure}[b]{0.58\linewidth}
         \centering
         \includegraphics[width=\linewidth]{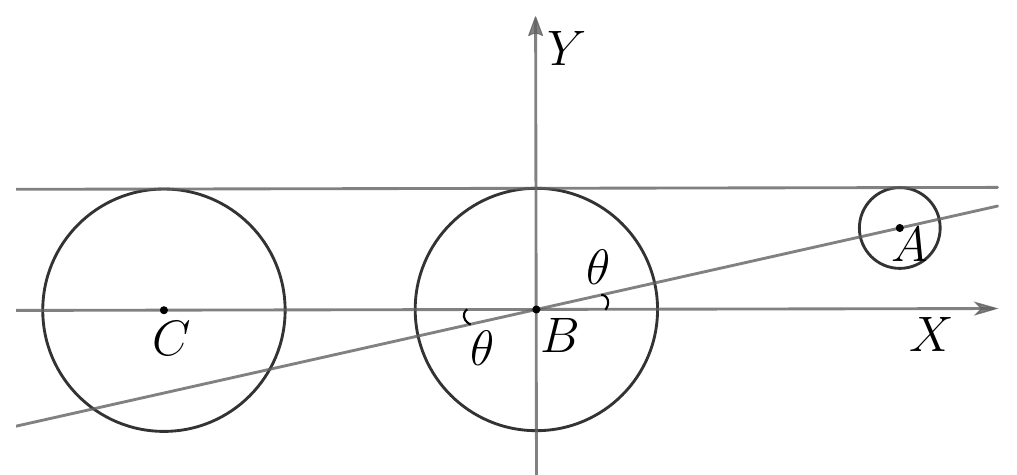}
         \caption{}
          \label{fig15}         
     \end{subfigure}
\caption[Short Caption]{Illustrations supporting the proof of Lemma \ref{lem: 3rob}.}
%  \label{fig: turning}
\end{figure}

\begin{lemma}\label{lem: chain vis}

 Consider a chain of robots having stretch $d$ and turning angle $\theta$. If $\sin \theta > \frac{1-c}{d}$, then any two robots on the chain are visible to each other. 
\end{lemma}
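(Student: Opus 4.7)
I plan to prove the statement by strong induction on $m := j - i$, where $r_i$ and $r_j$ are the two robots whose mutual visibility we wish to establish. The base cases are immediate: for $m=1$, consecutive robots on the chain are at distance $d \geq 2$ apart, so their bodies are disjoint and any segment joining nearby points of $\mathcal{C}_{r_i}$ and $\mathcal{B}_{r_{i+1}}$ is an unobstructed line of sight; for $m = 2$, the claim is exactly Lemma \ref{lem: 3rob}.

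For the inductive step with $m \geq 3$, my strategy is to exploit the regular structure of the chain: since every turning angle equals $\theta$ and every stretch equals $d$, all robot centers $C_i, C_{i+1}, \ldots, C_j$ lie on a common circumscribing circle of radius $R = d / (2\sin(\theta/2))$, and the sub-chain forms a convex polygonal arc. I would set up coordinates so that the chord $C_iC_j$ is horizontal and the intermediate centers $C_{i+1},\ldots,C_{j-1}$ lie above the chord. The candidate line of sight is the common external tangent $\ell$ to $\mathcal{C}_{r_i}$ and $\mathcal{B}_{r_j}$ on the lower side of the chord, i.e., opposite the arc. I then need to verify that $\ell$ passes outside every intermediate body $\mathcal{B}_{r_k}$, $i < k < j$.

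The key geometric observation is a monotonicity principle: along the convex regular arc, the perpendicular distance from an intermediate center $C_k$ to the chord $C_iC_j$ is a strictly concave function of $k$, maximized at the midpoint of the arc and minimized at the endpoints $k = i+1$ and $k = j-1$. Consequently, among all intermediate bodies, the ones whose lower extents come closest to the tangent $\ell$ -- and hence the worst potential obstructions -- are precisely $\mathcal{B}_{r_{i+1}}$ and $\mathcal{B}_{r_{j-1}}$. By the symmetry of the chain about the perpendicular bisector of $C_iC_j$, it suffices to verify non-obstruction at $\mathcal{B}_{r_{i+1}}$. A direct coordinate computation (mirroring the one in the proof of Lemma \ref{lem: 3rob}, where the analogous tangent was shown to just miss $\mathcal{B}_{r_2}$) reduces this verification to the inequality $\sin\theta > \frac{1-c}{d}$, which is exactly the hypothesis of the lemma.

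The main obstacle I anticipate is making the monotonicity claim fully rigorous and carefully confirming that the external tangent $\ell$ really does clear the nearest-neighbor body $\mathcal{B}_{r_{i+1}}$; this boils down to a short trigonometric calculation using the circumscribing-circle parameterization together with the fact that $c < 1$ tilts $\ell$ only slightly relative to the chord. Once these geometric details are nailed down, the general case collapses onto the three-robot situation already handled by Lemma \ref{lem: 3rob}.
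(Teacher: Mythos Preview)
Your overall approach is sound and takes a genuinely different route from the paper. The paper argues by induction on the number $i$ of intermediate robots: the hypothesis gives that $r$ sees $r_i$, a tangent-cone argument shows that among $r_1,\dots,r_i$ only $r_i$ can still block $r'$ from $r$, and then one invokes a slight extension of Lemma~\ref{lem: 3rob}---the three-robot criterion continues to hold when the far leg $d(A,B)$ is increased beyond $d$, because the obstructed cone only shrinks. You instead treat all potential obstructions at once: place the centers on their common circumscribing circle, take the outer common tangent $\ell$ to $\mathcal{C}_{r_i}$ and $\mathcal{B}_{r_j}$, and use concavity of the height function to locate the worst obstruction at an endpoint neighbor. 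Both routes ultimately lean on the same extended three-robot estimate; yours is more directly geometric and essentially non-inductive, while the paper's peels off obstructions one at a time.

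Two points need tightening. First, your symmetry reduction to $r_{i+1}$ alone is not correct as stated: the chain of centers is indeed symmetric about the perpendicular bisector of $C_iC_j$, but $\ell$ is not, since it is tangent to a disk of radius $c$ at one end and radius $1$ at the other. You must either check both $r_{i+1}$ and $r_{j-1}$, or argue directly that the camera-side neighbor $r_{i+1}$ gives the tighter constraint (it does: the signed distance from a point on the chord to $\ell$ interpolates linearly between $c$ at $C_i$ and $1$ at $C_j$, so the smaller value occurs on the $C_i$ side). Second, your monotonicity claim concerns distance to the \emph{chord}, but what you actually need is distance to $\ell$; since these differ by an affine function of the horizontal coordinate, the latter is still concave in $k$ and your conclusion survives---but you should say so explicitly rather than conflate the two.
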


\begin{proof}

Let $r$ and $r'$ be any two robots on the chain. We shall show that $r$ can see $r'$. From symmetry, it will imply that $r'$ is also able to see $r$. Suppose there are $i$ robots between $r$ and $r'$ on the chain. We prove the result by using induction on $i$. For $i=0$, there is nothing to prove. For $i=1$, the result follows from Lemma \ref{lem: 3rob}. So assume that $i>1$. Assume that the result is true for $i-1$. Suppose that the robots from $r$ to $r'$ are arranged on the chain as $r, r_1, \ldots, r_i, r'$. It follows from the induction hypothesis that $r_i$ is visible to $r$. Consider the region between $l_1$ and $l_i$ where $l_1$ is a tangent on $\mathcal{C}_{r}$, $\mathcal{B}_{r_1}$ and $l_i$ is a tangent on $\mathcal{C}_{r}$, $\mathcal{B}_{r_i}$ (see Fig. \ref{fig16}). If there is any portion of $r'$ outside this region, then it must be visible to $r$. Notice that it is sufficient to show that $r'$ is not obstructed by $r_i$. In other words, it is sufficient to only consider $r$, $r_i$ and $r'$. 

If we have three robots at $A, B, C$ such that $d(A,B) = d(B,C) = d$ and $\angle ABC < \pi - \sin^{-1}(\frac{1-c}{d})$, then by Lemma  \ref{lem: 3rob}, the robot at $A$ can see the robot at $C$. Notice that the same is true if we have $d(A,B) > d(B,C) = d$. This is because the obstructed region of the second case is contained inside the obstructed region of the first case (see Fig. \ref{fig17}). Notice that this holds for the robots $r$, $r_i$ and $r'$. Hence, $r$ is able to see $r'$.

\end{proof}

 \begin{figure}[thb!]
     \centering
     \begin{subfigure}[b]{0.5\linewidth}
         \centering
         \includegraphics[width=\linewidth]{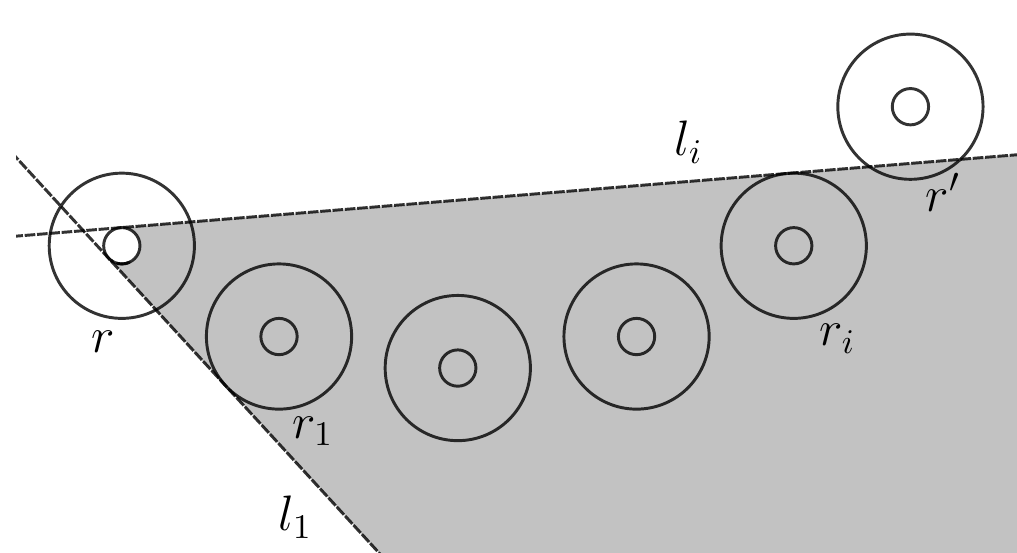}
         \caption{}
          \label{fig16}
     \end{subfigure}
     \hfill
     \begin{subfigure}[b]{0.46\linewidth}
         \centering
         \includegraphics[width=\linewidth]{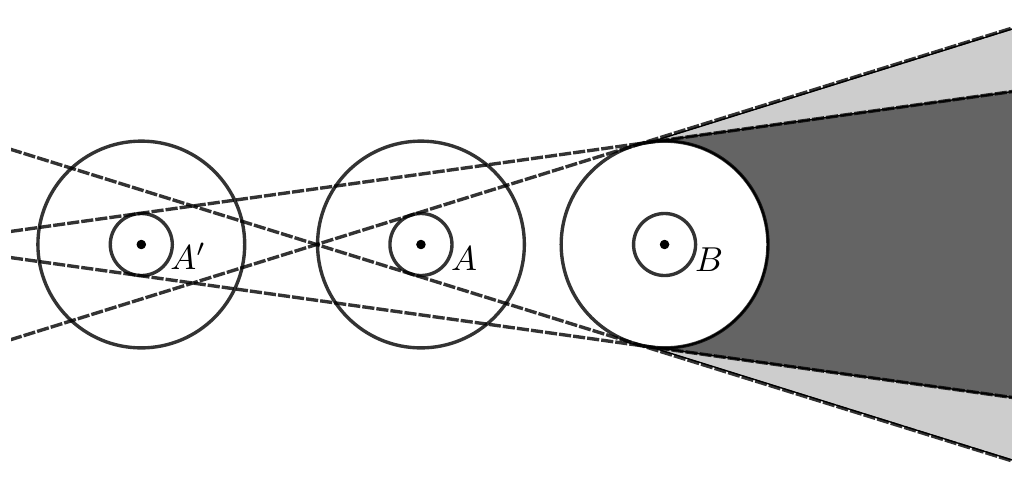}
         \caption{}
          \label{fig17}         
     \end{subfigure}
\caption[Short Caption]{Illustrations supporting the proof of Lemma \ref{lem: chain vis}.}
%  \label{fig: turning}
\end{figure}

In our algorithm, we shall set the stretch $d$ and turning angle $\theta$ of the chain so that $\sin\theta = \frac{1}{d} > \frac{1-c}{d}$. This will imply that the final configuration is a mutually visible configuration.

\begin{lemma} \label{lemma3}
 Consider a chain of stretch $d$ and turning angle $\theta$ such that $\sin\theta = \frac{1}{d}$. Let $\sigma$ denote the distance between the tip of the chain and the nearest point of the base chain. Then $d = \frac{2\sigma^2}{\sqrt{4\sigma^2 - 1}}$.
\end{lemma}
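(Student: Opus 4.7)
My plan is to determine the angle at which the chain edge incident to the tip leaves the tip, and then reduce the statement to an elementary trigonometric identity. Because the chain is regular with stretch $d$ and turning angle $\theta$, the interior angle at every vertex equals $\pi - \theta$. At the tip, the two incident edges are (by the east--west symmetric construction of the chain described earlier) mirror images of one another across the vertical line through the tip, so each of them makes angle $(\pi - \theta)/2$ with that vertical axis, equivalently angle $\theta/2$ with the horizontal.

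From this, the horizontal offset from the tip to the chain vertex adjacent to it equals $d\cos(\theta/2)$. Since the base chain is symmetric about the tip and its vertices are arranged in increasing horizontal distance on the two sides, this adjacent vertex projects to the point of the base chain nearest the tip, giving
\[
\sigma \;=\; d\cos(\theta/2).
\]

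Now I invoke the hypothesis $\sin\theta = 1/d$ together with the double-angle identity $\sin\theta = 2\sin(\theta/2)\cos(\theta/2)$. Writing $\cos(\theta/2) = \sigma/d$ and $\sin(\theta/2) = \sqrt{1 - \sigma^2/d^2}$, the identity becomes $(2\sigma/d)\sqrt{1 - \sigma^2/d^2} = 1/d$, which rearranges to $2\sigma\sqrt{d^2 - \sigma^2} = d$. Squaring and collecting terms yields $d^2(4\sigma^2 - 1) = 4\sigma^4$, whence $d = 2\sigma^2/\sqrt{4\sigma^2 - 1}$, as required.

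The only step that is not purely mechanical is pinning down the geometric relation $\sigma = d\cos(\theta/2)$. I expect the cleanest way to do this is to place the tip at the origin with the vertical line through it as the axis of symmetry, apply the interior-angle condition $\pi - \theta$ at the tip to locate the adjacent vertex, and read off its horizontal coordinate; after that, the remainder is a routine trigonometric computation with the half-angle substitution.
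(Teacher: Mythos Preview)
Your proof is correct and follows essentially the same route as the paper: both start from the key geometric identity $\sigma = d\cos(\theta/2)$, combine it with $\sin\theta = 1/d$ via the double-angle formula, and finish with the Pythagorean identity to solve for $d$. The only cosmetic difference is that the paper first isolates $\sin(\theta/2) = 1/(2\sigma)$ and then applies $\sin^2 + \cos^2 = 1$, whereas you substitute $\sin(\theta/2) = \sqrt{1-\sigma^2/d^2}$ directly; your added geometric justification for $\sigma = d\cos(\theta/2)$ is actually more explicit than the paper's, which simply asserts it.
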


\begin{proof}
 Note that $\sigma = d \cos\frac{\theta}{2}$. We also have $\sin\theta = \frac{1}{d}$ $\implies 2\sin\frac{\theta}{2}\cos\frac{\theta}{2} = \frac{1}{d}$ $\implies \sin\frac{\theta}{2} = \frac{1}{2\sigma}$. Now, $\sin^2\frac{\theta}{2} + \cos^2\frac{\theta}{2} = 1$ $\implies \frac{\sigma^2}{d^2} + \frac{1}{4\sigma^2} = 1$ $\implies d = \frac{2\sigma^2}{\sqrt{4\sigma^2 - 1}}$. 
\end{proof}

\section{Algorithm} \label{section 5}

% We shall say that a robot \textit{thinks that it is westmost}, if it does not see any robot on its west. Of course, such a robot may not be actually westmost. If a robot thinks that it is westmost, but it is actually not westmost, then we call it a \emph{false westmost robot}. Consider a robot $r$ that thinks that it is westmost. Let $L$ be the vertical line that is $c$ units to the west of $r$, i.e., $L$ is the vertical tangent touching the west side of $\mathcal{C}_r$. If $r$ can see all points on the west of $L$, then $r$ is indeed westmost. In this case, we say that $r$ \textit{is sure that it is westmost}. If $r$ thinks that it is westmost, but is not sure, then a part of the plane on the west of $L$ is not visibility to $r$. In such case there can be two types of obstructions: 1) $r$ is obstructed by a robot on $L_V(r)$, 2) $r$ is obstructed by a robot on the east of $r$. In the second case, the robot must be $< (1-c)$ units to the east of $r$. 

In this section, we shall describe our main algorithm. A set of $n \geq 3$ robots are arbitrarily placed on the plane. Initially, the lights of all robots are set to \texttt{off}. The size of the swarm, $n$, is not known to the robots. However, an upper bound $D$ on the horizontal width of the initial configuration is known in advance. This means that the horizontal distance between no two robots in the initial configuration is more than $D$.    

The execution of our algorithm can be divided into two stages: first, a leader is elected, and then the mutually visible chain is built. The execution from the beginning up till a leader is elected will be called Stage 1. The rest will be called Stage 2. A robot, however, may not be aware of the current stage of the process due to its incomplete view of the configuration. Our algorithm uses 7 colors, namely \texttt{off}, \texttt{defeated}, \texttt{leader}, \texttt{subordinate}, \texttt{no space}, \texttt{expand} and \texttt{final}. The last 5 colors can be found only in Stage 2. So we call these colors as \emph{Stage 2 colors}.

\subsection{Leader Election}

We shall first describe the process of electing the leader. We want to have a unique robot having its light set to the color \texttt{leader}. We also want to maintain a distance between this robot and the rest of the swarm. This space will be used in the next stage for building the base chain. 

Our plan is to have a single southmost robot in the configuration and elect it as the leader. For this, we need to have a robot that is sure of being the southmost. Recall that initially, all robots have color \texttt{off}. If such a robot can see a robot on its south, then it will change its color to \texttt{defeated}. It will also do the same if it finds a robot on the same horizontal line on its east. Robots with color \texttt{defeated} will not further take part in the leader election procedure. Now consider the case where a robot with color \texttt{off} does not see any robot to its south and also does not find any robot on the same horizontal line on its east. If it is not sure of being southmost, then it will move 2 units to the south. It can be shown that these movements will ensure that we have a robot that is sure of being the southmost. Now, we shall not ask this robot to become the leader immediately by changing its color to \texttt{leader}. We want to have the leader at least 10 units to the south of the rest of the swarm. As mentioned earlier, we need this space for the execution of Stage 2. So the robot may need to move further southwards so that it is $\geq 10$ units to the south of the rest. However, even after $\geq 10$ units of separation is achieved, there might still exist false southmost robots in the swarm that will move southward. We do not want such movements to occur after the leader is elected. It can be shown that if we can achieve $\geq \frac{D}{\sqrt{3}}$ units of separation, then there will not be any such unwanted movements in the future. So we shall ask the robot to move southwards until it is $\geq$ max $\{10, \frac{D}{\sqrt{3}}\}$ units to the south of the rest. When this separation is attained, the robot will change its color to \texttt{leader}.

   \subsection{Formation of the Mutually Visible Chain}\label{sec: chain formation}

   Let $r_L$ denote the elected leader. The leader $r_L$ will not make any moves in the remainder of the algorithm. Now we can fix a coordinate system with the origin at the center of $r_L$, the positive direction of the $X$-axis along the east and the positive direction of the $Y$-axis along the north. Any robot that can see the leader $r_L$ will agree with this coordinate system. For any number $d$, $L_d$ will denote the horizontal line $Y = d$.     
   \begin{figure}[thb!]
     \centering
     \begin{subfigure}[b]{0.24\linewidth}
         \centering
         \includegraphics[width=\linewidth]{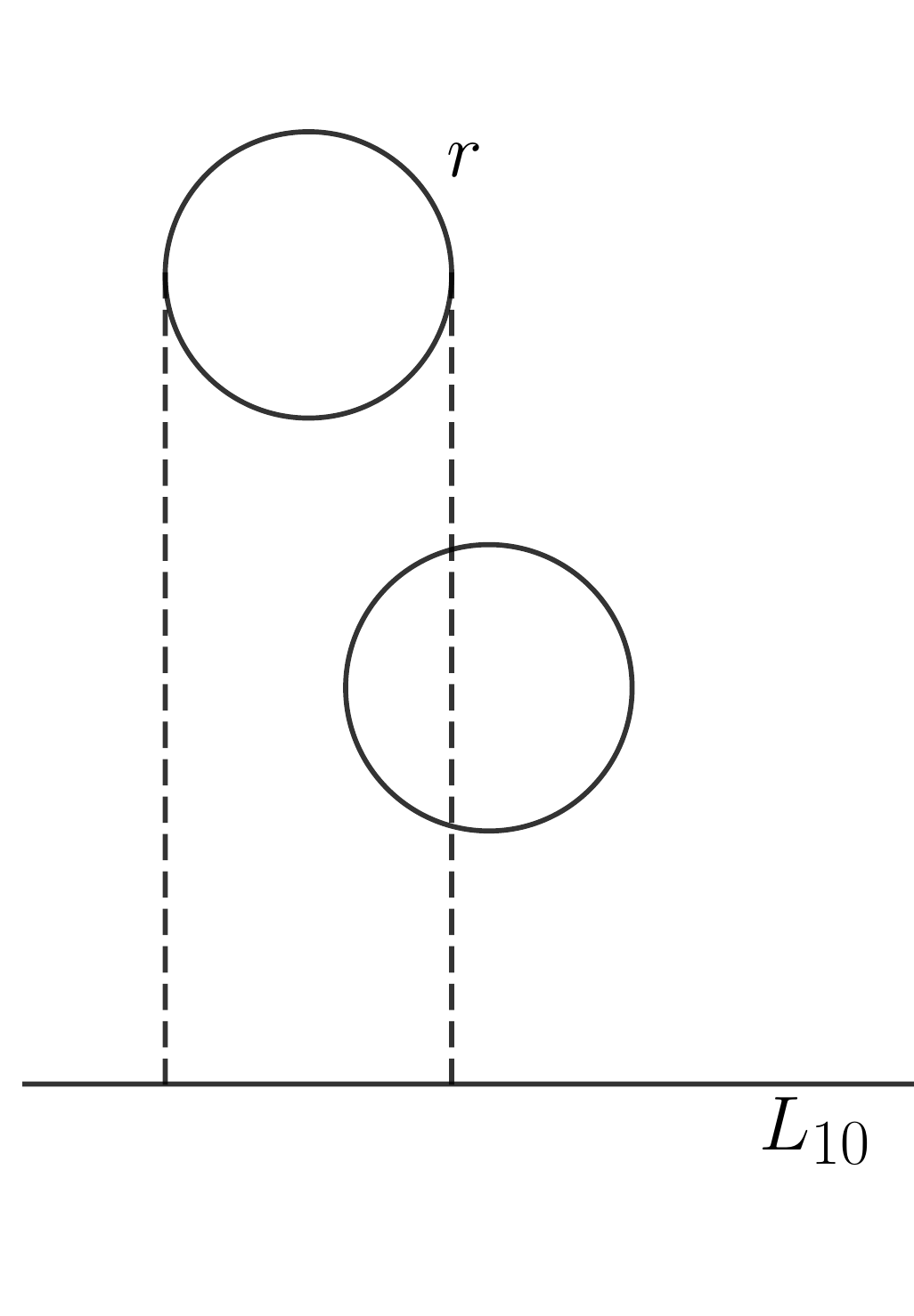}
         \caption{}
     \end{subfigure}
     \hspace{30pt}
     \begin{subfigure}[b]{0.24\linewidth}
         \centering
         \includegraphics[width=\linewidth]{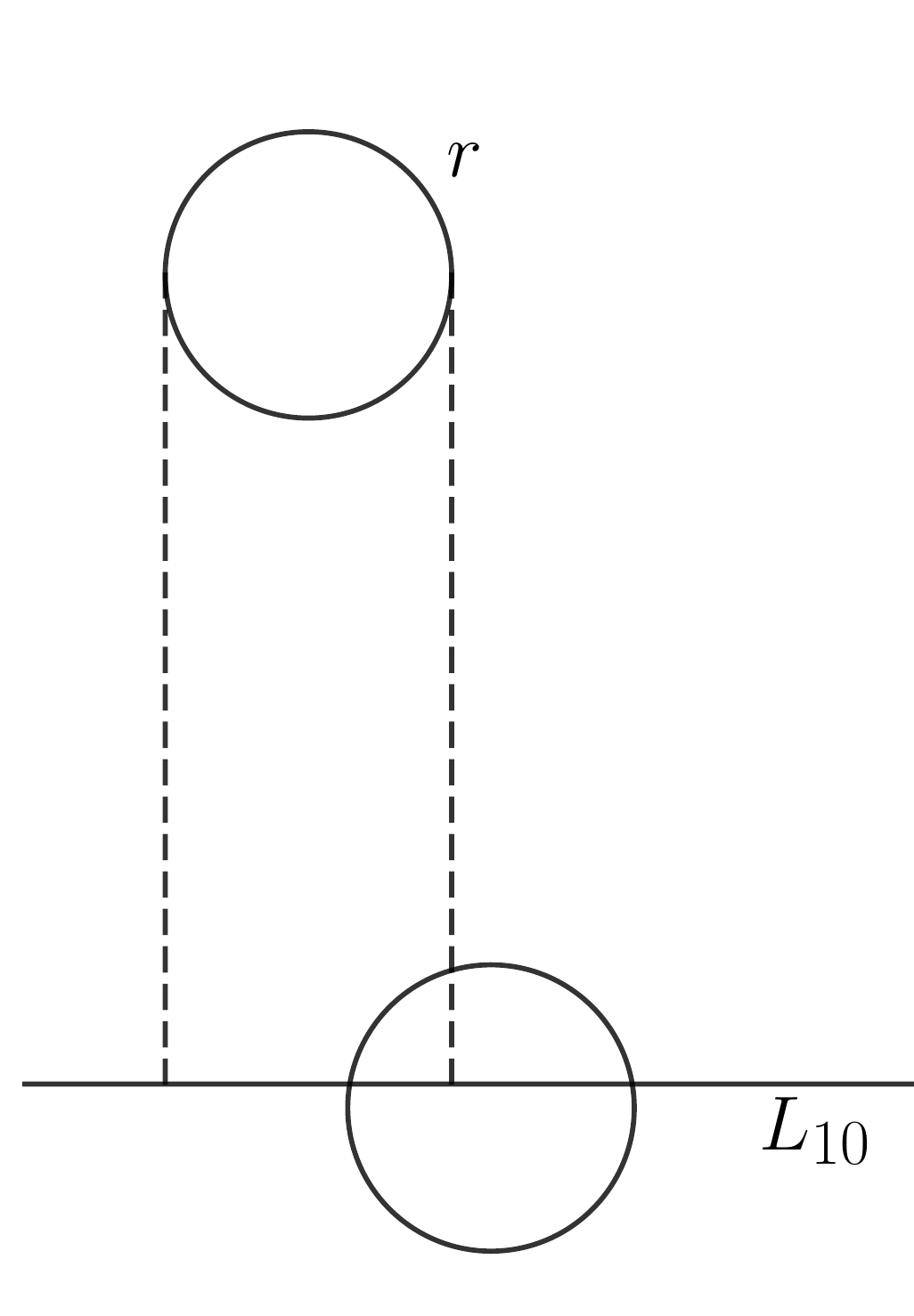}
         \caption{}
     \end{subfigure}
\caption[Short Caption]{Situations where $r$ cannot move to $L_{10}$ because the vertical route to $L_{10}$ has obstructions.}
\label{fig: block}
\end{figure}

   If an \texttt{off} or \texttt{defeated} colored robot sees any robot with Stage 2 color, then it will change its color to \texttt{subordinate}. So the robots with color \texttt{subordinate} are aware that a leader has been elected, and we have entered the next stage of the algorithm. Now the first task is to form the base chain. We want this process to be sequential, i.e., we want the robots to come one by one and place themselves on the base chain. However, sequentializing the process is a bit tricky. For example, if we ask the southmost \texttt{subordinate} colored robot to go first, then we can still have parallel movements due to the false southmost robots. So let us now describe how we sequentialize the movements. First, we ask the robots to come to $L_{10}$. In particular, if a robot $r$ with color \texttt{subordinate} is to the north of $L_{10}$ and the vertical path to $L_{10}$ has no obstructions, then it will move towards $L_{10}$. Fig. \ref{fig: block} shows situations where the vertical path of a robot to $L_{10}$ is obstructed. Now we want to move one robot from $L_{10}$ to $L_{8}$. For this, if there are multiple robots on ${L}_{10}$, then we ask the one that is closest to the $Y$-axis to move 2 units southwards to reach $L_8$. In case there are two robots closest to the $Y$-axis, then the one on the east will move. Also, the robot will move only if it finds no robot on $L_8$. However, in spite of this strategy, we may have multiple robots on $L_{8}$. An example of this is shown in Fig. \ref{fig: 2l8}. So there can be multiple robots on $L_{8}$. If we try to move south one among these robots in a similar fashion, then we can again face the same problem and have multiple robots on $L_{6}$. To resolve this, we shall ask the robots to leave via the $Y$-axis, i.e., a robot may need to move along $L_8$ towards the $Y$-axis and then leave $L_8$ only when it is touching the $Y$-axis. This strategy will ensure that at any time, we have at most one robot on $L_6$. This is because a robot on $L_{8}$ touching the $Y$-axis will be able to detect if there is a robot on $L_{6}$ as that robot also must be touching the $Y$-axis. However, notice that there is a possibility of a collision between a robot $r$ moving horizontally along $L_{8}$ and another robot $r'$ moving southwards from ${L}_{10}$ to $L_{8}$. This can happen in a situation where $r'$ cannot see $r$ and thinks that $L_8$ is empty. In order to avert such a collision, $r$ will do the following. The robot $r$ will identify all the robots on $L_{10}$ with which it can collide if that robot moves southwards. Notice that $r$ will be able to find all such robots, as there are no robots on $L_{8}$ in the portion between the $Y$-axis and itself that can obstruct its view. If $r$ finds no such robots on ${L}_{10}$, then there is no problem, and it will move towards the $Y$-axis as required (see Fig. \ref{fig27}). If $r$ finds only one such robot, say $r'$, then clearly $r'$ can also see $r$ (see Fig. \ref{fig28}). Hence, $r'$ will not move in this case, and thus, there is no possibility of a collision. So in this case, $r$ will simply ignore $r'$ and move towards the $Y$-axis. Now consider the case where $r$ has found multiple robots on $L_{10}$. In this case, only the one that is closest to the $Y$-axis, say $r'$, is to be considered as only this robot may decide to move southwards (see Fig. \ref{fig29} and \ref{fig30}). Let $d$ be the horizontal distance between $r$ and $r'$. Notice that if $d$ is small enough, i.e., $r$ and $r'$ are close enough, then $r'$ will be able to see $r$ and hence $r'$ will not move. More precisely, it can be shown that if $d \leq 5$, then $r'$ will be able to see $r$. Intuitively, the worst case of obstruction occurs when a robot is touching $r'$ and the radius of the camera is zero, i.e., a point camera (see Fig. \ref{fig31}). Formally, the obstructed region in any other case will be contained in the obstructed region for this case (see Fig. \ref{fig32}). Hence, it suffices to argue for this case. Using simple calculations, one can verify that $r$ will be visible to $r'$ in this worst case situation if $d \leq 5$ (see Fig. \ref{fig31}). So if $d \leq 5$, $r$ will ignore $r'$ and move towards the $Y$-axis. Otherwise, if $d > 5$, then in order to avert a collision, $r$ will not move more than $d-2$ units.

%%%%%%%%%%%%%%%%%%%%%%%%%%%%%%%%%%%%%%%%%%%%

\begin{figure}[thb!]
     \centering
     \begin{subfigure}[b]{0.49\linewidth}
         \centering
         \includegraphics[width=\linewidth]{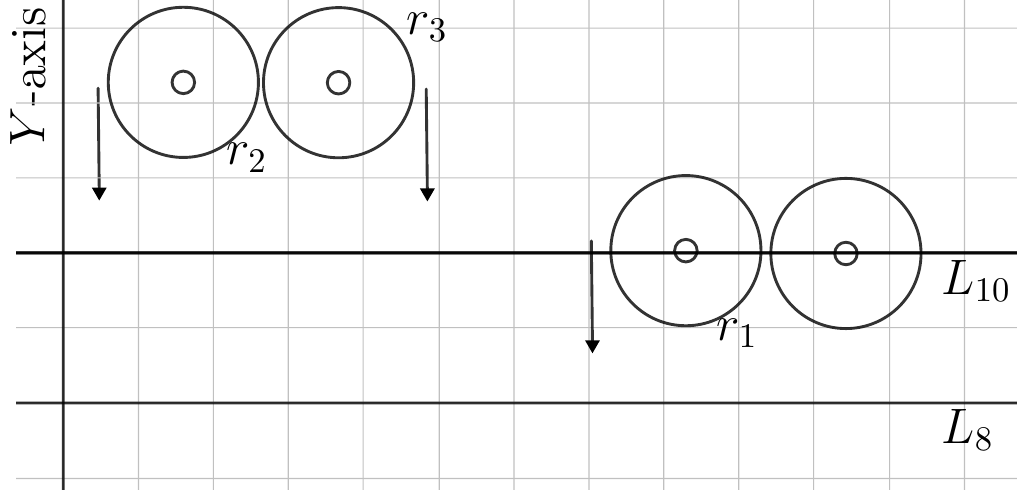}
         \caption{}
     \end{subfigure}
     \hfill
     \begin{subfigure}[b]{0.49\linewidth}
         \centering
         \includegraphics[width=\linewidth]{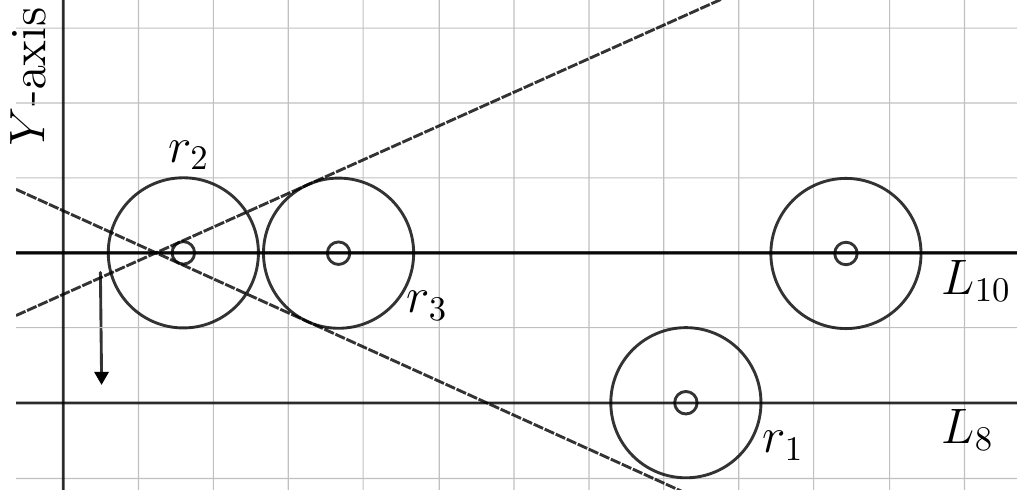}
         \caption{} \label{fig22}
     \end{subfigure}
     \\
     \begin{subfigure}[b]{0.49\linewidth}
         \centering
         \includegraphics[width=\linewidth]{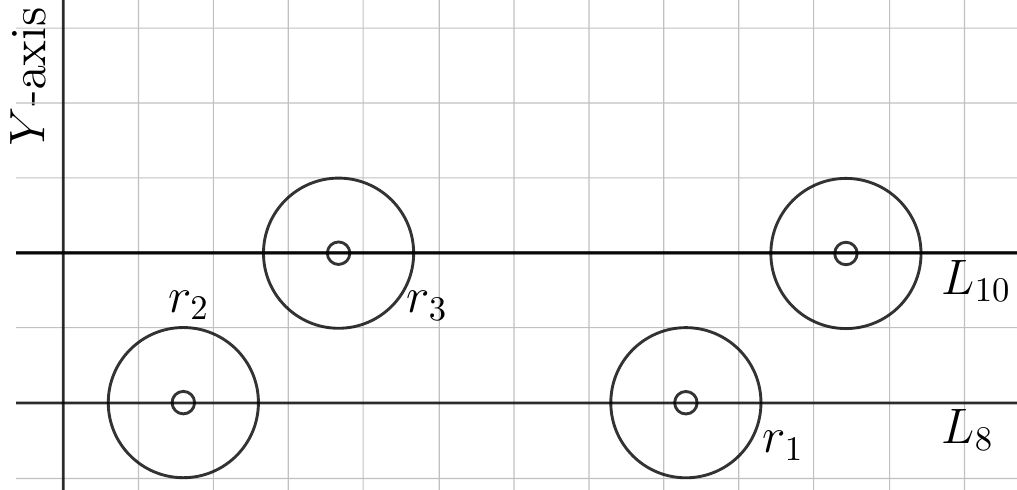}
         \caption{} 
     \end{subfigure}
\caption[Short Caption]{A situation where we have multiple robots on $L_8$. (a) The robots $r_2$ and $r_3$ decide to move to $L_{10}$. Also, $r_1$ being closest to the $Y$- axis on $L_{10}$, decides to move to $L_8$. (b) $r_2$ does not see any robot on $L_8$, and hence decides to move to $L_8$. $r_1$ is not activated. (c) We have multiple robots on $L_8$. }
\label{fig: 2l8}
\end{figure} 

%%%%%%%%%%%%%%%%%%%%%%%%%%%%%%%%%%%%%%%%%%%%

\begin{figure}[thb!]
     \centering
     \begin{subfigure}[b]{0.49\linewidth}
         \centering
         \includegraphics[width=\linewidth]{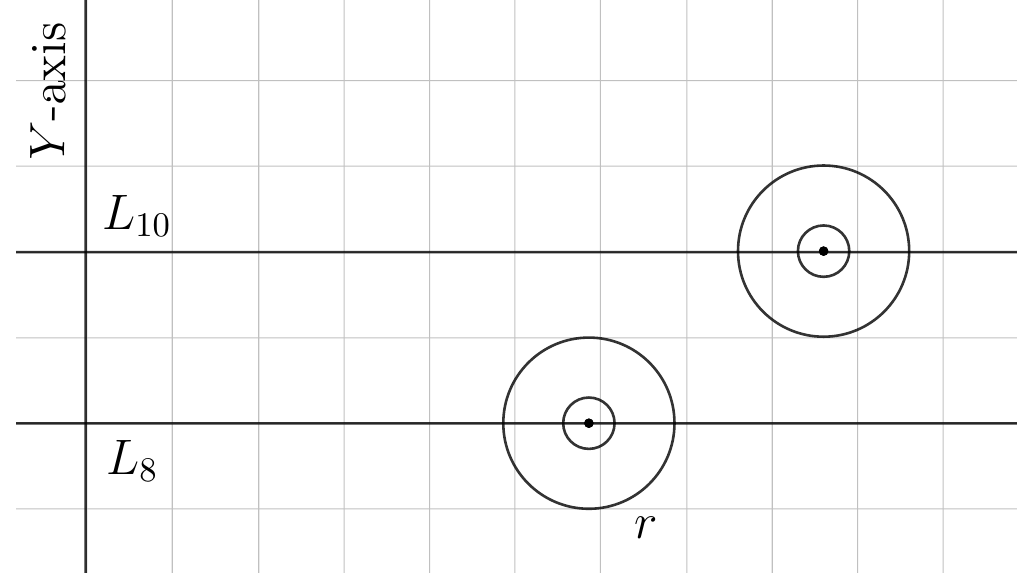}
         \caption{}\label{fig27}
     \end{subfigure}
     \hfill
     \begin{subfigure}[b]{0.49\linewidth}
         \centering
         \includegraphics[width=\linewidth]{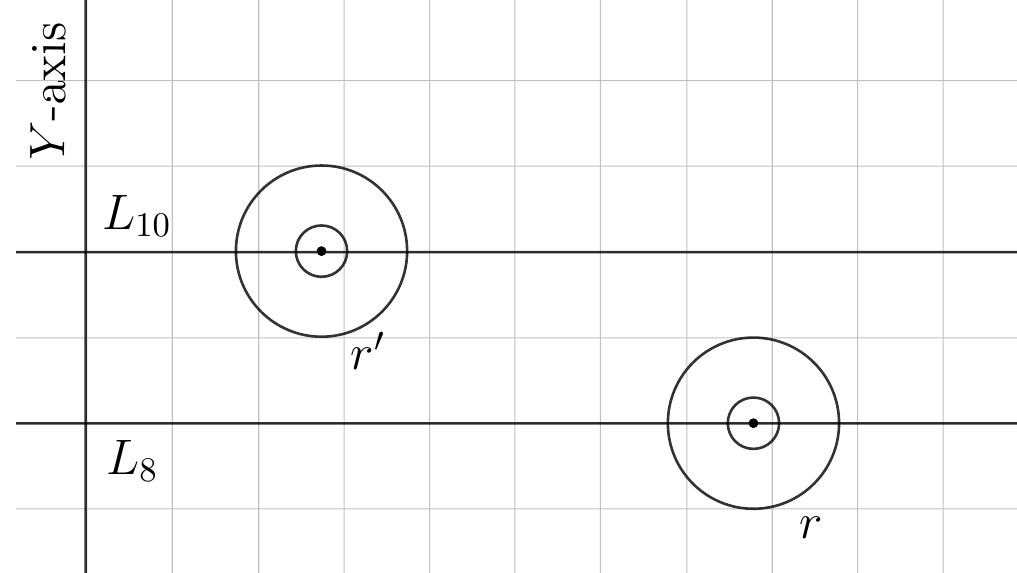}
         \caption{}\label{fig28}
     \end{subfigure}
     \\
     \begin{subfigure}[b]{0.49\linewidth}
         \centering
         \includegraphics[width=\linewidth]{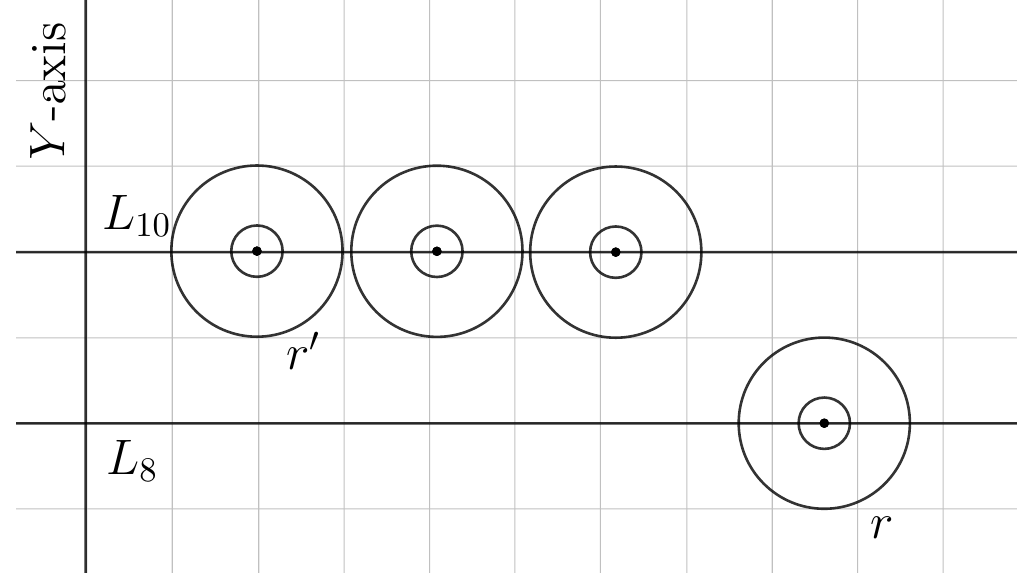}
         \caption{} \label{fig29}
     \end{subfigure}
     \hfill
     \begin{subfigure}[b]{0.49\linewidth}
         \centering
         \includegraphics[width=\linewidth]{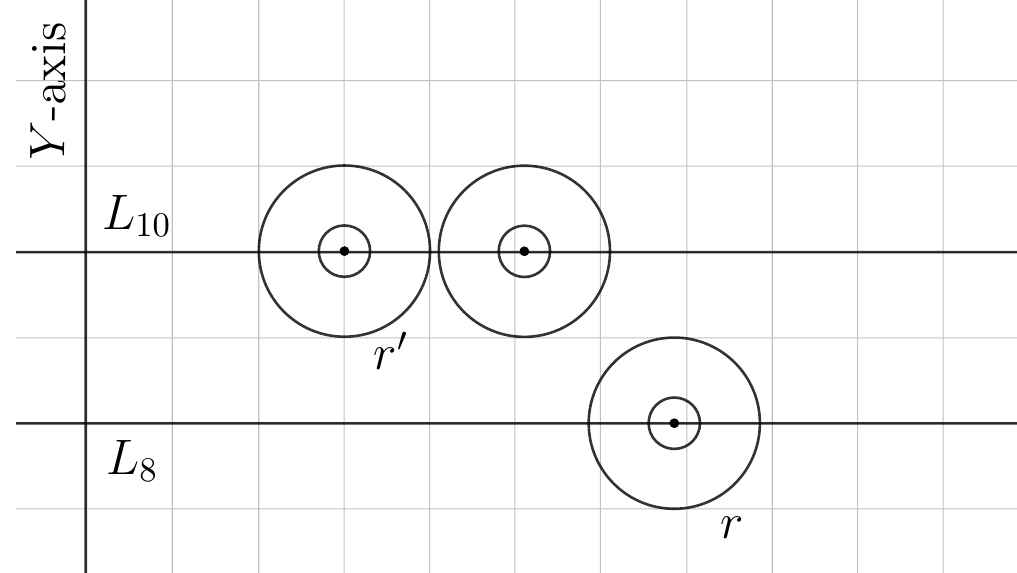}
         \caption{}\label{fig30}
     \end{subfigure}
     \\
     \begin{subfigure}[b]{0.49\linewidth}
         \centering
         \includegraphics[width=\linewidth]{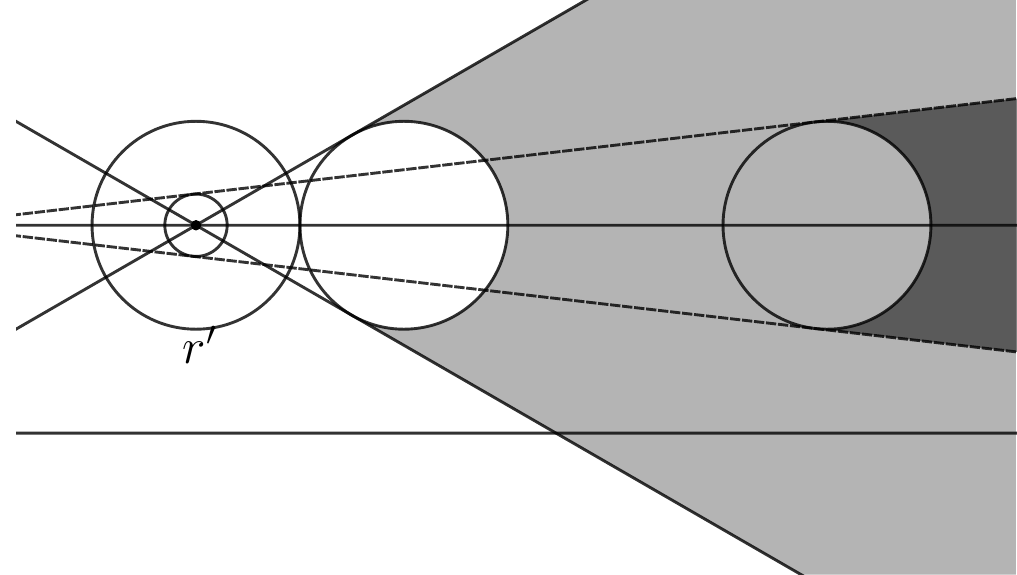}
         \caption{}\label{fig32}
     \end{subfigure}
     \hfill
     \begin{subfigure}[b]{0.49\linewidth}
         \centering
         \includegraphics[width=\linewidth]{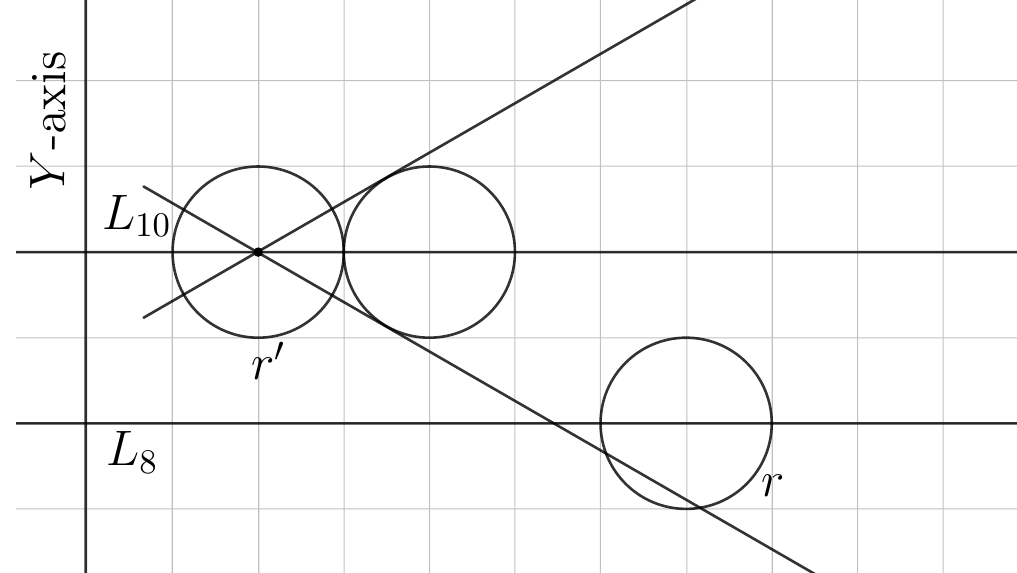}
         \caption{}\label{fig31}
     \end{subfigure}
\caption[Short Caption]{(a) There are no robots on $L_{10}$ with which $r$ can collide. (b) There is no possibility of a collision between $r'$ and $r$. This is because $r'$ can see $r$ and hence will not move south. (c) There is a possibility of a collision between $r'$ and $r$. This is because $r'$ may not be able to see $r$ as their horizontal separation is $> 5$ units. (d) There is no possibility of a collision between $r'$ and $r$. This is because $r'$ can see $r$ as their horizontal separation is $\leq 5$ units. (e) The worst case of obstruction occurs when a robot is touching $r'$ and the radius of the camera is zero. This is because the obstructed region in any other case will be contained in the obstructed region for this case. (f) In the worst case of obstruction, $r$ is visible to $r'$ if their horizontal separation is $\leq 5$.}
\label{fig: collision}
\end{figure}

%%%%%%%%%%%%%%%%%%%%%%%%%%%%%%%%%%%%%%%
   
  Our strategy ensures that the robots come to $L_6$ sequentially. When a robot $r$ comes to $L_6$, it first checks if there is any robot in the region between $L_6$ and $L_0$. If there is such a robot, then $r$ will do nothing. Otherwise, it will calculate the position on the base chain on $L_0$ where it has to place itself. Recall that the robots have to alternatively go to the east and west branches of the base chain. So if $r$ finds that both branches have $k$ robots, then it has to go to the east branch, while if it finds that the east branch has $k$ robots and the west branch has $k-1$ robots, then it has to go to the west branch. The robot can compute its position by calculating the stretch of the corresponding mutually visible chain from the distance between $r_L$ and its nearest robot on the base chain using Lemma \ref{lemma3}. If $r$ is the first robot on the branch, we set its position to be 4 units from $r_L$. If its computed position is $< 2$ units from the last robot on its branch of the base chain, then it means that there is no more space on the base chain, and an expansion is required. First, consider the case where there is room for $r$ on the base chain. In this case, it will first move to $L_2$, then move along $L_2$ to vertically align with its computed position on the base chain. Finally, it moves towards the south to place itself on the base chain on $L_0$ (see Fig. \ref{fig33} and \ref{fig34}).

  Now consider the case where $r$ finds that there is no space, and so the base chain has to be expanded. In this case, $r$ will change its color to \texttt{no space}. When the leader $r_L$ (with color \texttt{leader}) finds  $r$ with color \texttt{no space}, it will change its color to \texttt{expand}. The plan now is that all the robots of the base chain, except the leader, will move to $L_4$ and place themselves on the $X$-coordinates of the expanded chain. In this case, $L_2$ will be used for horizontal movements. The leader $r_L$ will be able to detect when all the robots have placed themselves on $L_4$ at their respective positions. While the leader detects, it will change its color back to \texttt{leader}. Finally, $r$ will also change its color to \texttt{subordinate} and the robots on $L_4$ will move vertically to $L_0$. This completes the expansion of the base chain. We describe this part of the algorithm in more detail in the following.

  %%%%%%%%%%%%%%%%%%%%%%%%%%%%%%%%%%%%%%%%%%%%%%%%%%%%%%%%%%%%%%%%%%%%%%%%%%%%%%%%%

  \begin{figure}[thb!]
     \centering
     \begin{subfigure}[b]{0.49\linewidth}
         \centering
         \includegraphics[width=\linewidth]{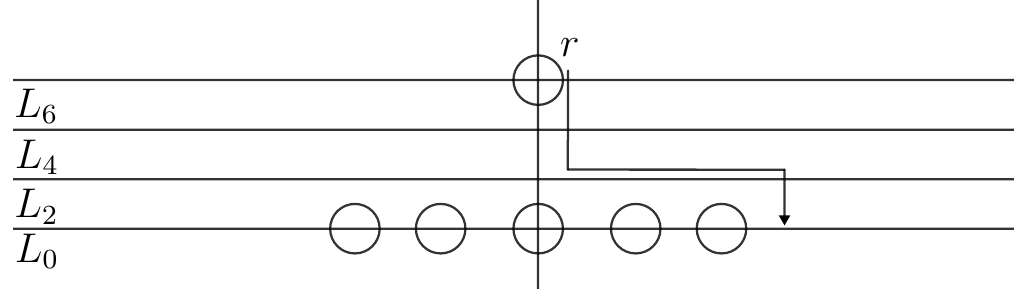}
         \caption{}
         \label{fig33}
     \end{subfigure}
     \hfill
     \begin{subfigure}[b]{0.49\linewidth}
         \centering
         \includegraphics[width=\linewidth]{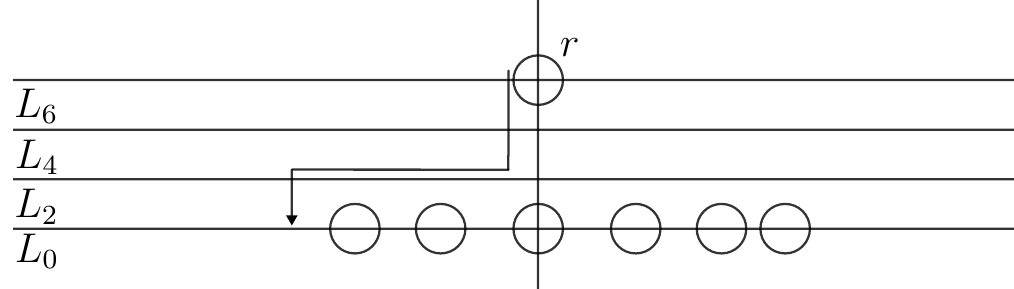}
         \caption{}
         \label{fig34}
     \end{subfigure}
     \\
     \begin{subfigure}[b]{0.49\linewidth}
         \centering
         \includegraphics[width=\linewidth]{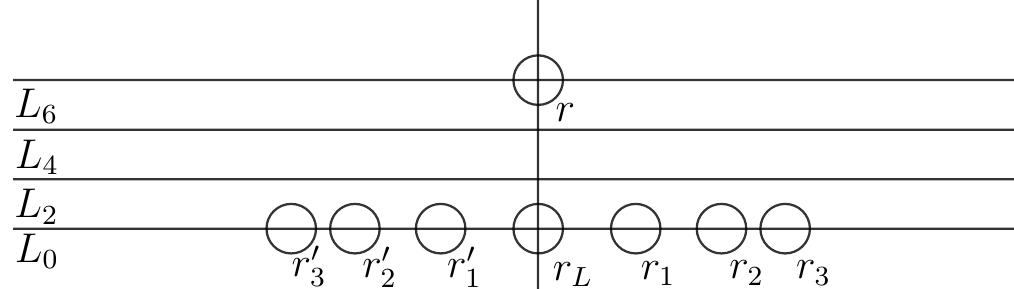}
         \caption{}
         \label{fig35}
     \end{subfigure}
     \hfill
     \begin{subfigure}[b]{0.49\linewidth}
         \centering
         \includegraphics[width=\linewidth]{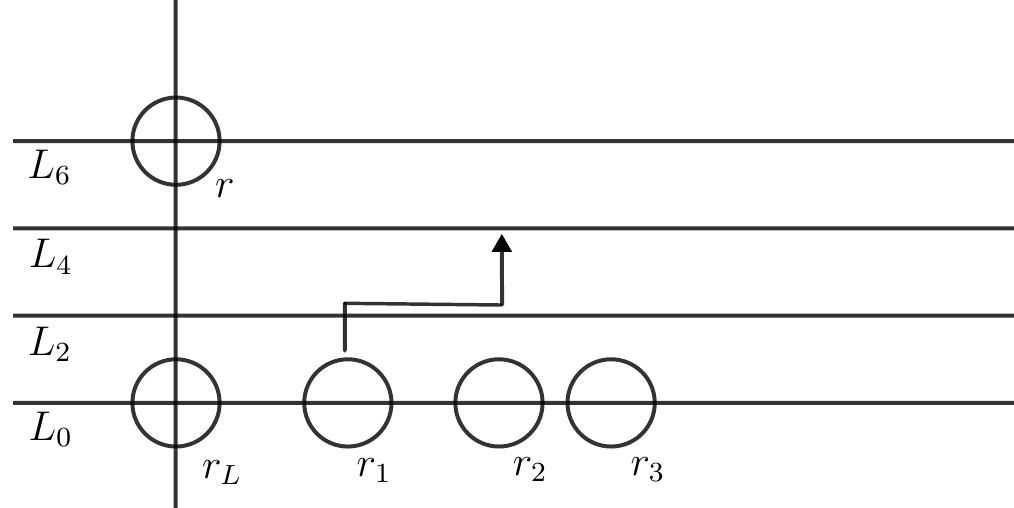}
         \caption{}
     \end{subfigure}
     \\
     \begin{subfigure}[b]{0.49\linewidth}
         \centering
         \includegraphics[width=\linewidth]{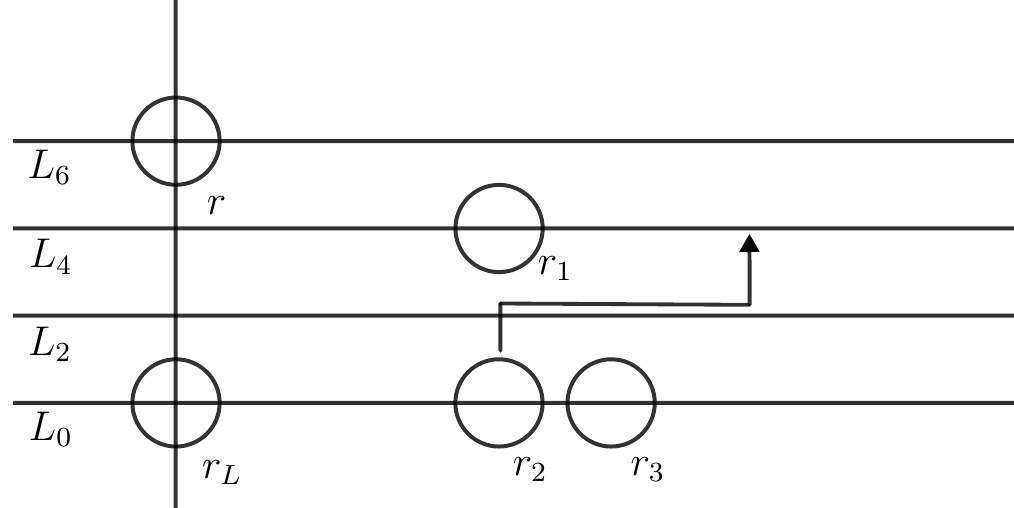}
         \caption{}
     \end{subfigure}
     \hfill
     \begin{subfigure}[b]{0.49\linewidth}
         \centering
         \includegraphics[width=\linewidth]{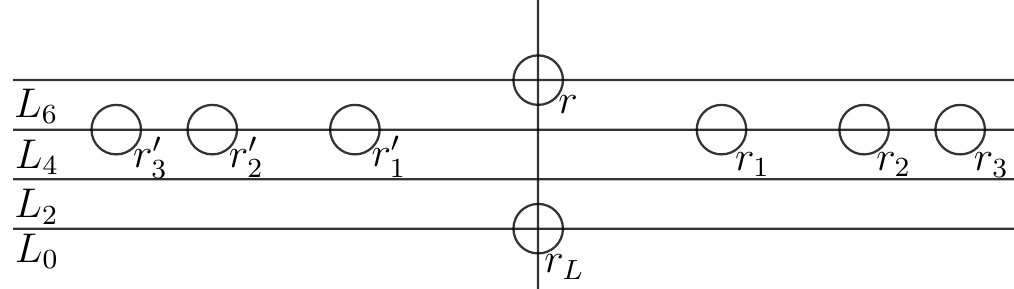}
         \caption{}
         \label{fig38}
     \end{subfigure}
\caption[Short Caption]{(a) $r$ will go to the east branch of the base chain. (b) $r$ will go to the west branch of the base chain. (c) $r$ changes its color to \texttt{no space} and then $r_L$ changes its color \texttt{expand}. (d) The movement of $r_1$ from $L_0$ to $L_4$. (e) The movement of $r_i, i>1$ from $L_0$ to $L_4$. (f) After all robots move to $L_4$, $r_L$ changes its color to \texttt{leader} and then $r$ changes its color to \texttt{subordinate}.}
\label{}
\end{figure}

    \begin{figure}[t!]
     \centering
     \begin{subfigure}[b]{0.47\linewidth}
         \centering
         \includegraphics[width=\linewidth]{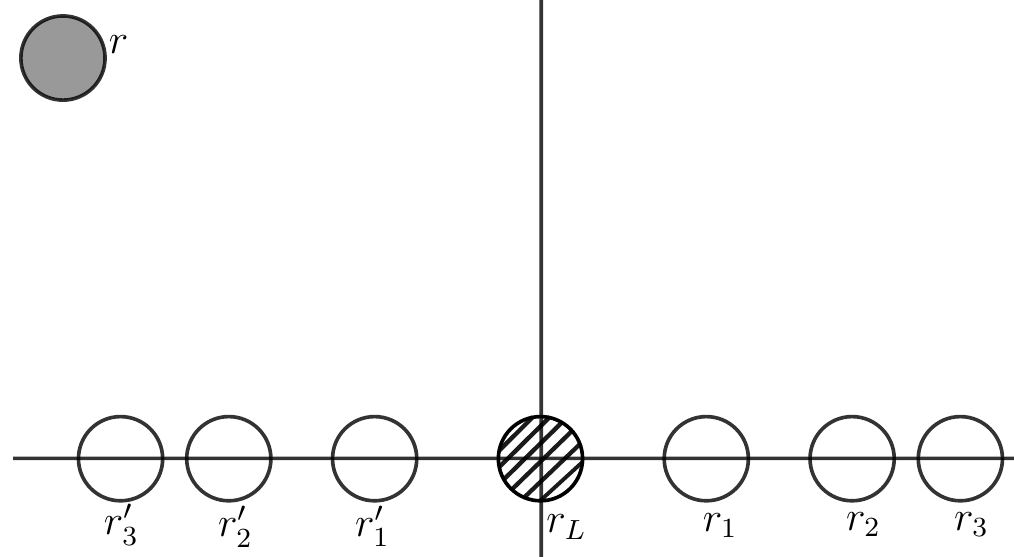}
         \caption{}
         \label{fig39}
     \end{subfigure}
     \hfill
     \begin{subfigure}[b]{0.47\linewidth}
         \centering
         \includegraphics[width=\linewidth]{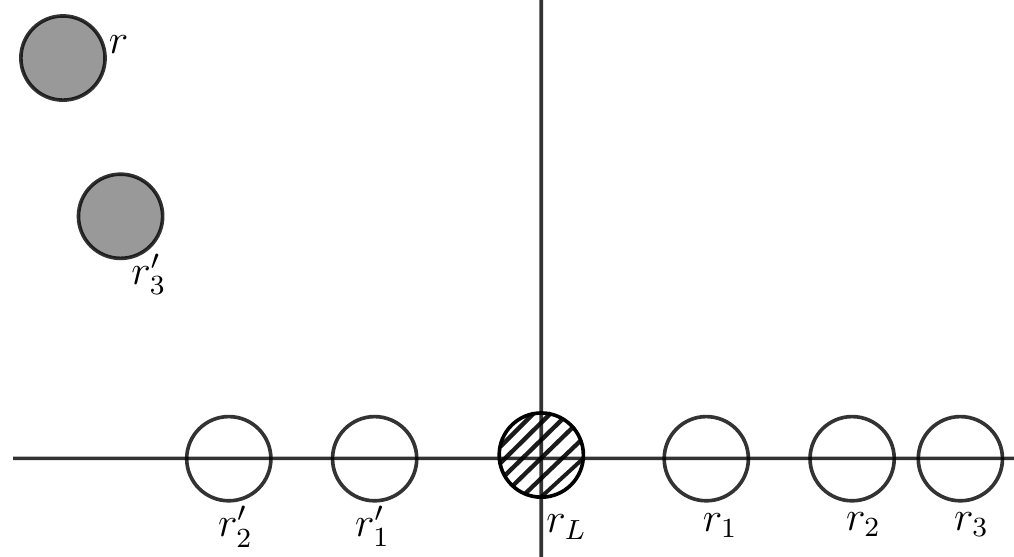}
         \caption{}
         \label{fig40}
     \end{subfigure}
     \\
     \begin{subfigure}[b]{0.47\linewidth}
         \centering
         \includegraphics[width=\linewidth]{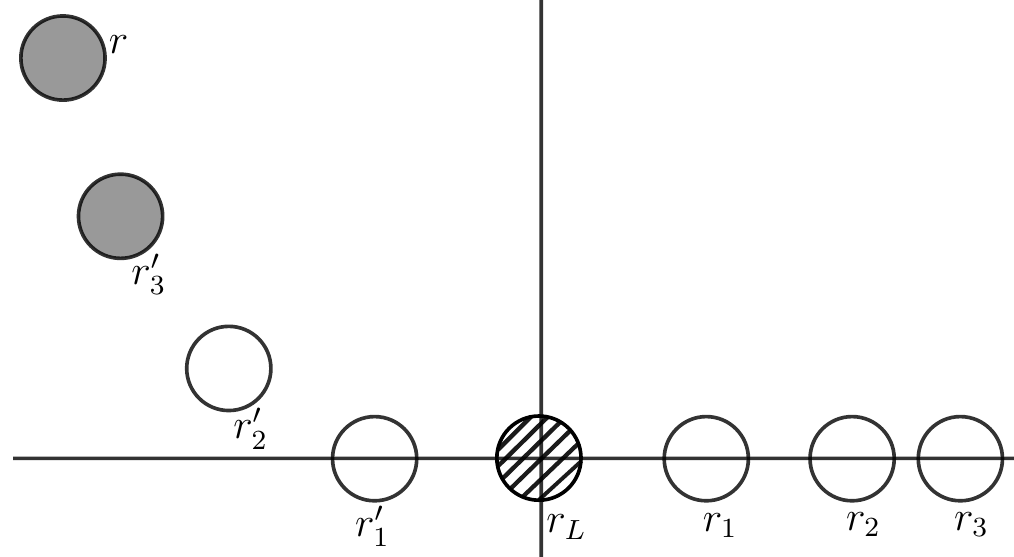}
         \caption{}
         \label{}
     \end{subfigure}
     \hfill
     \begin{subfigure}[b]{0.47\linewidth}
         \centering
         \includegraphics[width=\linewidth]{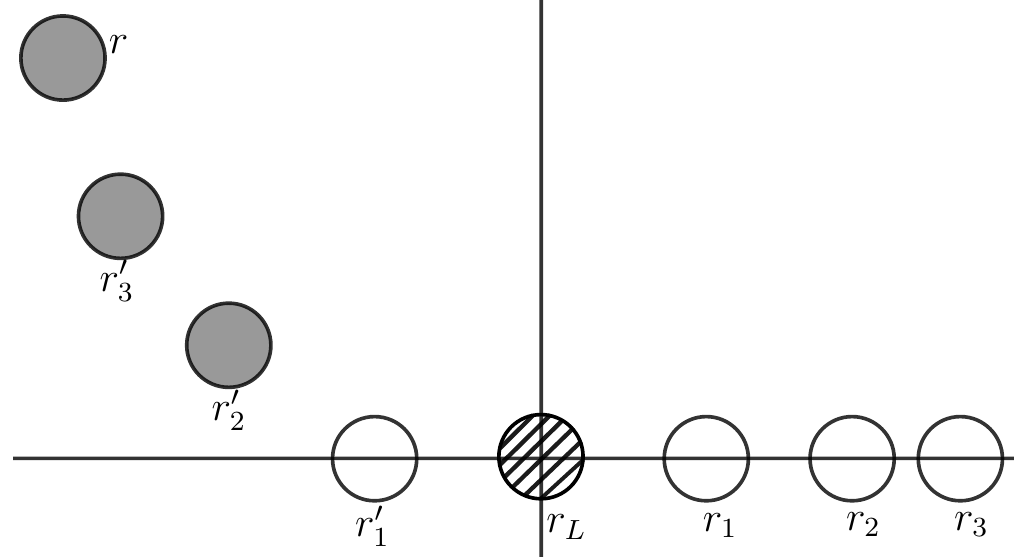}
         \caption{}
     \end{subfigure}
     \\
     \begin{subfigure}[b]{0.47\linewidth}
         \centering
         \includegraphics[width=\linewidth]{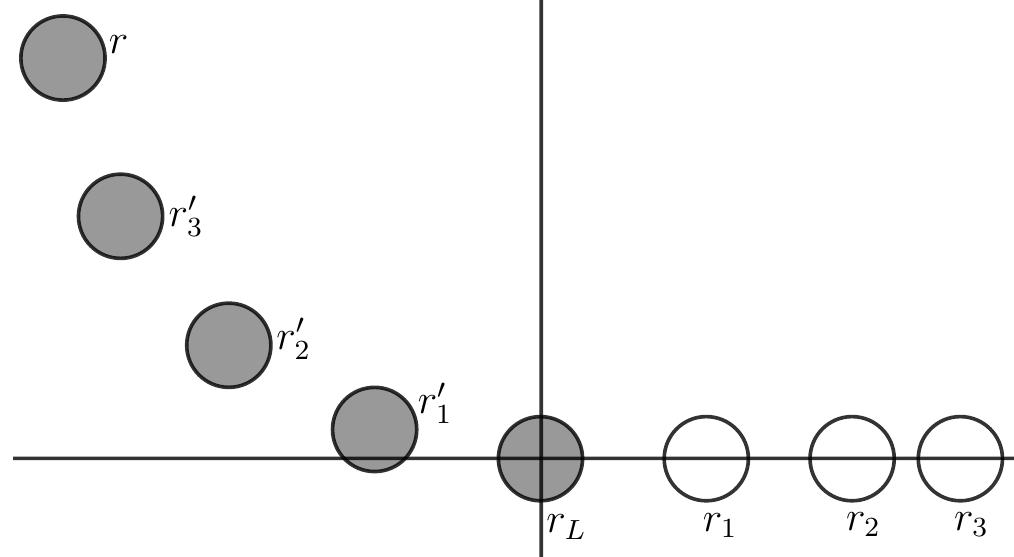}
         \caption{}
         \label{fig41}
     \end{subfigure}
     \hfill
     \begin{subfigure}[b]{0.47\linewidth}
         \centering
         \includegraphics[width=\linewidth]{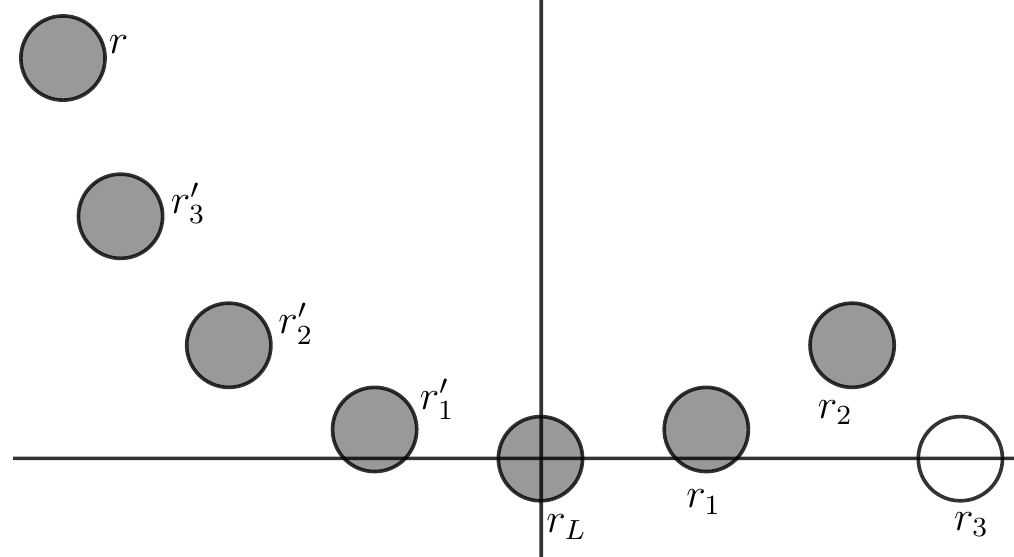}
         \caption{}
         \label{fig42}
     \end{subfigure}
     \\
     \begin{subfigure}[b]{0.47\linewidth}
         \centering
         \includegraphics[width=\linewidth]{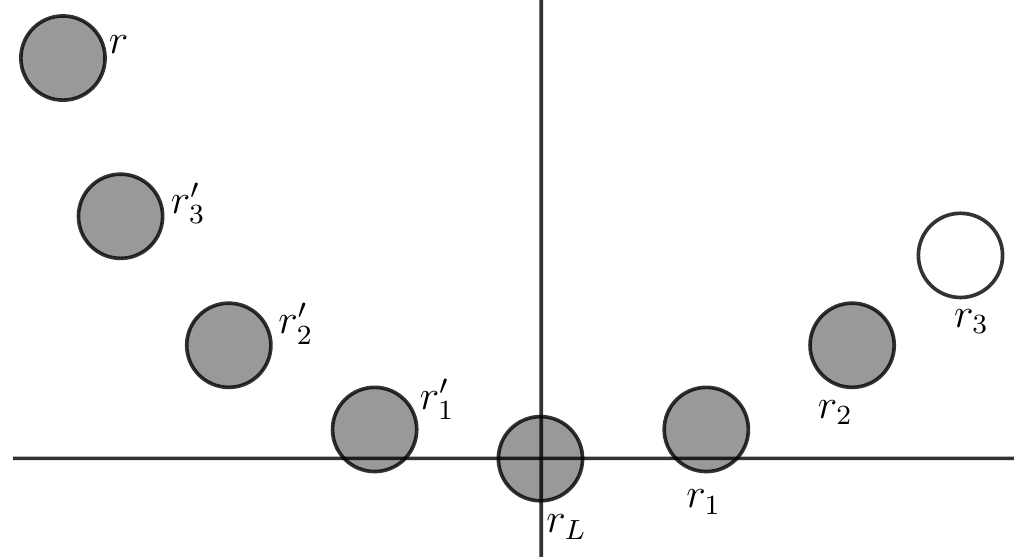}
         \caption{}
         \label{fig43}
     \end{subfigure}
     \hfill
     \begin{subfigure}[b]{0.47\linewidth}
         \centering
         \includegraphics[width=\linewidth]{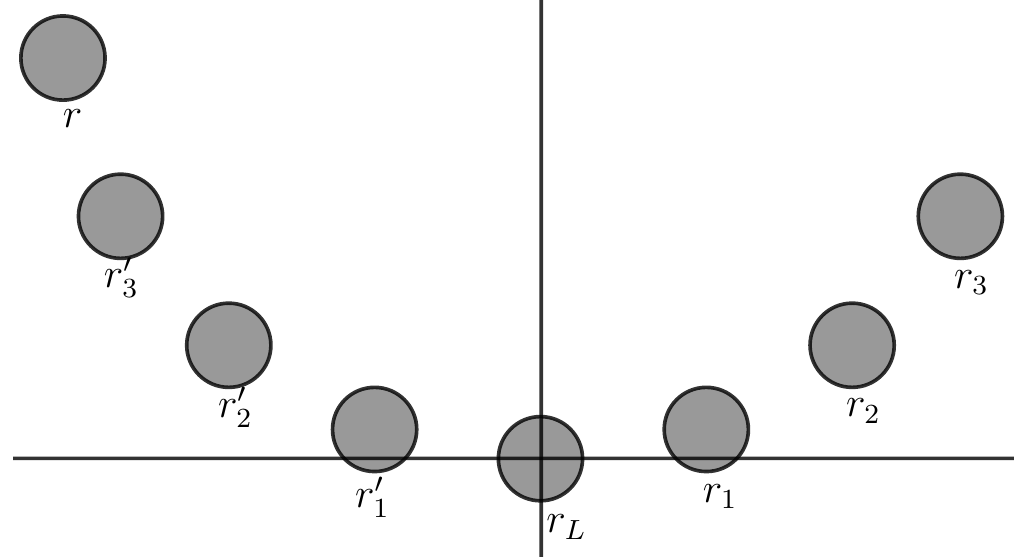}
         \caption{}
         \label{fig46}
     \end{subfigure}
\caption[Short Caption]{The shaded, striped and white robots respectively indicate robots with color \texttt{final}, \texttt{leader} and \texttt{subordinate}. (a) The last robot $r$ goes to its final position and changes its color to \texttt{final}. (b) After $r'_3$ changes its color \texttt{final}, it is $r'_2$'s turn to move. (c)  $r'_2$ first moves 2 units north so that it is able to see $r_L$. (d) Then $r'_2$ goes to its final position and changes its color to \texttt{final}. (e) After all robots of the west branch have moved and changed their colors to \texttt{final}, $r_L$ also sets its color to \texttt{final}. (f) After $r_2$ changes its color to \texttt{final}, it is $r_3$'s turn to move. (g) $r_3$ first moves 2 units to the north of $r_2$ so that it is able to see $r'_3$. (h) Then $r_3$ aligns itself with $r'_3$ and changes its color to \texttt{final}.}
\label{}
\end{figure}

  %%%%%%%%%%%%%%%%%%%%%%%%%%%%%%%%%%%%%%%%%%%%%%%%%%%%%%%%%%%%%%%%%%%%%%%%%%%%%%%%%%%%%%%5

  When a robot on $L_0$ with color \texttt{subordinate} sees the leader with color \texttt{expand}, it understands that the base chain is being expanded. Let us denote the robots on the base chain as $r_1$, $r_2, \ldots$ and $r_1'$, $r_2', \ldots$ as shown in Fig. \ref{fig35}.  So $r_i$ (resp. $r_1'$) will be able to see the leader (with color \texttt{expand}) when  $r_1, \ldots, r_{i-1}$ (resp. $r'_1, \ldots, r'_{i-1}$) have left $L_0$. When such a robot on $L_0$ is able to see $r_L$ with color \texttt{expand}, it will check if there is any robot on $L_2$ in its half. If there is none, then it will move to $L_2$. Now consider a robot $r_i$ or $r_i'$ on $L_2$. It can understand that it is the $i$th robot on the chain as it can see $i-1$ robots on $L_4$ on its half. It now has to move horizontally and align itself with its position in the expanded base chain. We explain how $r_i$ or $r_i'$ will calculate its position on the expanded base chain. For this, the robot just needs to find the stretch of the expanded chain. If $i \leq 2$, then it can see $r_1$ or $r_1'$ on $L_4$ and thus, they can calculate the stretch of the expanded chain using Lemma \ref{lemma3}. When $i = 1$, the robot has to set the stretch of the new chain. The stretch has to be larger than the stretch of the current chain. We also have to ensure that both $r_1$ or $r_1'$ choose the same stretch. This can be easily achieved by asking $r_1$ (resp. $r_1'$) to align with $r_2$ (resp. $r_2'$) on $L_2$ (i.e., the distance of the first robot from the leader on the new base chain will be equal to the distance of the second robot from the leader on the old base chain). The leader $r_L$ will be able to detect when all the robots have placed themselves on $L_4$ (see Fig.\ref{fig38}). Then it will change its color back to \texttt{leader}. When $r$ with color \texttt{no space} sees that $r_L$ has color \texttt{leader} and robots are queued on $L_4$, then it will understand that all robots from $L_0$ (except $r_L$) have moved to $L_4$ and then it will change its color back to \texttt{subordinate}. Now for a robot $r_i$ on $L_4$, it will decide to move back to $L_0$ when it can see $r_L$ with color \texttt{leader} and a robot on $L_6$ with color \texttt{subordinate}.

  Now let us consider the special case where a robot $r$ finds that all the other robots are on the base chain. We call $r$ the \textit{last robot}. In this case, $r$ will not try to go to the base chain. Here, $r$ will compute its final position on the mutually visible chain and go there directly. We shall ask the last robot $r$ to always go to the west branch of the chain (see Fig. \ref{fig39}). When $r$ reaches its final position, it will change its color to \texttt{final}. Let $r_1$, $r_2, \ldots$ and $r_1'$, $r_2', \ldots$ be the robots on the base chain as shown in Fig. \ref{fig39}. Now the robots $r_k'$, $r_{k-1}', \ldots, r_1'$ will have to go to their final positions sequentially, from west to east. After $r, r_k', \ldots, r_{i+1}'$ have gone to their final positions and changed their colors to \texttt{final},  $r_i'$ will find that all robots on its west have color \texttt{final} (see Fig. \ref{fig40}). Then $r_i'$ will decide to move. Notice, however, that $r_i'$ will not be able to see $r_L$ unless $i = 1$ and hence cannot compute its final position. In this case, $r_i'$ will first move 2 units north. Then it will be able to see both $r_1'$ and $r_L$, and thus can infer the stretch of the chain using Lemma \ref{lemma3}. So, then it can compute its final position and move towards it. When all the robots on the west branch have placed themselves at their final positions and changed their colors to \texttt{final}, the leader $r_L$ will also change its color to \texttt{final} (see Fig. \ref{fig41}). Now the east branch robots $r_1$, $r_2, \ldots, r_k$ will have to go to their final positions sequentially, from west to east. After $r_1, \ldots, r_{i-1}$ have gone to their final positions and changed their colors to \texttt{final}, $r_i$ will find that all robots on its west have color \texttt{final} (see Fig. \ref{fig42}). Then $r_i$ will decide to move. The robot $r_{i}$ will first move vertically 2 units to the north of $r_{i-1}$ (see Fig. \ref{fig43}). From there, $r_i$ will be able to see $r_{i-1}'$ which is horizontally aligned with $r_{i-1}$. The robot $r_i$ will also be able to identify $r_{i}'$ which is immediately to the east of $r_{i-1}'$. The final position of $r_i$ will be horizontally aligned with $r_i'$. So $r_i$ will then align itself with $r_i'$ and change its color to \texttt{final}. With this strategy, all the robots will be able to place themselves on the mutually visible chain (see Fig. \ref{fig46}).

% One can use other rules based on the position of the second robot for more expansion, e.g., the distance of the first robot from the leader on new base chain will be twice the distance of the second robot from the leader on the old base chain. 

%%%%%%%%%%%%%%%%%%%%%%%%%%%%%%%%%%%%%%%%%%%%%%%%%%%%%%%%%%%%%

%%%%%%%%%%%%%%%%%%%%%%%%%%%%%%%%%%%%%%%%%%%%%%%%%%%%%%%%%%%%%

\clearpage

\section{Correctness} \label{sec 6}

  \begin{theorem}\label{th 1}
  There is a round $T_1$ when a leader is elected.
 \end{theorem}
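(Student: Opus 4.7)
The plan is to exploit the key invariants of Stage~1: no robot ever moves north, horizontal coordinates are unchanged (so the horizontal width of the swarm stays bounded by $D$), and a \texttt{defeated} robot never moves again. I would show that some \texttt{off} robot eventually both (a) becomes sure of being the southmost and (b) attains the required $\max\{10,\,D/\sqrt{3}\}$ vertical separation from every other robot, at which round it sets its color to \texttt{leader}.

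\textbf{Termination of the false-southmost movements.} First I would prove that the south-move phase ends after finitely many rounds. Consider any \texttt{off} robot $r$ that makes a south move. It does so either because it is a false southmost whose south view is blocked by some intermediate obstructor, or because it is a true southmost that is not yet $(1-c)$-clear of every other robot; in either case it moves at least $\delta \geq 2$ units south. After finitely many such moves, $r$ either drops below each of its previous obstructors -- at which point it either sees a robot south of it and sets its color to \texttt{defeated}, or reveals a new obstructor further south -- or becomes $(1-c)$-clear of every other robot and hence sure of being the southmost. Since there are only $n-1$ candidate obstructors, the number of ``obstructor changes'' for $r$ is bounded, so $r$ makes only finitely many south moves before it either becomes \texttt{defeated} or becomes sure of being the southmost. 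Summing over the $n$ robots yields that some \texttt{off} robot $r^\star$ eventually becomes sure of being the southmost.

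\textbf{Attaining the required separation and electing the leader.} Once $r^\star$ is sure of being the southmost, it keeps moving south until it is $\geq \max\{10,\,D/\sqrt{3}\}$ units below every other robot. While $r^\star$ descends, residual false-southmost robots may still attempt south moves; I would use the bound on horizontal width -- at most $D$ since horizontal positions never change -- to show that once $r^\star$ is $\geq D/\sqrt{3}$ units below the rest, no robot above $r^\star$ can have its view of $r^\star$ obstructed. Any candidate obstructor lies in the vertical strip of width $\leq D$ around $r^\star$, and a direct geometric computation on the truncated-cone occlusion region of a unit-disk body with a strictly smaller concentric camera shows that the angular reach of such a cone cannot cover a target that is $\geq D/\sqrt{3}$ vertically away and $\leq D$ horizontally away. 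Hence every remaining \texttt{off} robot above $r^\star$ will, upon its next activation, see a robot on its south and set its color to \texttt{defeated}. No further false-southmost moves occur, so $r^\star$ reaches the required separation in some round $T_1$ and, in that round, sets its color to \texttt{leader}.

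\textbf{Main obstacle.} The hard part is the geometric step in the second paragraph: rigorously verifying that $D/\sqrt{3}$ is large enough, given the horizontal-width bound $D$ and the truncated-cone occlusion geometry of the slim-omnidirectional-camera model, to forbid every possible obstruction of $r^\star$ from any robot north of it. This requires a worst-case placement argument for the obstructor inside the allowed strip (analogous to Lemma~\ref{lem: 3rob}, but adapted to the leader-election geometry). Once this fact is in hand, the monotone termination of south-moves from the first paragraph and the non-increasing nature of the $y$-coordinates combine to produce the desired round $T_1$.
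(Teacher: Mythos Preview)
Your termination argument has a real gap. You bound the south moves of a given robot $r$ by counting ``obstructor changes,'' implicitly assuming that once $r$ moves $2$ units south it drops below its current obstructor. But an obstructor $r'$ of $r$ lies $<(1-c)$ north of $r$ (or on the same horizontal line), and $r'$ can itself be a live false-southmost robot --- it suffices that some third robot block $r'$'s view of $r$. Under an adversarial $\mathcal{SSYNC}$ schedule that activates $r$ and $r'$ in the same rounds, both descend in lockstep and $r$ never clears $r'$; the per-robot bound of $n-1$ obstructor changes is therefore unjustified. The same difficulty makes your second paragraph circular: you invoke the $D/\sqrt{3}$ geometry only \emph{after} $r^\star$ is already that far below everyone else, but you have not shown that $r^\star$ can attain such separation while other live robots are still chasing it southward.

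The paper's proof avoids this with a global potential argument. It tracks the set $\mathcal{B}$ of pairs $(r,r')$ with $r$ live and $r'$ ``bad'' for $r$ (a current obstructor, or a robot that could become one), shows $|\mathcal{B}|$ never increases, and proves it strictly decreases by examining the \emph{northmost} live robot (westmost among ties). The extremal choice is the crux: any obstructor of this robot lies north of it or level-and-east, hence by extremality cannot be live, hence is \texttt{defeated} and stationary. That single robot is therefore guaranteed to make progress --- it either sees someone south and turns \texttt{defeated}, or moves past its immobile obstructor --- and a pair leaves $\mathcal{B}$. When $|\mathcal{B}|=0$ exactly one live robot remains and all others are frozen, so achieving the $\max\{10,D/\sqrt{3}\}$ separation is then immediate. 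The $D/\sqrt{3}$ computation you highlight as the main obstacle is not part of this proof at all; it is Lemma~\ref{lem: no false}, applied \emph{after} the leader exists to rule out later false-southmost moves.
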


\begin{proof}

%   If a robot with color \texttt{off} thinks that it is westmost, then our algorithm makes it move westwards. We have to show that the movements eventually take one robot at least $4$ units to the west of the remaining robots. The problem here is that the false westmost robots are moving westwards. So let us first show that the movements of false westmost robots cannot go on forever.

% Let us call the \texttt{off} colored robots live robots and \texttt{defeated} colored robots defeated robots.

  In a configuration with no leader, there are two types of robots: robots with color \texttt{off} and robots with color \texttt{defeated}. Let us call the \texttt{off} colored robots live robots and \texttt{defeated} colored robots defeated robots. For any live robot $r$ in a configuration with no leader, we associate a set $\mathcal{B}(r)$ in the following way. We call a robot $r'$ \emph{bad} with respect to $r$ if either $r'$ is on the same horizontal line as $r$, or $r'$ is to the north of $r$, but their vertical separation is $< 1-c$. According to this definition, if there is a false southmost robot in the configuration, then there must be a bad robot with respect to it, which is obstructing its southward view. For any live robot $r$, the set $\mathcal{B}(r)$ consists of 1) all bad robots with respect to $r$, 2) all live robots that are to the north of $r$, and 3) all robots that are to the south of $r$. Intuitively, the set $\mathcal{B}(r)$ contains all the robots that are either currently bad or may become bad in the future. Although the robots in 2) are not currently bad with respect to $r$, they may move southward and become so in the future. Similarly, a robot from 3) may become bad with respect to $r$ in the future if $r$ moves southward. Now let $\mathcal{B}$ denote the set of all pairs $(r,r')$ where $r$ is a live robot and $r' \in \mathcal{B}(r)$.

  We first show that the size of the set $\mathcal{B}$ never increases. Observe that the number of live robots in the configuration never increases. Also, for any live robot $r$, the robots outside $\mathcal{B}(r)$ are precisely the defeated robots that are $\geq (1-c)$ units to the north of $r$. Clearly, these robots will never enter $\mathcal{B}(r)$. Hence, it follows that $|\mathcal{B}|$ cannot increase.

  Now we show that the size of the set $\mathcal{B}$ gradually decreases. Consider any round $t$. Assume that there is no leader in the configuration. Consider the robot $r$ that is northmost among all live robots. If there are multiple such robots, then take the westhmost one. Let $t' \geq t$ be the earliest round when $r$ is active.

  \noindent\textbf{Case 1.} Suppose that $r$ can see a robot $r'$ on its south. Hence $(r,r') \in \mathcal{B}$. In this case, $r$ will change its color to \texttt{defeated}. Thus, $(r,r')$ is removed from $\mathcal{B}$ and hence $|\mathcal{B}|$ decreases.  
  
  \noindent\textbf{Case 2.} Suppose that the robot $r$ thinks that it is southmost and its southward view is obstructed by a robot $r'$ to its north. Clearly, the vertical distance between $r$ and $r'$ is $< (1-c)$ units. Hence $(r,r') \in \mathcal{B}$. It follows from the definition of $r$ that the color of $r'$ is \texttt{defeated}. In this case, $r$ will move 2 units south. Thus, $(r,r')$ is removed from $\mathcal{B}$ and hence $|\mathcal{B}|$ decreases.

  \noindent\textbf{Case 3.} Suppose that the robot $r$ thinks that it is southmost, but its southward view is obstructed by a robot $r'$ on the same horizontal line. Hence $(r,r') \in \mathcal{B}$. It follows from the definition of $r$ that  $r'$ is to the east of $r$. So $r$ will change its color to \texttt{defeated}. Thus, $(r,r')$ is removed from $\mathcal{B}$ and hence $|\mathcal{B}|$ decreases.

  \noindent\textbf{Case 4.} Suppose that the robot $r$ is sure of being southmost. But $r$ is also the northmost among all the live robots. This means that $r$ is the only live robot in the configuration. Also, all other robots have color \texttt{defeated} and they are $\geq (1-c)$ units to the north of $r$. So we have $|\mathcal{B}| = 0$.

  We have shown that, while there is no leader in the configuration, $|\mathcal{B}|$ gradually decreases. In particular, it implies that there exists a round when either 1) there is a leader or 2) $|\mathcal{B}| = 0$. In the first case, we are done. In the latter case, there is only one live robot $r$, which is surely southmost, and all the other robots are defeated. Since the defeated robots do not move, $r$ will be able to make $\geq$ max $\{10, \frac{D}{\sqrt{3}}\}$ units vertical separation from the rest and then become the leader by changing its color to \texttt{leader}. Hence, there is a round $T_1$ when we have a leader. 
  
  \end{proof}

 \begin{lemma}\label{lem: no false}
There are no \texttt{off} colored false southmost robots after the leader is elected at $T_1$.
\end{lemma}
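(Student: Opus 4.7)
\medskip

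\noindent\textbf{Proof proposal.} The plan is to reduce the statement to a visibility claim and then establish it by a finite descent through obstructors, using the angular bound guaranteed by the leader-promotion condition. Specifically, because $r_L$ with color \texttt{leader} is strictly south of every other robot at every round $t\geq T_1$, no non-leader robot is truly southmost; therefore an \texttt{off} colored robot $r$ that sees at least one robot whose centre is strictly south of $r$'s centre will, upon activation, change its color to \texttt{defeated} rather than attempt any southward motion, and hence is not a false southmost. It suffices to show: at every $t\geq T_1$ and for every \texttt{off} colored robot $r$, there exists a robot visible to $r$ whose centre is strictly south of $r$'s.

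\smallskip

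I would begin by pinning down the geometry. Because Stage 1 permits only southward motions, horizontal coordinates of all robots coincide with those of the initial configuration, so $|x_r - x_{r_L}|\leq D$. Combined with the leader-promotion condition $y_r - y_{r_L}\geq D/\sqrt{3}$, this forces the segment from $r$'s centre to $r_L$'s centre to make angle at most $\arctan\sqrt{3}=60^{\circ}$ with the downward vertical from $r$.

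\smallskip

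The main argument is a descent through obstructors. Set $\tau_0 := r_L$. While $r$ does not see $\tau_i$, some robot's body blocks every sightline from $\partial\mathcal{C}_r$ to $\partial\mathcal{B}_{\tau_i}$; choose as $\tau_{i+1}$ an obstructor whose centre is closest to $r$. I claim (i) $\tau_{i+1}$'s centre is strictly south of $r$'s centre, and (ii) $\|\tau_{i+1}-r\|<\|\tau_i-r\|$. Part (ii) is standard: an obstructing body must lie on the near side of $\tau_i$ inside the sightline channel. Part (i) is the heart of the proof. The region of admissible sightlines from $\partial\mathcal{C}_r$ to $\partial\mathcal{B}_{\tau_i}$ is bounded by the two external common tangents of $\mathcal{C}_r$ (radius $c<1$) and $\mathcal{B}_{\tau_i}$ (radius $1$, centre south of $r$), and this region lies in the open southern half-plane relative to $r$. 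An obstructing body of radius $1$ must reach into this region, but the linear inequality $x\cos\alpha+y\sin\alpha\leq 1$ for body-line intersection along a direction of angle $\alpha\leq 60^{\circ}$ from the downward vertical, together with the no-overlap constraint $x^2+y^2\geq 4$, is incompatible with $y>0$: at $y=0$ it forces $x\cos\alpha\leq 1$ and $x\geq 2$, whence $\cos\alpha\leq 1/2$, i.e., $\alpha=60^{\circ}$ and $(x,y)=(\pm 2,0)$; but at this boundary the obstructor's body is only tangent to the central line and, thanks to $c<1$, covers only one side of the channel, so the slim camera still has unobstructed sightlines available. This rules out obstructors whose centres lie at or above $r$'s latitude, establishing (i).

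\smallskip

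Because the sequence of distances $\|\tau_i-r\|$ is strictly decreasing over distinct robots in a finite set, the descent terminates at some $\tau_k$ that $r$ actually sees; by construction, $\tau_k$'s centre is strictly south of $r$'s centre, which is what we needed. The main obstacle is showing the ``strictly south'' conclusion persists inductively despite the fact that intermediate targets $\tau_i$ may subtend an angle from vertical larger than the original $60^{\circ}$; the resolution is that once $\|\tau_i-r\|$ falls below the threshold $3+c$, no body of radius $1$ can fit inside the sightline channel between $\mathcal{C}_r$ and $\mathcal{B}_{\tau_i}$, so the descent must terminate, and the same tangent-channel incompatibility reasoning (with the appropriate $\alpha$) applied at each step keeps each new target south of $r$. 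A separate (straightforward) preservation argument handles rounds $t>T_1$: under the invariant, any \texttt{off} robot activated becomes \texttt{defeated} or \texttt{subordinate} and therefore does not move, so the only motions are those of subordinates, which can only replace one south-and-visible robot by another south-and-visible robot in $r$'s view.
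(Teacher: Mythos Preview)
Your descent-through-obstructors approach is considerably more involved than the paper's, and the inductive step has a gap you flag but do not actually close. The paper shows directly that $r$ sees $r_L$ itself, and the argument is short: since \texttt{off} robots move only vertically, every \texttt{off} robot (and the leader) remains in the vertical strip $\mathcal{S}$ of horizontal width at most $D$ that contains the initial configuration. If $r$ were false southmost, every robot that $r$ can see lies on $r$'s horizontal line or to its north; the worst shadow such a robot can cast to the south (obstructor touching $r$ on the same horizontal, point camera) is a cone whose intersection with $\mathcal{S}$ extends at most $D/\sqrt{3}$ below $r$, by a one-line similar-triangles computation. Since $r_L$ sits at least $D/\sqrt{3}$ south of $r$ and inside $\mathcal{S}$, it lies outside this region and is therefore visible---contradiction. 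No iteration over obstructors is needed, and the strip width $D$ is what does the real work.

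The gap in your argument is genuine. Your incompatibility inequality for step (i) uses $\alpha\le 60^{\circ}$ in an essential way (this is precisely the threshold at which $\cos\alpha=\tfrac12$ meets the no-overlap bound $x\ge 2$), but nothing forces the angle from $r$ to $\tau_i$ to stay below $60^{\circ}$ for $i\ge 1$: an obstructor of the sightline to $r_L$ can sit near the edge of the tangent channel and therefore subtend a strictly larger angle than $r_L$ itself. Your stated resolution covers only the regime $\|\tau_i-r\|<3+c$, leaving the case ``$\|\tau_i-r\|$ large and angle $>60^{\circ}$'' unaddressed; in that regime your channel argument does not exclude an obstructor of $\tau_i$ whose centre lies on $r$'s horizontal line. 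The paper sidesteps this entirely by bounding the invisible region globally via the strip width rather than chasing obstructors one at a time. Your final ``preservation'' paragraph is also unnecessary once you argue about $r$'s first activation after $T_1$: neither $r$ nor $r_L$ moves in the interim, so both the $D/\sqrt{3}$ separation and the strip containment persist to that round automatically.
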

  
\begin{proof}
  Let $\mathcal{S}$ be the smallest vertical strip containing the initial configuration. The horizontal width of $\mathcal{S}$ is $\leq D$. At any time during the execution of the algorithm, an \texttt{off} colored robot must be inside $\mathcal{S}$. This is because an \texttt{off} colored robot only moves vertically. Consider the set $\mathcal{O}$ of all \texttt{off} colored robots in the configuration at round $t$. Take any $r \in \mathcal{O}$. Let $t'$ be the earliest round $\geq T_1$ when $r$ becomes active. We claim that $r$ will see some robot to its south and hence will change its color. For the sake of contradiction, assume that $r$ is a false southmost robot at round $t'$. Now consider the region in $\mathcal{S}$ to the south of $r$ that is invisible to it. Since $r$ does not see any robot on its south, the vertical extent of this region will be maximum when $r$ is obstructed by a robot $r'$ on the same horizontal line at a distance 2 and the radius of the camera is 0 (i.e., a point camera). Such a situation is shown in Fig. \ref{fig: no_false}. The vertical extent of the obstructed region is $|DE|$. Since the triangles $\Delta ABC$ and $\Delta ADE$ are similar, $|DE|/|AE| = |BC|/|AC|$ $\implies |DE| = \frac{|AE|}{\sqrt{3}}$  $\leq \frac{D}{\sqrt{3}}$. Therefore, any robot $\geq \frac{D}{\sqrt{3}}$ to the south of $r$ must be visible to it. So $r_L$ is visible to $r$. This contradicts our assumption that $r$ is false southmost. 
\end{proof}

 \begin{figure}[thb!]
     \centering
    \includegraphics[width=.5\linewidth]{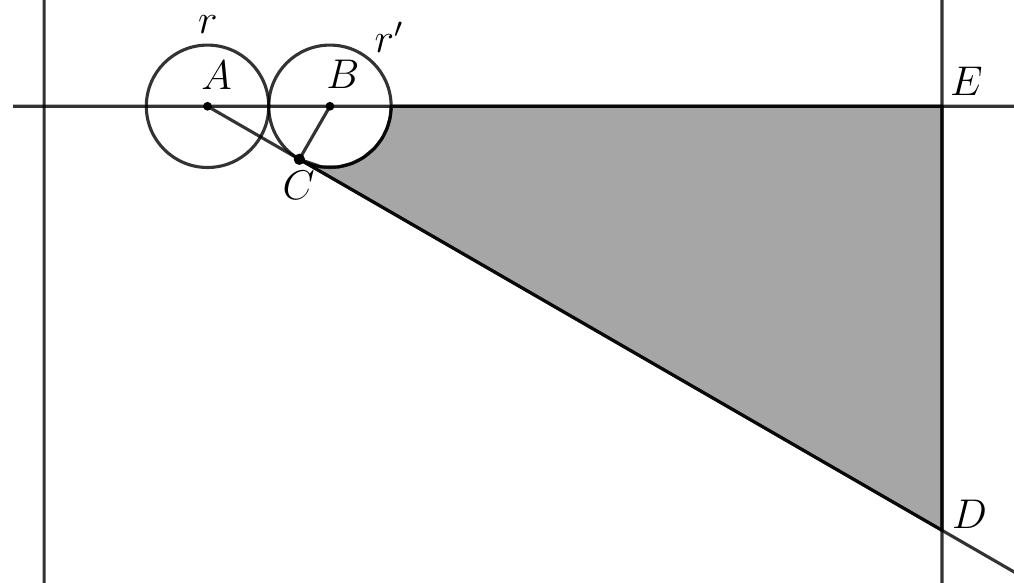}
     \caption[Short Caption]{Illustration supporting the proof of Lemma \ref{lem: no false}. }
\label{fig: no_false}
\end{figure}

  It follows from Lemma \ref{lem: no false} that if there remains any \texttt{off} colored robot after leader election, then the next round it is activated, it will change its color to \texttt{defeated} or \texttt{subordinate}. Any \texttt{defeated} colored robot that can see the leader or any \texttt{subordinate} colored robot will change its color to \texttt{subordinate}. These robots will move southwards to build the base chain. The robots will come to $L_{6}$ sequentially. After a robot comes to $L_{6}$, it will then go to  $L_0$ and place itself on the base chain if there is space. Otherwise, the base chain will be first expanded before it can go to $L_0$. From the discussions in Section \ref{sec: chain formation}, it follows that the robots will carry out these steps correctly. Therefore, a base chain consisting of $n-1$ robots will be built in this way. Then the last robot will go directly to its place in the corresponding mutually visible chain and change its color to \texttt{final}. Then the $n-1$ robots on the base chain will sequentially move vertically to place themselves in their respective positions on the corresponding chain. Again, it follows from the discussions in Section \ref{sec: chain formation} that the robots will be able to execute these steps correctly. Therefore, we have the following result.
  
  \begin{theorem}
  There is a round $T_2 > T_1$ when a base chain consisting of $n-1$ robots is built. Then there is a round $T_3 > T_2$ when the corresponding chain of $n$ robots is built.
 \end{theorem}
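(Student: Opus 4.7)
The strategy is to track a sequence of monotonically improving configurations after round $T_1$ and argue, via a collection of progress lemmas, that each phase terminates in finitely many rounds. Since the scheduler is $\mathcal{SSYNC}$ and each robot is activated infinitely often, it suffices to show that in each phase, whenever a relevant robot is activated it either makes measurable progress or the phase is already complete.

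For the first part, I would proceed in the following substages. First, invoking Lemma \ref{lem: no false}, any \texttt{off} robot that becomes active after $T_1$ must see a robot on its south and therefore changes its color to \texttt{defeated}; hence after finitely many rounds every robot except $r_L$ has color \texttt{defeated} or \texttt{subordinate}. Next, a \texttt{defeated} robot that can see $r_L$ (with color \texttt{leader}) or any \texttt{subordinate} robot switches to \texttt{subordinate}; since visibility propagates transitively as robots change color and move downward, one can show by induction on vertical rank that every non-leader robot eventually becomes \texttt{subordinate}. The third substage is the funnel $L_{10}\to L_8\to L_6\to L_0$: I would introduce a potential counting the number of robots strictly north of $L_{10}$, plus the number on $L_{10}$, on $L_8$, and on $L_6$, weighted so that each legal activation strictly decreases the potential (moves to $L_{10}$ reduce the top term; the closest-to-$Y$-axis rule plus the collision-avoidance analysis of Fig.\ \ref{fig: collision} guarantees that exactly one robot at a time advances to $L_6$ via $L_8$). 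Finally, for placement, once a robot reaches $L_6$ it either finds a slot on the base chain (in which case it descends via $L_2$ to $L_0$ in at most three activations) or it triggers an expansion by setting color \texttt{no space}. The expansion protocol is itself a finite subroutine: $r_L$ switches to \texttt{expand}, every base-chain robot lifts to $L_4$ with a strictly larger stretch (which, using Lemma \ref{lemma3}, is well defined and uniformly chosen by $r_1,r_1'$ aligning with $r_2,r_2'$), then $r_L$ reverts to \texttt{leader} and the robots descend to the new base chain. The number of expansions is bounded because the stretch strictly increases at each expansion while the number of admissible slots on the base chain grows, so after finitely many expansions the base chain accommodates all $n-1$ robots; this defines $T_2$.

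For the second part, after $T_2$ the last robot $r$ (identifiable by the absence of any \texttt{subordinate} robot off the base chain) moves directly to its final position on the west branch of the mutually visible chain and sets color to \texttt{final}. From there I would argue by induction along the west branch: if $r,r_k',\ldots,r_{i+1}'$ have color \texttt{final}, then the robot $r_i'$, upon activation, observes that every robot on its west is \texttt{final} and that $r_L$ still has color \texttt{leader}, so it executes the two-step move (first 2 units north to regain visibility of $r_L$ and $r_1'$, then to its final position computed via Lemma \ref{lemma3}) and sets color \texttt{final}. When $r_1'$ completes, $r_L$ itself becomes \texttt{final}, triggering the symmetric induction along the east branch using the alignment rule with the already-final $r_i'$. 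Each step here is deterministic and requires a bounded number of activations, so the process terminates at some round $T_3>T_2$.

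The main obstacle, in my view, is not any single geometric computation but the careful management of \emph{concurrent} activations during the funnel and expansion stages, since $\mathcal{SSYNC}$ allows arbitrary subsets of robots to act simultaneously. The critical invariants I would need to maintain are: (i) at most one robot at a time occupies the region between $L_6$ and $L_0$ that is not already on the base chain, so placement computations cannot be invalidated by a concurrent mover; (ii) during an expansion, no \texttt{subordinate} robot on $L_0$ leaves before $r_L$ has color \texttt{expand} and no robot descends from $L_4$ before $r_L$ has reverted to \texttt{leader} and a \texttt{subordinate} robot is visible on $L_6$; and (iii) the collision-avoidance argument of Fig.\ \ref{fig: collision} remains valid across all adversarial activation patterns. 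Establishing these invariants rigorously, rather than the termination argument itself, is where the bulk of the proof effort lies; termination then follows from a straightforward potential-function argument combined with fair activation.
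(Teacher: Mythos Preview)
Your proposal is correct and follows essentially the same route as the paper: track the algorithm phase by phase (color conversion via Lemma~\ref{lem: no false}, the $L_{10}\to L_8\to L_6\to L_0$ funnel with the collision-avoidance argument of Fig.~\ref{fig: collision}, base-chain placement with finitely many expansions, then the sequential west-then-east lift to the mutually visible chain). In fact, the paper does not give a standalone proof of this theorem at all; it simply summarizes these phases in the paragraph preceding the statement and defers correctness to ``the discussions in Section~\ref{sec: chain formation},'' so your explicit potential function for the funnel and your enumerated invariants (i)--(iii) are more structured than anything the paper writes down, though they formalize exactly the informal claims made there.
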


  By Lemma \ref{lem: chain vis} it follows that any two robots are visible to each other in the configuration at round $T_3$.  
  
  \begin{theorem}
  The final configuration obtained at round $T_3$ is a mutually visible configuration.
 \end{theorem}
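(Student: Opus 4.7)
The plan is to reduce the theorem directly to Lemma \ref{lem: chain vis}. First I would invoke the preceding theorem to assert that at round $T_3$ the $n$ robots occupy consecutive vertices of a regular chain with some stretch $d$ and turning angle $\theta$. The entire claim then follows from Lemma \ref{lem: chain vis}, provided I can verify its hypothesis $\sin\theta > \frac{1-c}{d}$ on this final chain.

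Next I would check that hypothesis. The parameters of every chain produced during the execution satisfy $\sin\theta = \frac{1}{d}$: this is the design choice stated immediately after Lemma \ref{lem: chain vis}, and it is preserved across expansions, since whenever a \texttt{no space} event triggers a new chain, the robots compute the new stretch from the distance $\sigma$ between the leader and its nearest base-chain neighbour via the formula $d = \frac{2\sigma^2}{\sqrt{4\sigma^2-1}}$ of Lemma \ref{lemma3} --- which is exactly the algebraic consequence of the constraint $\sin\theta = \frac{1}{d}$. Since the camera radius $c$ is strictly positive, we have $1-c < 1$, and therefore $\sin\theta = \frac{1}{d} > \frac{1-c}{d}$, so the hypothesis of Lemma \ref{lem: chain vis} is satisfied on the chain at $T_3$.

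With the hypothesis in hand, Lemma \ref{lem: chain vis} immediately yields that every pair of robots on the chain at round $T_3$ is mutually visible, which is exactly the statement of the theorem. I do not expect a genuine obstacle here; the only point that requires care, really a bookkeeping check against the algorithm description in Section \ref{sec: chain formation}, is confirming that every expansion preserves the invariant $\sin\theta = \frac{1}{d}$ and that the vertical lift from the base chain to the corresponding chain does in fact realize a regular chain of that stretch and turning angle, so that Lemma \ref{lem: chain vis} is genuinely applicable.
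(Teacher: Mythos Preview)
Your proposal is correct and matches the paper's own argument: the paper's proof is the one-line remark ``By Lemma \ref{lem: chain vis} it follows that any two robots are visible to each other in the configuration at round $T_3$,'' relying on the design choice $\sin\theta = \tfrac{1}{d} > \tfrac{1-c}{d}$ stated just after Lemma \ref{lem: chain vis}. Your write-up simply spells out the verification of that hypothesis in slightly more detail, which is fine.
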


   \begin{theorem}
  The algorithm requires $O(n(1+\mathfrak{e}(n)))$ epochs to complete, where $\mathfrak{e}(n)$ is the number of base chain expansions needed for $n$ robots.
 \end{theorem}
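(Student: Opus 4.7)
The plan is to partition the execution into three parts and bound the epoch count of each: Stage 1 (leader election), the $(1+\mathfrak{e}(n))$ passes of base-chain construction together with the $\mathfrak{e}(n)$ expansions, and the final placement onto the mutually visible chain. I aim to show each pass contributes $O(n)$ epochs, so the totals add up to $O(n(1+\mathfrak{e}(n)))$.

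For Stage 1, I would refine the potential-function argument from the proof of Theorem \ref{th 1}. Each robot transitions to \texttt{defeated} at most once, and in Cases 1 and 3 of that proof the transition happens in a single activation. For Case 2 (a false southmost robot with a \texttt{defeated} obstructor immediately to its north), a single southward move of $2$ units permanently removes that obstructing pair from $\mathcal{B}(r)$ because $\delta\geq 2$ and $1-c<1$. Combining these observations, each robot's Stage 1 activity consists of $O(1)$ significant activations, so the decreasing-potential argument can be amortised to $O(n)$ epochs before the unique live robot emerges. The leader-elect's final southward march to establish the $\max\{10, D/\sqrt{3}\}$ separation takes $O(D)$ epochs, which is $O(1)$ since $D$ is a constant known in advance and $\delta\geq 2$. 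Hence Stage 1 completes in $O(n)$ epochs.

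For Stage 2, I would observe that the staging on the horizontal lines $L_{10}, L_8, L_6$ enforces sequential descent: at any time at most one \texttt{subordinate} robot is below $L_{10}$, and once a robot reaches $L_6$ it moves to its slot on the base chain in $O(1)$ epochs (a bounded number of 2-unit legs, each taking one epoch). A full construction pass over the $n-1$ non-leader robots therefore requires $O(n)$ epochs. When a \texttt{no space} signal triggers an expansion, the leader flips to \texttt{expand}, the base-chain robots sequentially climb to $L_4$ via $L_2$ at their new $X$-coordinates, the leader flips back to \texttt{leader}, the trigger robot reverts to \texttt{subordinate}, and the $L_4$ robots descend to $L_0$; each substep is sequentialised the same way as the original construction, so one expansion also costs $O(n)$ epochs. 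With $\mathfrak{e}(n)$ expansions interleaved with the construction passes, this entire phase costs $O(n(1+\mathfrak{e}(n)))$ epochs. Finally, the last robot walks directly to its \texttt{final} slot in $O(1)$ epochs and the remaining $n-1$ robots sequentially attain \texttt{final} (each requiring only the $O(1)$-epoch ``move 2 units north, align horizontally'' procedure described in Section \ref{sec: chain formation}), for another $O(n)$ epochs. Summing the three contributions yields $O(n(1+\mathfrak{e}(n)))$.

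The main obstacle I expect is the amortised accounting for Stage 1: the raw decreasing-potential argument of Theorem \ref{th 1} only yields $O(|\mathcal{B}|)=O(n^2)$ epochs, so the $O(n)$ bound requires the per-robot reasoning above, with particular care that false southmost robots do not chain together into long sequences of southward moves before being defeated. A secondary but routine obstacle is verifying that the sequentialisation invariants established in Section \ref{sec: chain formation} (at most one robot in transit between consecutive $L_{\cdot}$ lines, no collisions during horizontal movements on $L_8$ or $L_2$) do not introduce hidden waits of more than $O(1)$ epochs per robot per substep.
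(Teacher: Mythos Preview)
Your three-phase decomposition with $O(n)$ epochs for leader election, $O(n)$ per construction pass and per expansion, and $O(n)$ for the final chain formation is exactly the paper's argument, and in fact the paper's proof is terser still---it simply asserts that the $O(n)$ bound for Stage~1 ``follows from the proof of Theorem~\ref{th 1}'', so your flagging of the $O(n)$-versus-$O(n^2)$ gap in the potential argument is well placed. One small correction: the sequentialisation guarantee is only that at most one robot is on $L_6$ at a time (several robots can accumulate on $L_8$, as Fig.~\ref{fig: 2l8} illustrates), not that at most one \texttt{subordinate} robot is below $L_{10}$, but this does not affect your per-robot $O(1)$ accounting once $L_6$ is reached.
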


 \begin{proof}
 It follows from the proof of Theorem \ref{th 1} that the leader is elected within $O(n)$ epochs. The non-leader robots will now sequentially place themselves on the base chain. If no expansion is needed, then it takes $O(1)$ epochs for a robot to place itself on the base chain. If an expansion is needed, then it has to wait on $L_6$ for the expansion to complete. An expansion is completed within $O(n)$ epochs. So construction of the base chain takes $O(n(1+\mathfrak{e}(n)))$ epochs. Formation of the mutually visible chain from the base chain is done within $O(n)$ epochs. So in total, the algorithm requires $O(n(1+\mathfrak{e}(n)))$ epochs to complete.
 \end{proof}

\section{Experimental Evaluation}

\subsection{Simulation Framework}

We designed a Python-based simulation framework to carry out the experiments. The key component of the simulation framework is an algorithm that determines the view of a robot in our adopted visibility model. In particular, for a set $\mathfrak{R}$ of opaque fat robots with slim omnidirectional camera, we need to determine if a robot $r \in \mathfrak{R}$ can see another robot $r' \in \mathfrak{R}$. We provide a brief description of the algorithm in the following.

Without loss of generality, assume that $r$ and $r'$ are on the same vertical line. Draw the direct tangents to $\mathcal{C}_{r}$ and $\mathcal{B}_{r'}$. Let $A, A', B, B'$ be the points where the tangents touch the disks as shown in Fig. \ref{see1}.  Notice that only the arc between $A'$ and $B'$ can be visible to $r$. Let $\mathcal{L} = \{ r'' \in \mathfrak{R} \setminus \{r,r'\} \mid$ $ \mathcal{B}_{r''}$ intersects the line segment $ AA'\}$ and $\mathcal{R} = \{ r'' \in \mathfrak{R} \setminus \{r,r'\} \mid$ $ \mathcal{B}_{r''}$ intersects the line segment $ BB'  \}$. We call $\mathcal{L}$ the set of \textit{left obstructions} of $r$ and $\mathcal{R}$ the set of \textit{right obstructions} of $r$. Notice that only the robots of $\mathcal{L} \cup \mathcal{R}$ will cause obstructions. So we can ignore all the other robots. For each $r'' \in \mathcal{L} \cup \mathcal{R}$, draw direct tangents to $\mathcal{C}_{r}$ and $\mathcal{B}_{r''}$. The region between the two tangents as shown in Fig. \ref{see2} is called the \textit{type I bad region defined by} $r''$. If any portion of the arc $A'B'$ lies in this region, then trim off that portion. If any robot $r'' \in \mathcal{R} \setminus \{r,r'\}$ intersects both $AA'$ and $BB'$, i.e., we have $\mathcal{L} \cap \mathcal{R} \neq \emptyset$, then $r'$ is not visible to $r$. This is because in that case, $\mathcal{B}_r'$ lies entirely inside the bad region defined by $r''$. So now assume that $\mathcal{L} \cap \mathcal{R} = \emptyset$. For each pair $(r'', r''') \in \mathcal{L} \times \mathcal{R}$, draw the transverse tangents to $\mathcal{B}_{r''}$ and $\mathcal{B}_{r'''}$ (see Fig. \ref{see3}). The regions between the two tangents as shown in Fig. \ref{see3} are called the \textit{type II bad regions defined by} $r''$ and $r'''$. If any portion of the arc $A'B'$ lies in these regions, then trim off that portion. The algorithm decides that  $r'$ is visible to $r$ if and only if the arc $A'B'$ is not completely trimmed out.

% Let us call the arc between $B$ and $B'$  the \textit{face of $r$ with respect to $r'$}.

% The tangents divide the plane into four parts, two of which contain $r''$ and $r'''$. 

% A point on the arc $A'B'$ is visible to $r$ if and only if it survives these trimmings.

 \begin{figure}[thb!]
     \centering
     \begin{subfigure}[b]{0.31\linewidth}
         \centering
         \includegraphics[width=\linewidth]{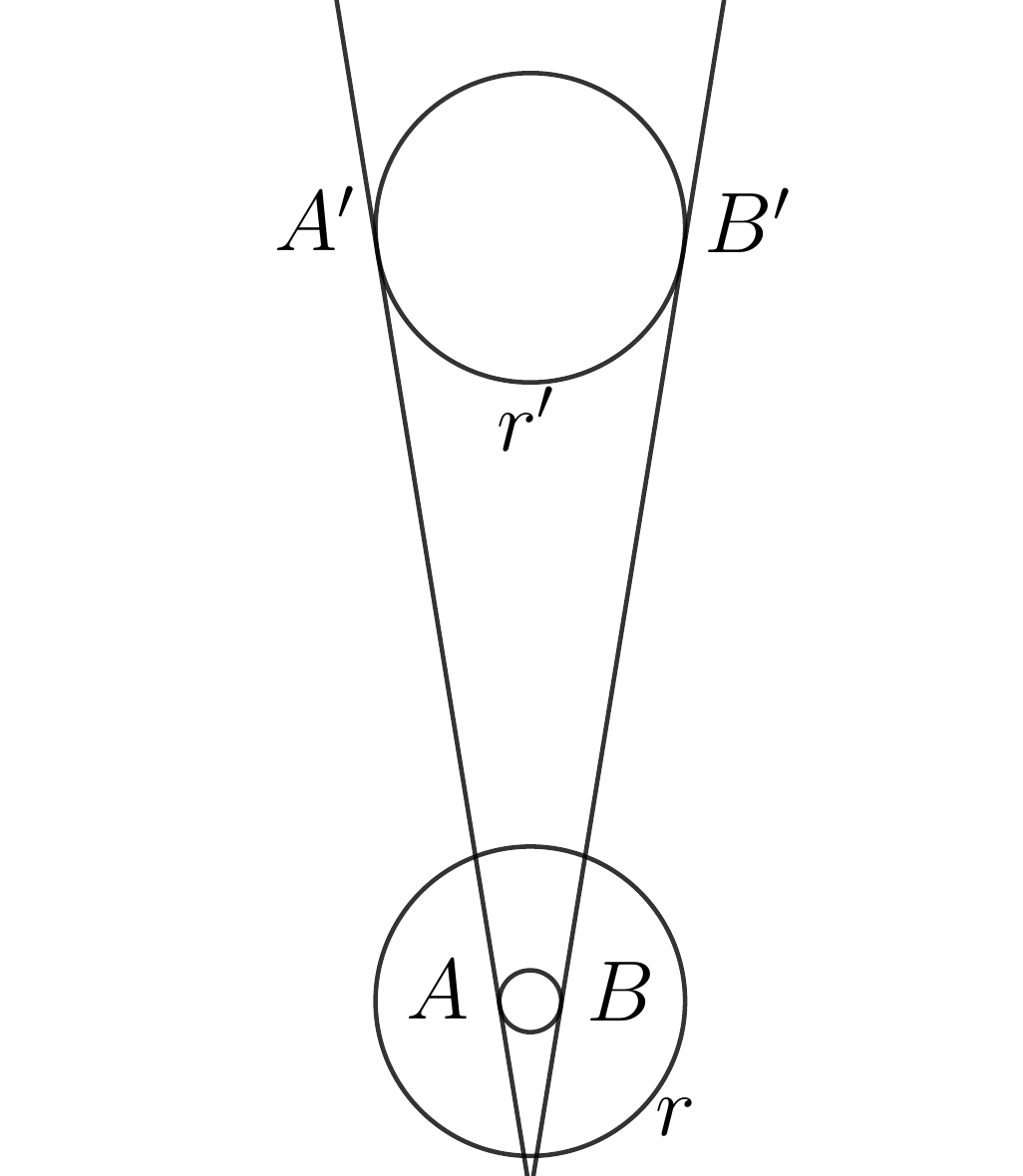}
         \caption{}
          \label{see1}
     \end{subfigure}
     \hfill
     \begin{subfigure}[b]{0.31\linewidth}
         \centering
         \includegraphics[width=\linewidth]{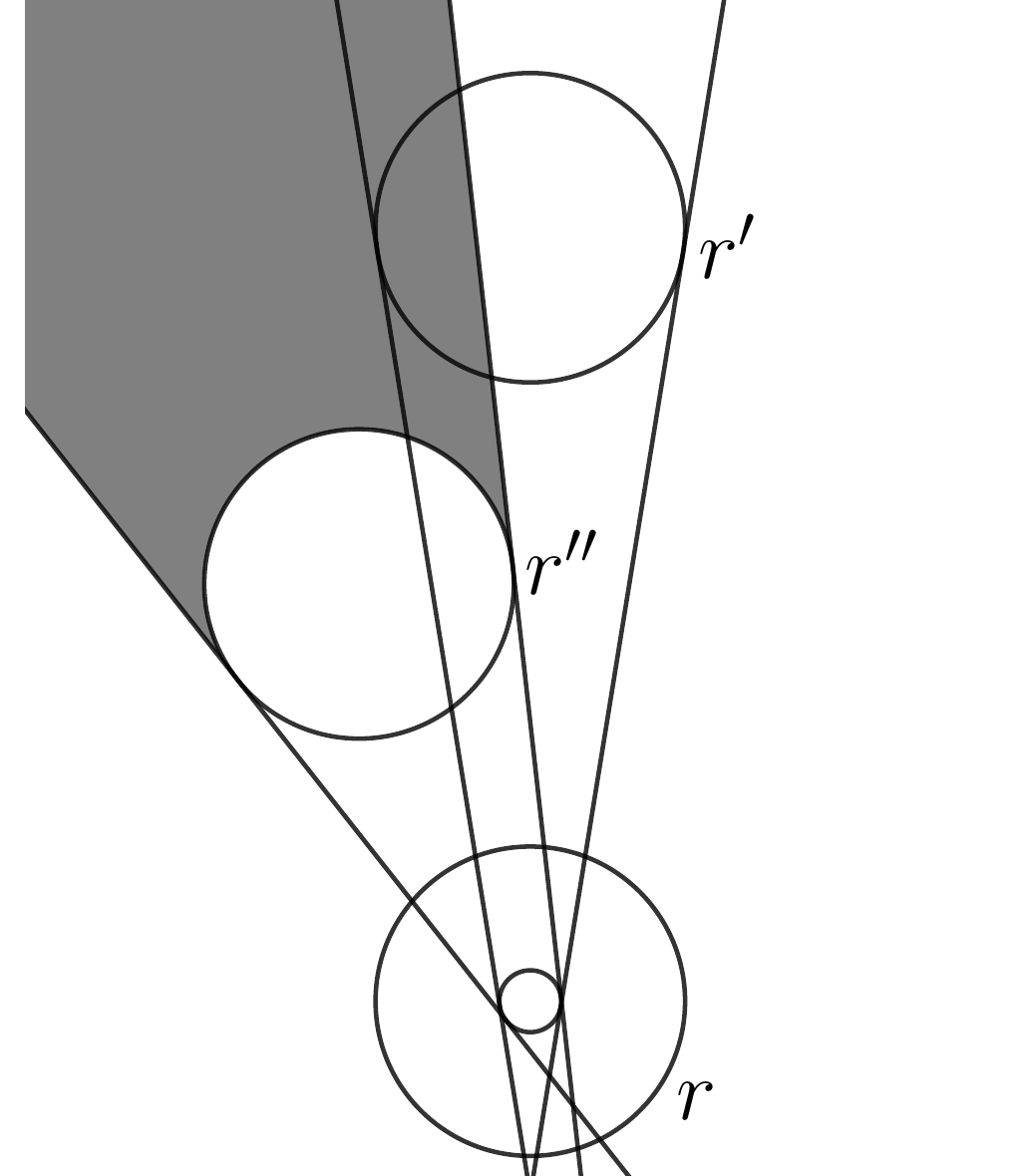}
         \caption{}
          \label{see2}         
     \end{subfigure}
     \hfill
     \begin{subfigure}[b]{0.31\linewidth}
         \centering
         \includegraphics[width=\linewidth]{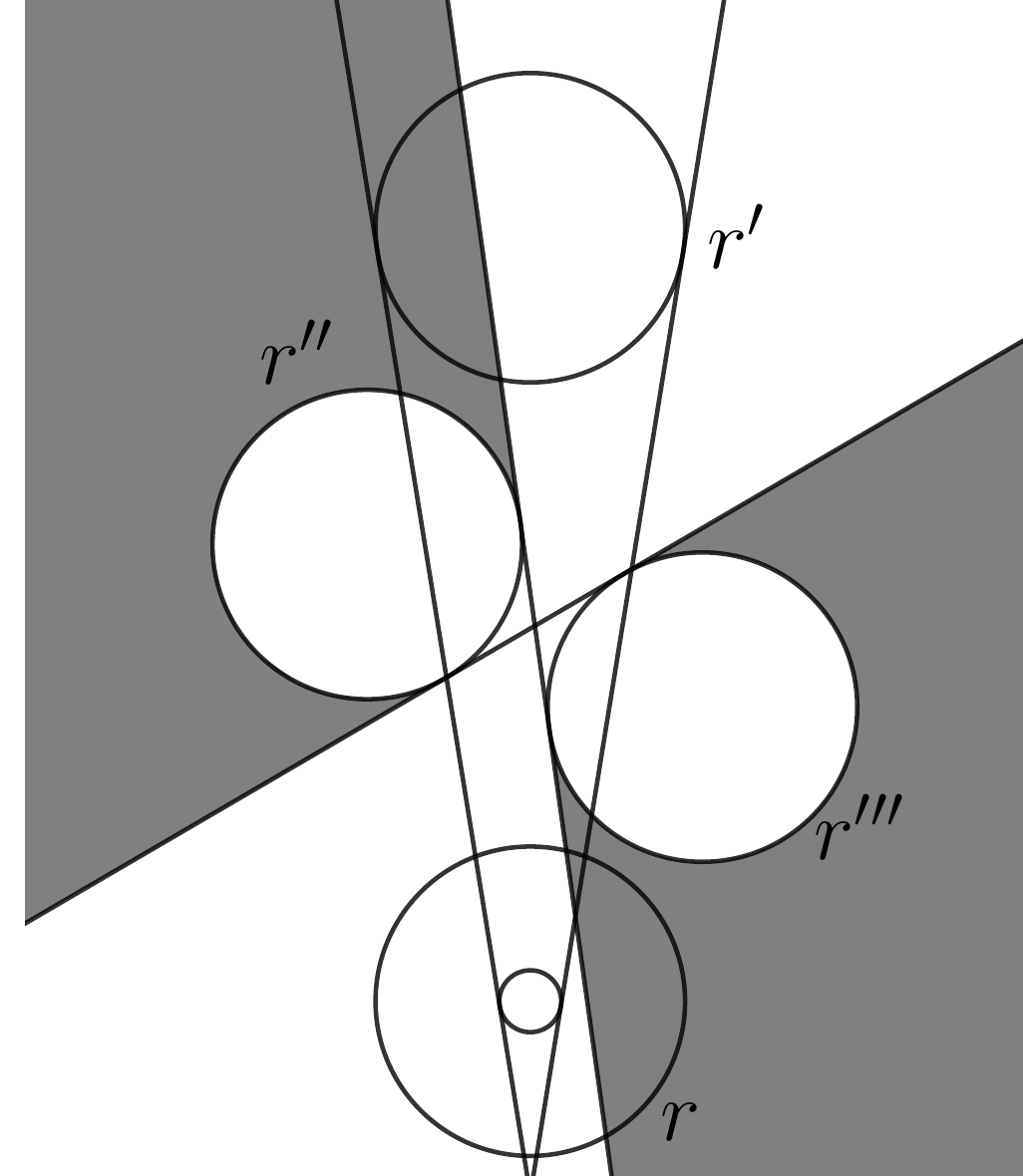}
         \caption{}
          \label{see3}         
     \end{subfigure}
\caption[Short Caption]{(a) Only the arc between $A'$ and $B'$ can be visible to $r$. (b) Type I bad region defined by $r''$. (c) Type II bad region defined by $r''$ and $r'''$.}
%  \label{fig: turning}
\end{figure}

\begin{theorem}
The  algorithm described above is correct.
\end{theorem}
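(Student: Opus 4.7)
The proof needs to establish both soundness (if the algorithm outputs ``visible'' then $r'$ is indeed visible to $r$) and completeness (if the algorithm outputs ``not visible'' then $r'$ is not visible to $r$). My plan is to first reduce the problem to the study of unobstructed chords from $\partial\mathcal{C}_r$ to the arc $A'B'$ and then show that the algorithm's trimming operations precisely capture which such chords are blocked.

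The first preliminary step is to justify the restriction to the arc $A'B'$. Any segment from a point of $\partial\mathcal{C}_r$ to a point of $\partial\mathcal{B}_{r'}$ that does not re-enter $\mathcal{C}_r$ must lie inside the visibility wedge $W$, defined as the convex hull of $\mathcal{C}_r \cup \mathcal{B}_{r'}$. The boundary of $W$ consists of the two direct tangent segments $AA'$, $BB'$ together with arcs of $\mathcal{C}_r$ and $\mathcal{B}_{r'}$; the portion of $\partial\mathcal{B}_{r'}$ lying on $\partial W$ is exactly the arc $A'B'$, so visibility reduces to the existence of an unobstructed chord from $\partial\mathcal{C}_r$ to a point of $A'B'$. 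The second preliminary step is to show that only bodies in $\mathcal{L}\cup\mathcal{R}$ can obstruct such chords. Since $W$ has perpendicular width increasing monotonically from $2c$ at the $\mathcal{C}_r$ end to $2$ at the $\mathcal{B}_{r'}$ end, no disk of unit radius can be contained strictly inside $W$ without meeting $AA'$ or $BB'$ (except $\mathcal{B}_{r'}$ itself), so every obstructing body indeed crosses $AA'$ or $BB'$.

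With these reductions, I would treat the three cases of the algorithm. If some $r''\in\mathcal{L}\cap\mathcal{R}$ exists, its body meets both tangents and, by convexity, separates $\mathcal{C}_r$ from arc $A'B'$ inside $W$, so no sight line exists and returning ``not visible'' is correct. Otherwise, I would show that the type I bad region of $r''$ is precisely the umbra of $\mathcal{B}_{r''}$ when illuminated by $\mathcal{C}_r$: a point $p$ lies in that region iff every segment from $\partial\mathcal{C}_r$ to $p$ meets $\mathcal{B}_{r''}$. Analogously, the type II bad region of a pair $(r'',r''')\in\mathcal{L}\times\mathcal{R}$ is precisely the set of $p$ such that every segment from $\partial\mathcal{C}_r$ to $p$ meets $\mathcal{B}_{r''}\cup\mathcal{B}_{r'''}$; this follows because the two transverse tangents to $\mathcal{B}_{r''}$ and $\mathcal{B}_{r'''}$ cut $\partial\mathcal{C}_r$ into two complementary arcs, one whose chords to $p$ are blocked by $r''$ and the other whose chords are blocked by $r'''$. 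Soundness of trimming follows immediately from these geometric characterizations.

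The crux is the completeness direction, i.e., arguing that if a point $p\in A'B'$ survives all trimming then there really is an unobstructed chord $qp$. For each $r''\in\mathcal{L}$, let $\beta_{r''}\subseteq\partial\mathcal{C}_r$ be the arc of camera points from which the segment to $p$ crosses $\mathcal{B}_{r''}$; since $p$ is not in the type I region of $r''$, $\beta_{r''}$ is a proper sub-arc lying on the ``left'' side of $\partial\mathcal{C}_r$ (the side facing $\mathcal{L}$-obstructors). A monotonicity argument in tangent geometry shows that the arcs $\{\beta_{r''}\}_{r''\in\mathcal{L}}$ are nested in the sense that their union equals $\beta_{r^{*}}$ for the single ``worst'' left obstructor $r^{*}\in\mathcal{L}$; the symmetric statement holds for a worst right obstructor $r^{**}\in\mathcal{R}$. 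The hypothesis that $p$ escapes the type II region of the pair $(r^{*},r^{**})$ then says that $\beta_{r^{*}}\cup\beta_{r^{**}}\neq\partial\mathcal{C}_r$, yielding a camera point $q$ from which the chord $qp$ is blocked by no obstructor, hence realising visibility of $r'$ from $r$.

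The main obstacle is precisely this nestedness/monotonicity argument for the blocked camera arcs within $\mathcal{L}$ (and within $\mathcal{R}$), which guarantees that it is enough for the algorithm to consider pairs of obstructors rather than arbitrary subsets. Establishing this requires a careful analysis of how the transverse tangent lines from a fixed target $p$ to two left-obstructor disks compare, exploiting convexity of each $\mathcal{B}_{r''}$ and the fact that all left obstructors share the same side of $W$; once this monotonicity is in hand, the rest of the proof assembles from the geometric characterizations above.
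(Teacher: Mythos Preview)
Your plan follows the same high-level strategy as the paper: restrict attention to the arc $A'B'$, argue that only bodies in $\mathcal{L}\cup\mathcal{R}$ can obstruct, dispose of the case $\mathcal{L}\cap\mathcal{R}\neq\emptyset$, give geometric characterisations of the type~I and type~II bad regions, and for completeness reduce to a single ``worst'' left obstructor and a single ``worst'' right obstructor. The paper phrases the last part dually to you: instead of blocked camera arcs $\beta_{r''}\subseteq\partial\mathcal{C}_r$, it fixes $p$ and draws tangents \emph{from $p$} to each obstructor, letting $\ell_{\mathcal{L}}$ be the rightmost tangent to the bodies in $\mathcal{L}$ and $\ell_{\mathcal{R}}$ the leftmost tangent to those in $\mathcal{R}$. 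It then defines $\mathcal{U}_p$ as the wedge between $\ell_{\mathcal{L}}$ and $\ell_{\mathcal{R}}$ and argues that $p$ is visible iff $\mathcal{C}_r\cap\mathcal{U}_p\neq\emptyset$. Two observations connect this to the algorithm: $p$ lies in the type~I region of $r''\in\mathcal{L}$ iff $\mathcal{C}_r$ lies entirely to the left of the right tangent from $p$ to $\mathcal{B}_{r''}$, and $p$ lies in the type~II region of $(r'',r''')$ iff the right tangent from $p$ to $r''$ is to the right of the left tangent from $p$ to $r'''$.

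The concern with your route is the nestedness claim: you assert $\bigcup_{r''\in\mathcal{L}}\beta_{r''}=\beta_{r^{*}}$ for a single worst $r^{*}\in\mathcal{L}$. This is unlikely to hold in general. For two left obstructors at different depths along $AA'$, the shadow cones from $p$ have right tangents ordered one way but left tangents not necessarily ordered the same way (a body close to $p$ subtends a wide cone, one far from $p$ a narrow cone, and their centres can be angularly offset), so the arcs $\beta_{r''}$ cut from $\partial\mathcal{C}_r$ can overlap without being comparable. What \emph{does} hold, and what the paper exploits, is the one-sided containment
\[
\bigcup_{r''\in\mathcal{L}}\beta_{r''}\ \subseteq\ \{q:\ q\text{ lies to the left of }\ell_{\mathcal{L}}\},
\]
simply because each $\beta_{r''}$ lies to the left of its own right tangent $\ell_{r''}$, and $\ell_{r''}$ is no further right than $\ell_{\mathcal{L}}$. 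Surviving type~I for the obstructor defining $\ell_{\mathcal{L}}$ gives that $\mathcal{C}_r$ is not entirely to the left of $\ell_{\mathcal{L}}$; the symmetric statement holds on the right; and surviving type~II for the extremal pair gives that $\ell_{\mathcal{L}}$ is left of $\ell_{\mathcal{R}}$, so the wedge $\mathcal{U}_p$ is nonempty and meets $\mathcal{C}_r$. This dissolves what you flagged as the ``main obstacle'': you need no monotonicity or nestedness among the $\beta$-arcs, only this half-plane bound, which is immediate once you switch to the paper's tangent-from-$p$ viewpoint.
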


\begin{proof}

If  $\mathcal{L} \cap \mathcal{R} \neq \emptyset$, the algorithm decides that $r'$ is not visible to $r$. This is obviously correct. So now assume that $\mathcal{L} \cap \mathcal{R} = \emptyset$. Take any point $p$ on the arc $A'B'$. To prove the theorem, we have to show that $p$ is visible to $r$ if and only if it survives the trimmings. Now draw tangents from $p$ to the robots of $\mathcal{L}$ and let $\ell_{\mathcal{L}}$ be the rightmost one (see Fig. \ref{see4}). Similarly, draw tangents from $p$ to the robots of $\mathcal{R}$ and let $\ell_{\mathcal{R}}$ be the leftmost one (see Fig. \ref{see4}). Observe that the region that is to the right of $\ell_{\mathcal{L}}$ and the left of $\ell_{\mathcal{R}}$ (the shaded region in Fig. \ref{see4}) is the union of all unobstructed rays emanating from $p$. Let us denote this region by $\mathcal{U}_p$. So $p$ will be visible to $r$ if and only if $\mathcal{C}_r \cap \mathcal{U}_p \neq \emptyset$ (see Fig. \ref{fig: 3cases}). Recall that we have to show that $p$ is visible to $r$ if and only if it survives the trimmings. So we have to show that $p$ survives the trimmings if and only if $\mathcal{C}_r \cap \mathcal{U}_p \neq \emptyset$. Before we prove this, let us make some observations. Rigorous proofs of the observation are skipped.

\begin{itemize}
    \item \textbf{Observation 1}. Let $r'' \in \mathcal{L}$ and $\ell$ be the right tangent from $p$ to $\mathcal{B}_{r''}$. Then observe that $p$ is in the type I bad region defined by $r''$  if and only if $\mathcal{C}_r$ is completely to the left of $\ell$ (see Fig. \ref{see5} and \ref{see6}). Analogous observation holds when $r'' \in \mathcal{R}$.  
    
    % and $L$ be be the right direct tangent to $\mathcal{C}_r$ and $\mathcal{B}_{r'}$.

    \item \textbf{Observation 2}. Let $r'' \in \mathcal{L}$ and Let $r''' \in \mathcal{R}$. Also let $\ell''$ be the right tangent from $p$ to $\mathcal{B}_{r''}$ and $\ell'''$ be the left tangent from $p$ to $\mathcal{B}_{r'''}$. Then observe that $p$ is in the type II bad region defined by $r''$ and $r'''$  if and only if  $\ell''$ is to the right of $\ell'''$ (see Fig. \ref{see8} and \ref{see7} ). 
    
\end{itemize}

\begin{figure}[thb!]
     \centering
     \begin{subfigure}[b]{0.31\linewidth}
         \centering
         \includegraphics[width=\linewidth]{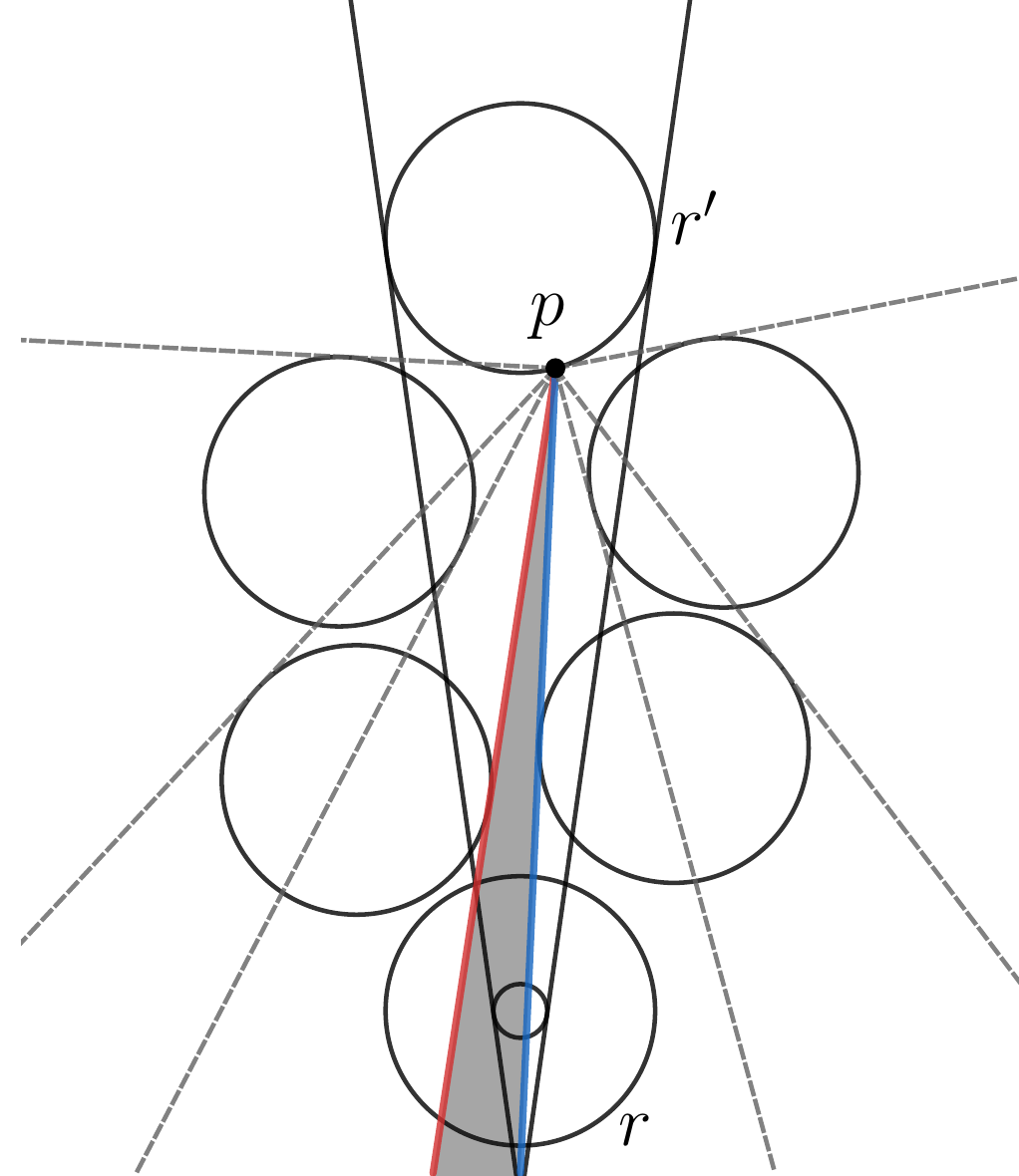}
         \caption{}
          \label{see4}
     \end{subfigure}
     \hfill
     \begin{subfigure}[b]{0.31\linewidth}
         \centering
         \includegraphics[width=\linewidth]{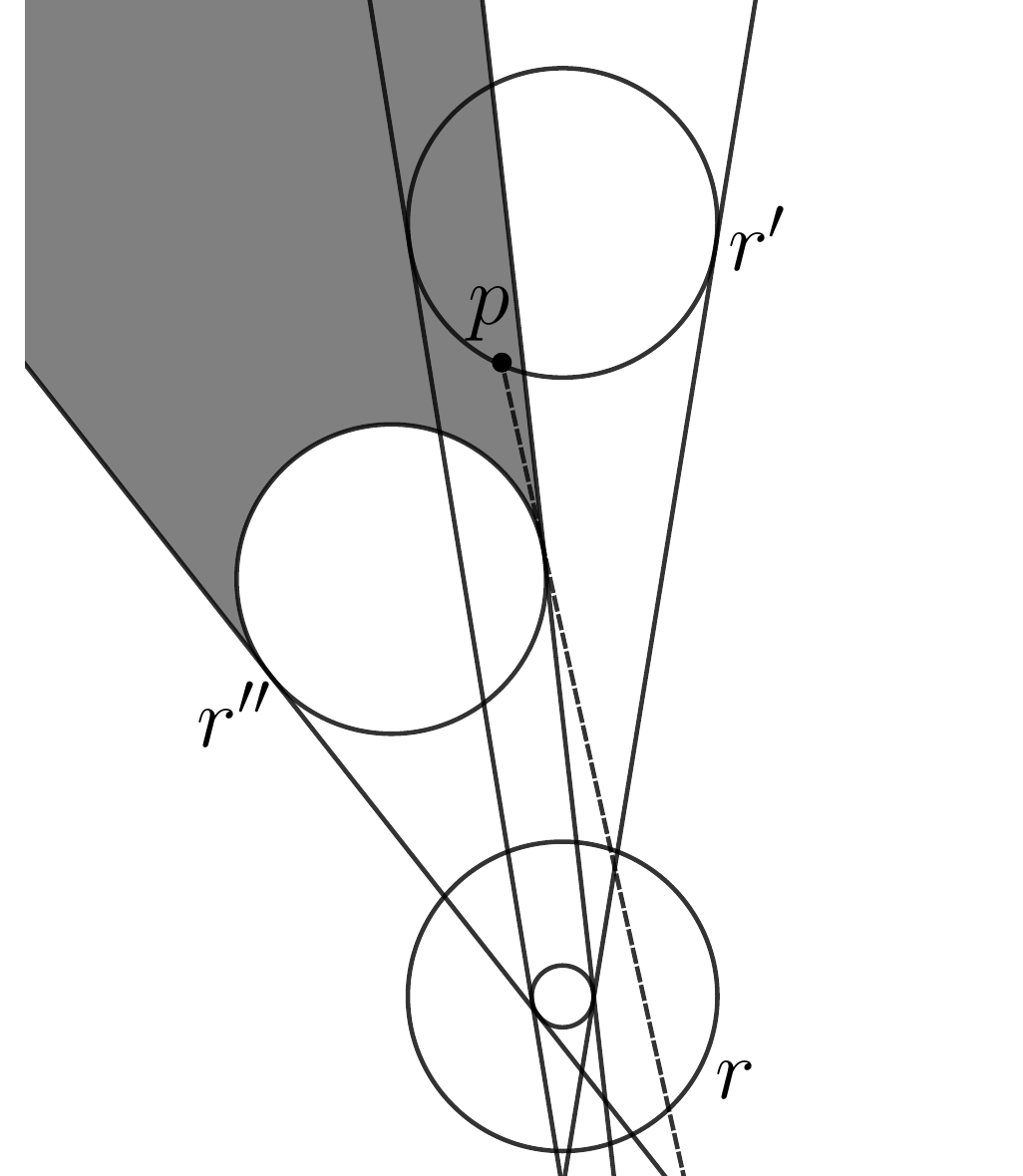}
         \caption{}
          \label{see5}         
     \end{subfigure}
     \hfill
     \begin{subfigure}[b]{0.31\linewidth}
         \centering
         \includegraphics[width=\linewidth]{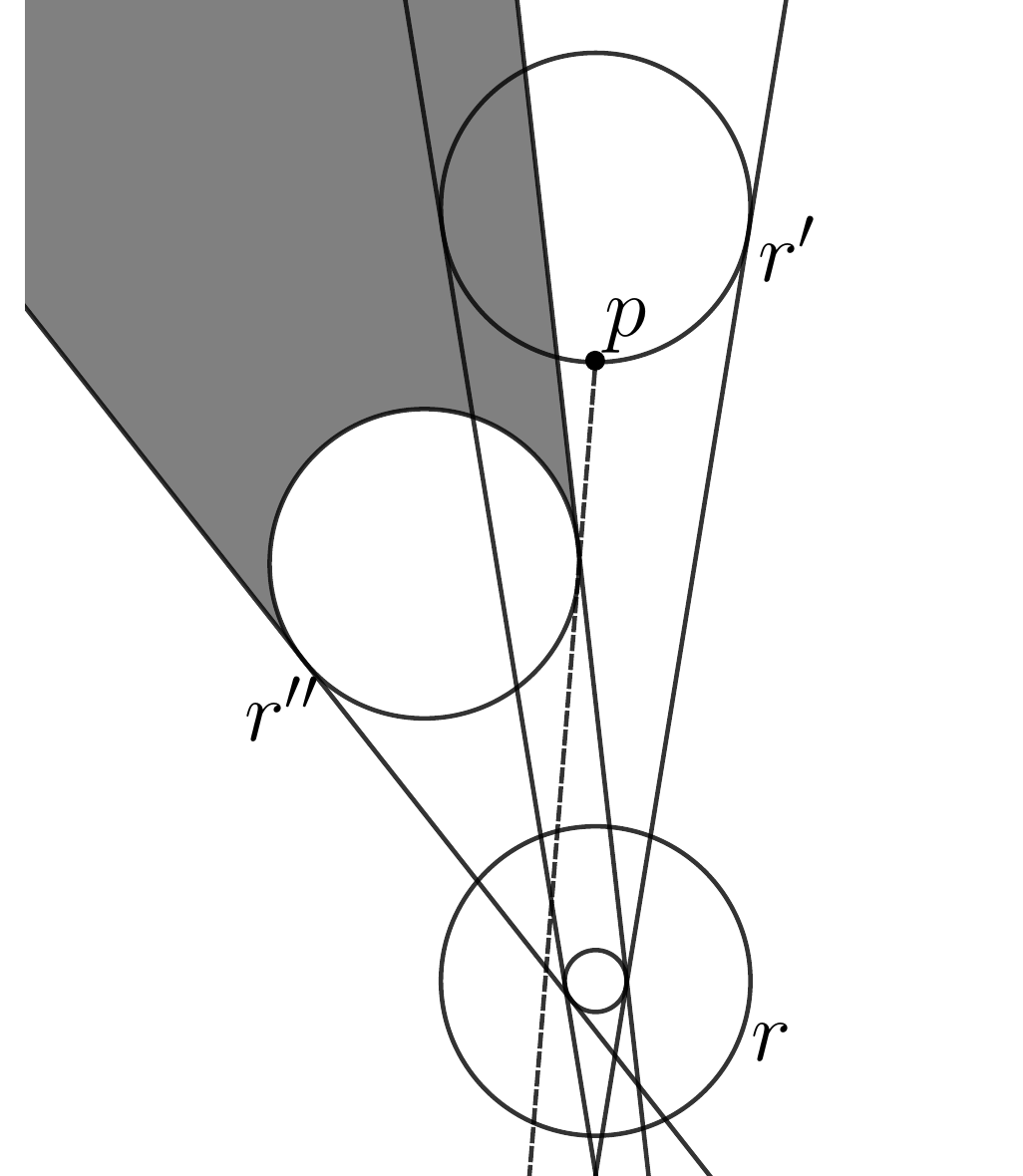}
         \caption{}
          \label{see6}         
     \end{subfigure}
     \\
     \begin{subfigure}[b]{0.31\linewidth}
         \centering
         \includegraphics[width=\linewidth]{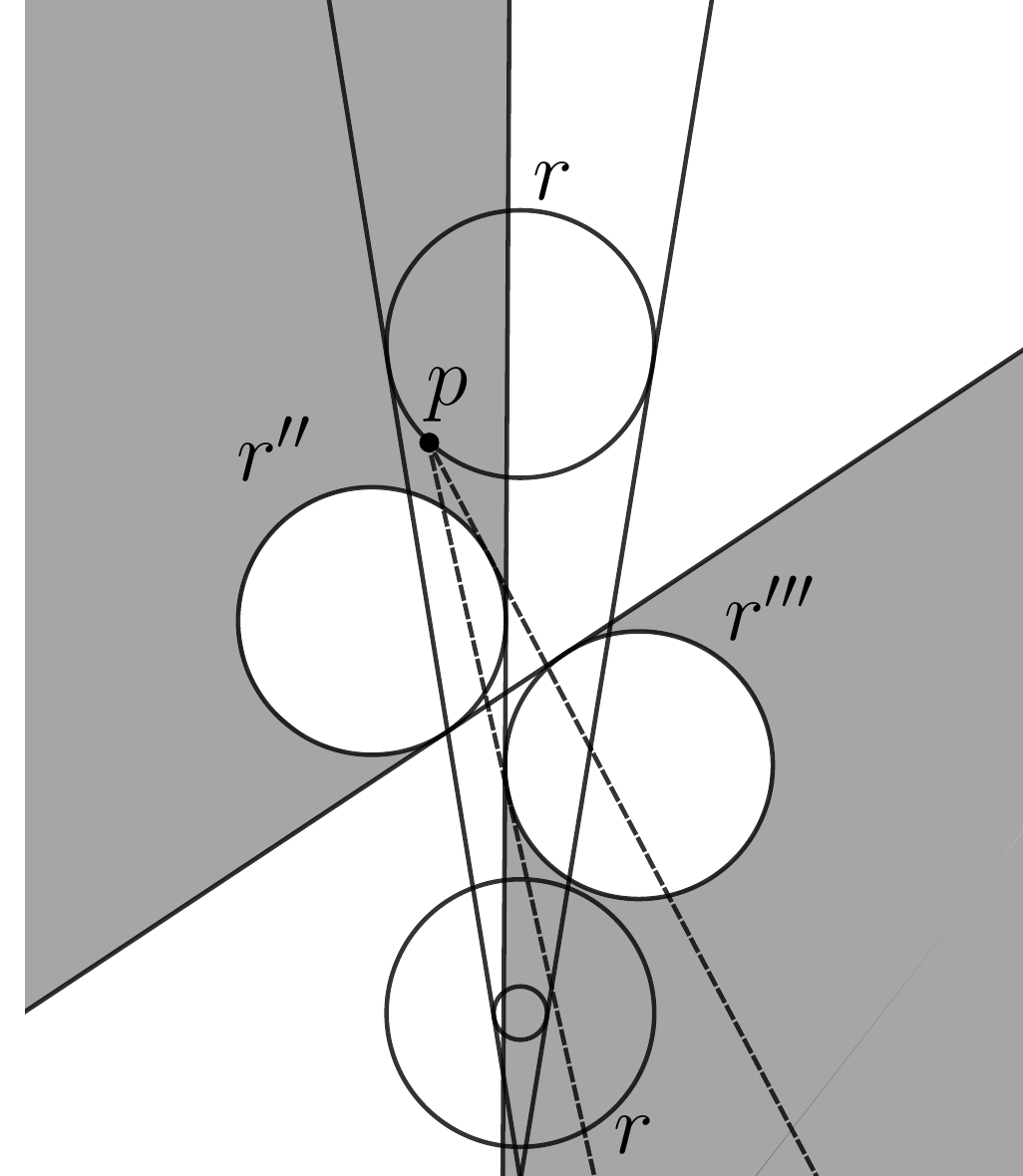}
         \caption{}
          \label{see8}         
     \end{subfigure}
     \hspace{10pt}
     \begin{subfigure}[b]{0.31\linewidth}
         \centering
         \includegraphics[width=\linewidth]{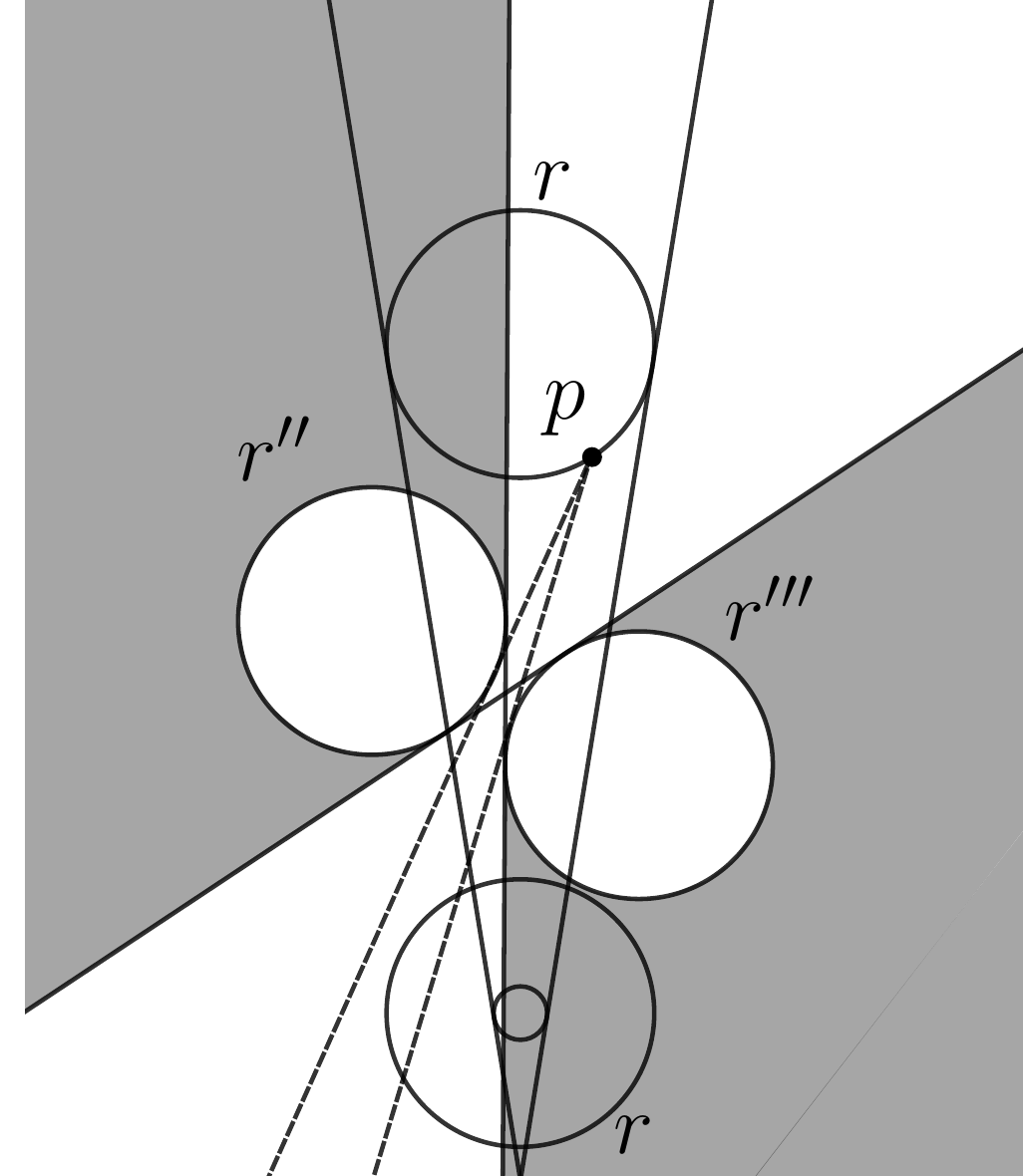}
         \caption{}
          \label{see7}         
     \end{subfigure}
\caption[Short Caption]{(a) The red ray is $\ell_{\mathcal{L}}$ and the blue ray is $\ell_{\mathcal{R}}$. The shaded region is $\mathcal{U}_p$. (b) When $p$ is in the bad region,  $\mathcal{C}_r$ is completely to the left of the right tangent from $p$ to $\mathcal{B}_{r''}$. (c) When $p$ is outside the bad region, some portion of $\mathcal{C}_r$ is to the right of the right tangent from $p$ to $\mathcal{B}_{r''}$. (d) When $p$ is in the bad region, the right tangent from $p$ to $\mathcal{B}_{r''}$ is to the right of the left tangent from $p$ to $\mathcal{B}_{r'''}$. (e)  When $p$ is outside the bad region, the right tangent from $p$ to $\mathcal{B}_{r''}$ is to the left of the left tangent from $p$ to $\mathcal{B}_{r'''}$.}
%  \label{fig: turning}
\end{figure}

Recall that we have to prove that $p$ survives the trimmings if and only if $\mathcal{C}_r \cap \mathcal{U}_p \neq \emptyset$. Let us first prove that if $p$ does not survive the trimmings, then $\mathcal{C}_r \cap \mathcal{U}_p = \emptyset$. So take a point $p$ such that it is inside a bad region of type I or type II. Suppose that $p$ is inside a bad region of type I defined by, without loss of generality, a left obstacle $r'' \in \mathcal{L}$. If  $\ell$ be the right tangent from $p$ to $\mathcal{B}_{r''}$, then by Observation 1, $\mathcal{C}_r$ is completely to the left of $\ell$. This implies that $\mathcal{C}_r$ is completely to the left of $\ell_{\mathcal{L}}$ and hence  $\mathcal{C}_r \cap \mathcal{U}_p = \emptyset$. Now suppose that $p$ is inside a bad region of type II defined by $r'' \in \mathcal{L}$ and $r''' \in \mathcal{R}$. Let $\ell''$ be the right tangent from $p$ to $\mathcal{B}_{r''}$ and $\ell'''$ be the left tangent from $p$ to $\mathcal{B}_{r'''}$. Then, by Observation 2, $\ell''$ is to the right of $\ell'''$, which implies that $\ell_{\mathcal{L}}$ is to the right of $\ell_{\mathcal{R}}$. This implies that $\mathcal{U}_p = \emptyset$, and hence $\mathcal{C}_r \cap \mathcal{U}_p = \emptyset$.

Finally, we show that if $p$ survives the trimmings, then $\mathcal{C}_r \cap \mathcal{U}_p \neq \emptyset$. So take a $p$ such that it is outside all bad regions. For any  $r'' \in \mathcal{L}$ and $r''' \in \mathcal{R}$,  $\ell''$ is to the left of $\ell'''$, where $\ell''$ is the right tangent from $p$ to $\mathcal{B}_{r''}$ and $\ell'''$ is the left tangent from $p$ to $\mathcal{B}_{r'''}$ (by Observation 2). In particular, $\ell_{\mathcal{L}}$ is to the left of $\ell_{\mathcal{R}}$. This means that $\mathcal{U}_p \neq \emptyset$. For any $r' \in \mathcal{L}$, some portion of $\mathcal{C}_r$ is to the right of $\ell$, where $\ell$ is the right tangent from $p$ to $\mathcal{B}_{r''}$ (by Observation 1). In particular, some portion of $\mathcal{C}_r$ is to the right of $\ell_{\mathcal{L}}$. Similarly, some portion of $\mathcal{C}_r$ is to the left of $\ell_{\mathcal{R}}$. This implies that $\mathcal{C}_r \cap \mathcal{U}_p \neq \emptyset$. 

\end{proof}

\begin{figure}[thb!]
     \centering
     \begin{subfigure}[b]{0.31\linewidth}
         \centering
         \includegraphics[width=\linewidth]{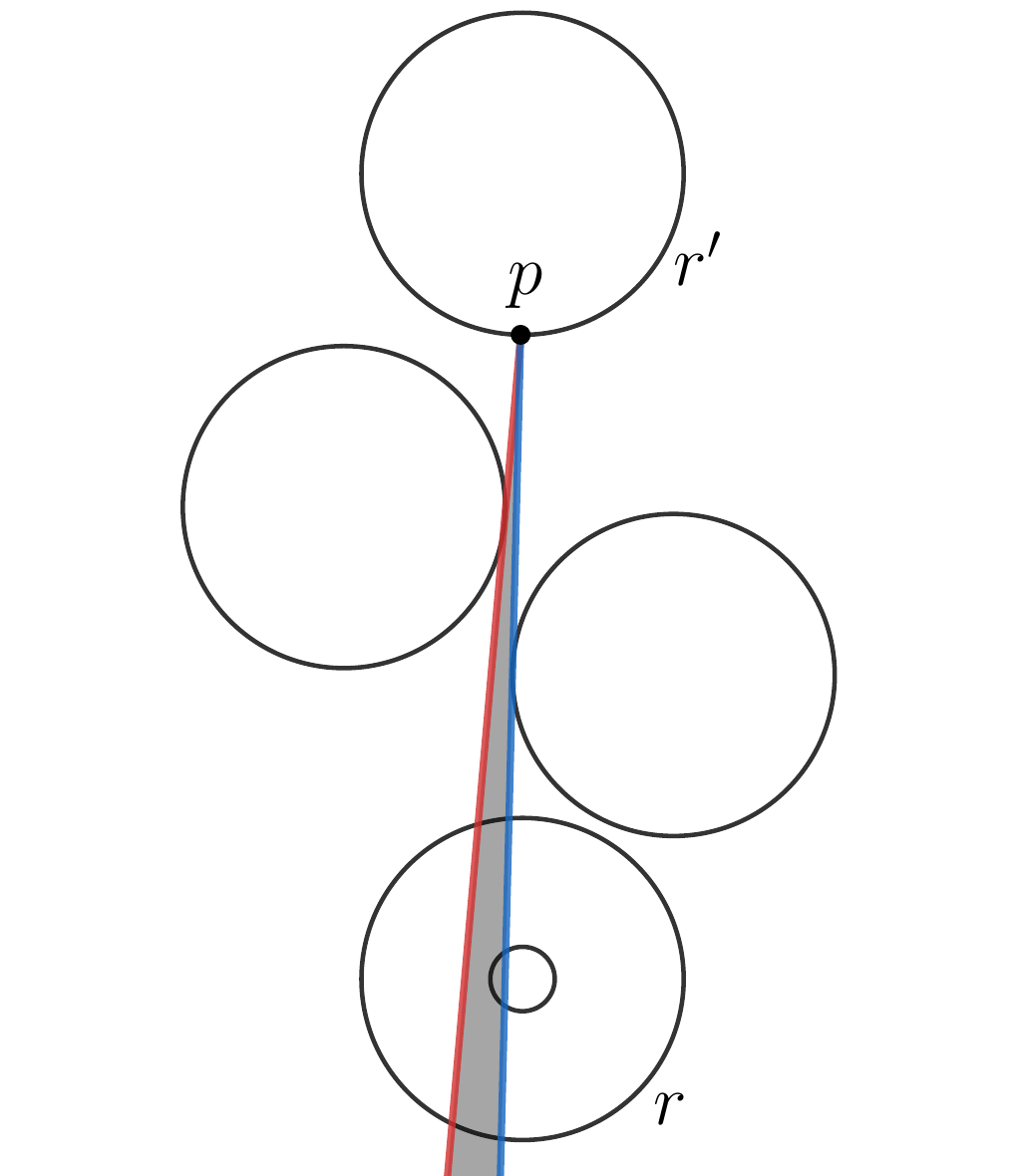}
         \caption{}
          \label{see10}
     \end{subfigure}
     \hfill
     \begin{subfigure}[b]{0.31\linewidth}
         \centering
         \includegraphics[width=\linewidth]{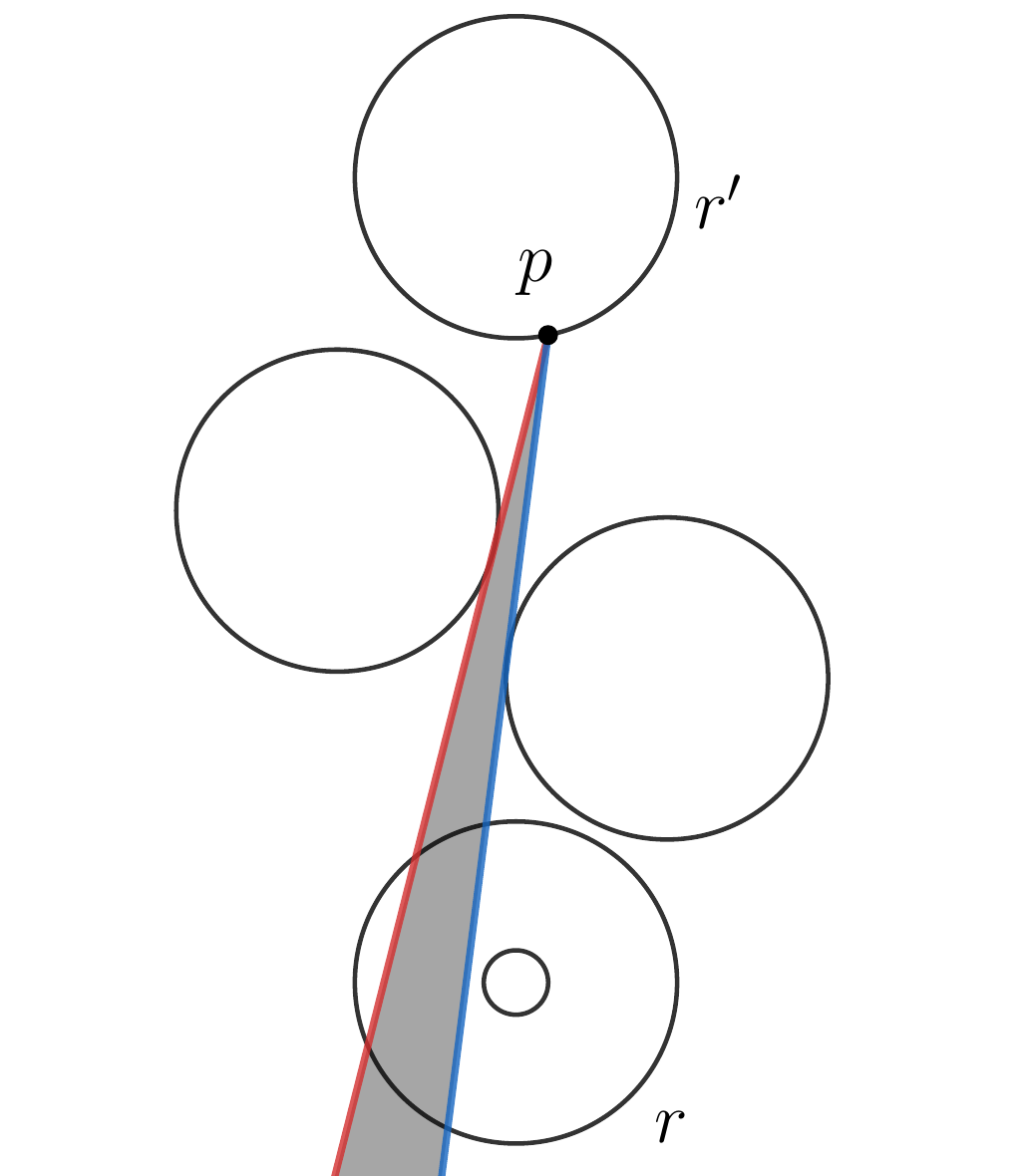}
         \caption{}
          \label{see9}         
     \end{subfigure}
     \hfill
     \begin{subfigure}[b]{0.31\linewidth}
         \centering
         \includegraphics[width=\linewidth]{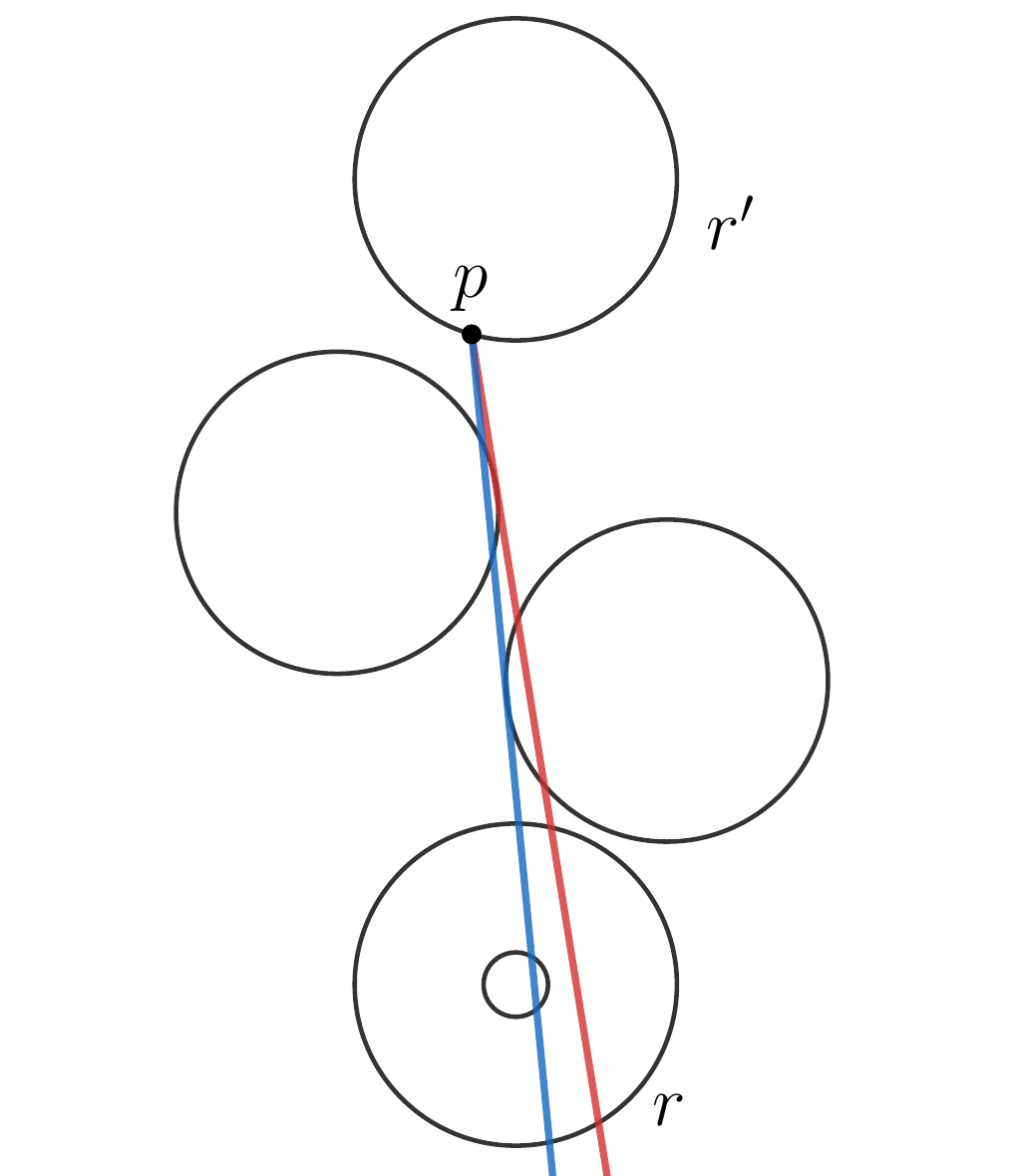}
         \caption{}
          \label{see11}         
     \end{subfigure}

\caption[Short Caption]{The red ray is $\ell_{\mathcal{L}}$ and the blue ray is $\ell_{\mathcal{R}}$. (a) $p$ is visible to $r$ because $\mathcal{C}_r \cap \mathcal{U}_p \neq \emptyset$ (b)-(c) $p$ is not visible to $r$ because  $\mathcal{C}_r \cap \mathcal{U}_p = \emptyset$. In case of (c), $\mathcal{U}_p$ is empty, while in case of (b), $\mathcal{U}_p$ is non-empty, but it does not overlap $\mathcal{C}_r$.} 
 \label{fig: 3cases}
\end{figure}

\subsection{Experimental Results}

% We analysed the round complexity of the leader election stage in Section \ref{} where we proved an upper bound of $O(n + D)$ epochs. The main source of difficulty of the leader election problem was the existence of false southmost robots. Movements of the false southmost robots can prevent the actual southmost robot from creating the required vertical separation from the rest of the team. We showed that $n-1$ robots will change their colors to \texttt{defeated} within $O(n)$ rounds. After this there are no movements from these robots and the southmost robot can establish the required max$\{10, \frac{D}{\sqrt{3}}\}$ units vertical separation within $O(D)$ epochs. This gave the $O(n + D)$ upper bound for the leader election stage. However, our simulations reveal that problematic movements of false southmost robots are rare. As a result, the $O(n)$ epoch bound for the time until all $n-1$ robots expect actual southmost change their colors to \texttt{defeated} is quite loose. Details of the experimental results are expounded in the following.  

We first focus on the leader election stage of the algorithm. We analyzed the round complexity of the leader election stage in Section \ref{sec 6} where we have proved an upper bound of $O(n)$ epochs. The main source of difficulty in the leader election problem was the existence of false southmost robots. Movements of the false southmost robots can prevent the actual southmost robot from creating the required vertical separation from the rest of the team. We showed that $n-1$ robots would change their colors to \texttt{defeated} within $O(n)$ rounds. After this, there are no movements from these robots and the southmost robot can establish the required vertical separation in $O(1)$ epochs. This gave the $O(n)$ upper bound for the leader election stage. However, our simulations reveal that the problematic movements of the false southmost robots occur scarcely. As a result, the $O(n)$ epoch bound may be quite loose. Details of the experimental results are expounded in the following.

We randomly generate initial configurations of robots inside a rectangular region. In particular, the position of each robot is generated uniformly at random inside the rectangular region. If the generated point is within 2 units of a previously generated point, then we discard it and try again. To simulate an $\mathcal{SSYNC}$ scheduler, the activation of each robot at any round is set as a Bernoulli process $Bern(p)$. When $p = 1$, it coincides with an $\mathcal{FSYNC}$ scheduler. Under this framework, we first simulate our leader election algorithm in $\mathcal{FSYNC}$ and note the following:
\begin{enumerate}
    \item $\mathfrak{m}=$ the number of moves by false southmost robots,
    \item $\mathfrak{r}=$ the number of epochs needed for $n-1$ robots to change their colors to \texttt{defeated}.
\end{enumerate}

\begin{figure}[h!]
     \centering
     \begin{subfigure}[b]{0.38\linewidth}
         \centering
         \includegraphics[width=\linewidth]{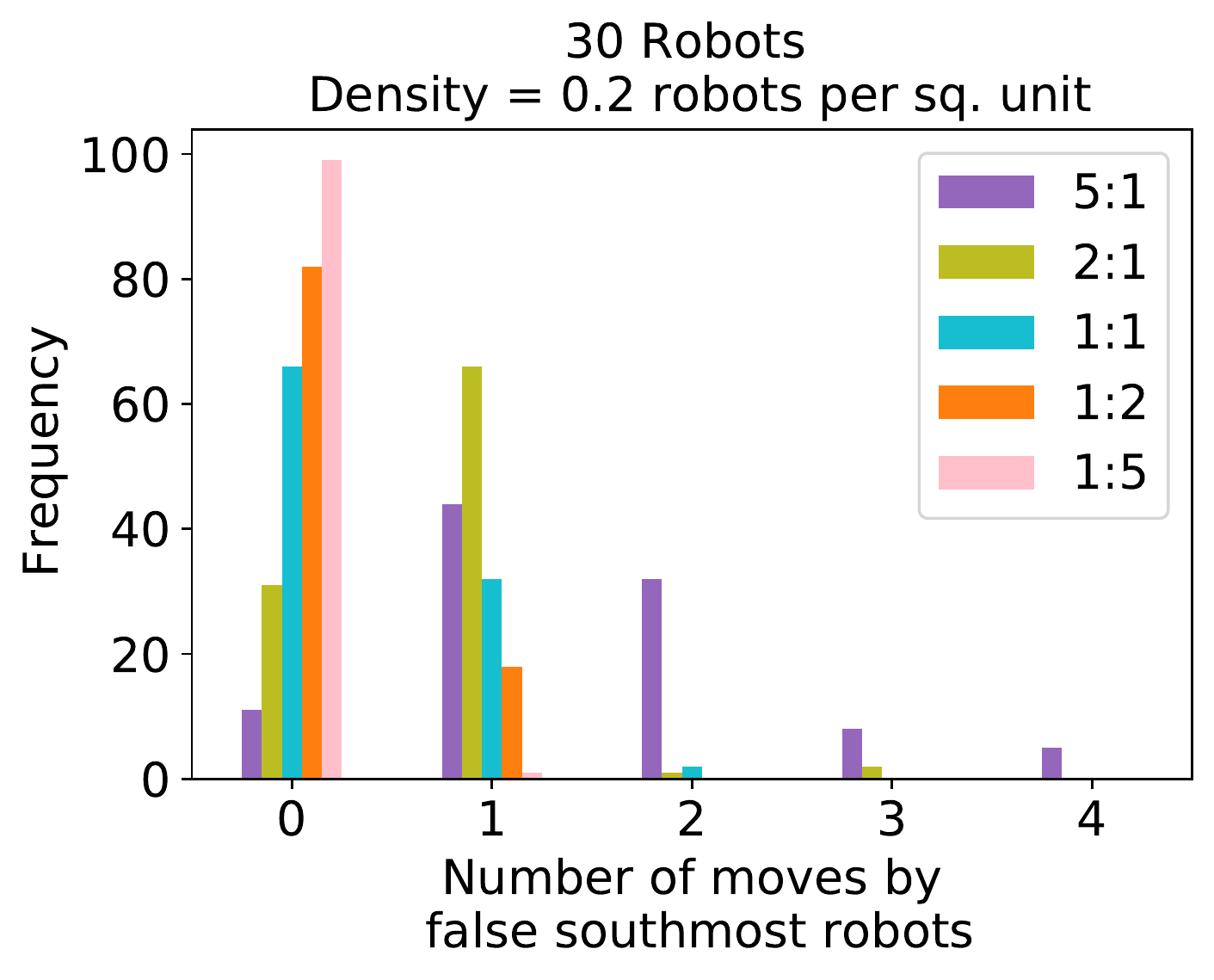}
         \caption{}
         \label{}
     \end{subfigure}
     \hfill
     \begin{subfigure}[b]{0.38\linewidth}
         \centering
         \includegraphics[width=\linewidth]{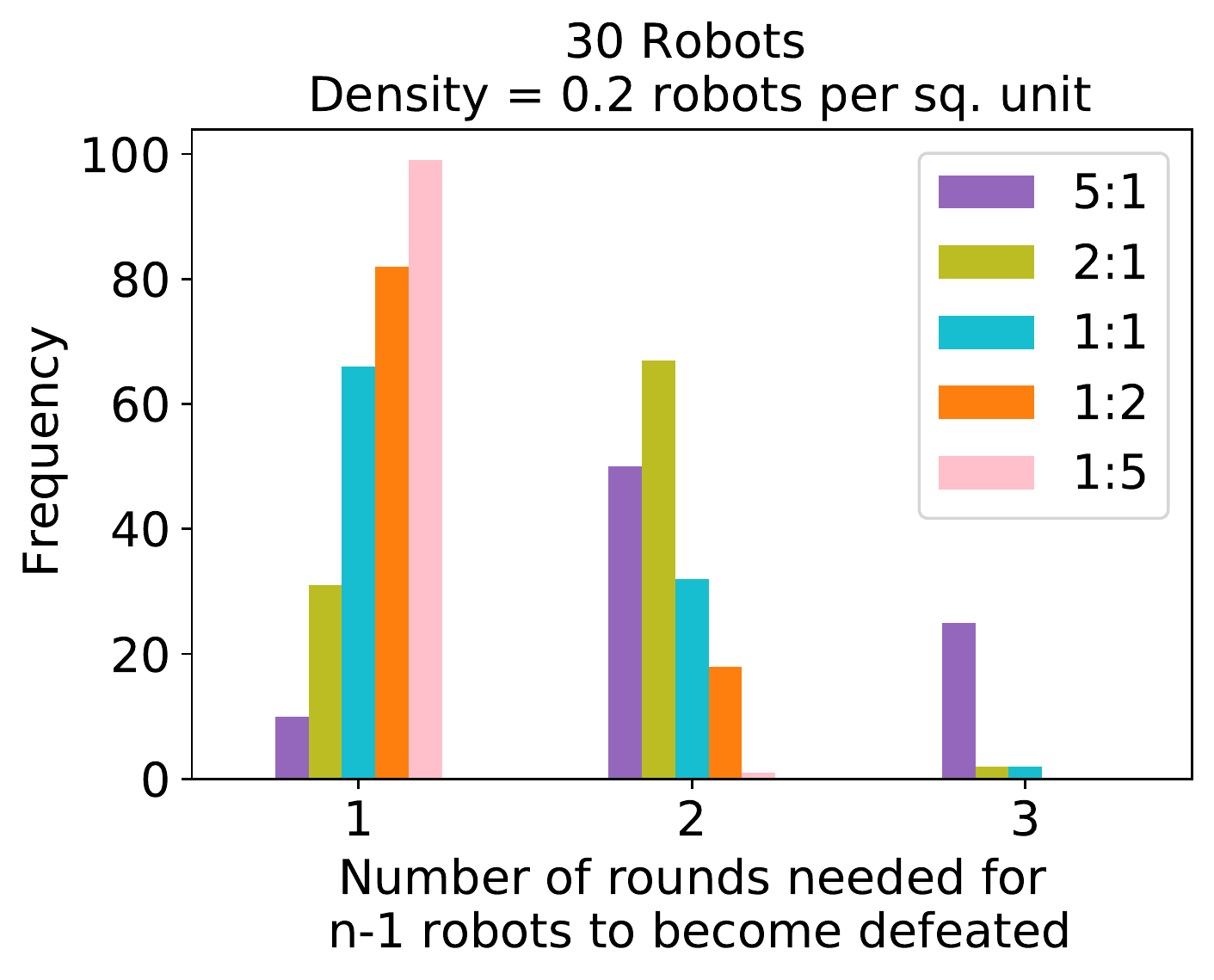}
         \caption{}
         \label{}
     \end{subfigure}
     \\
     \begin{subfigure}[b]{0.38\linewidth}
         \centering
         \includegraphics[width=\linewidth]{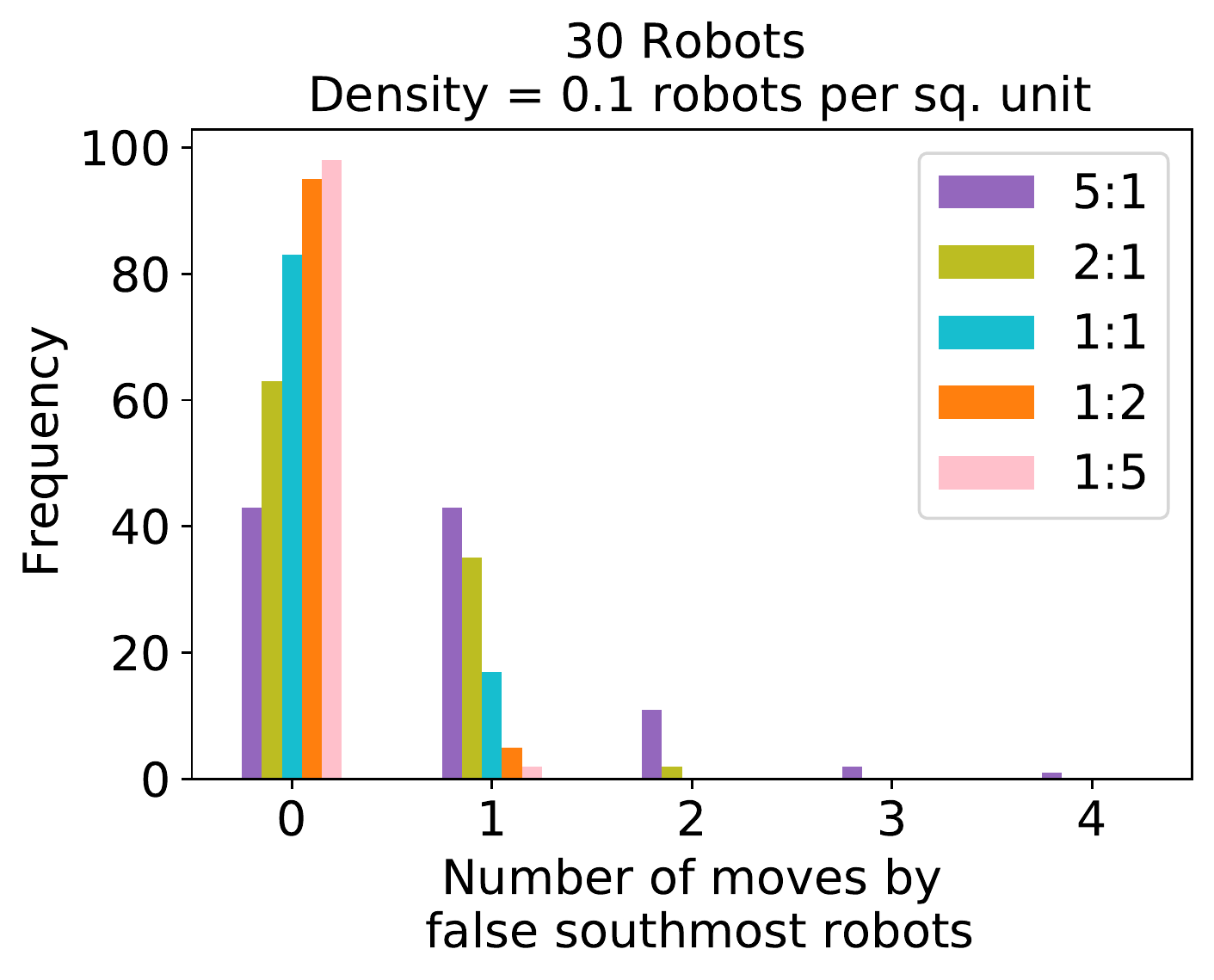}
         \caption{}
         \label{}
     \end{subfigure}
     \hfill
     \begin{subfigure}[b]{0.38\linewidth}
         \centering
         \includegraphics[width=\linewidth]{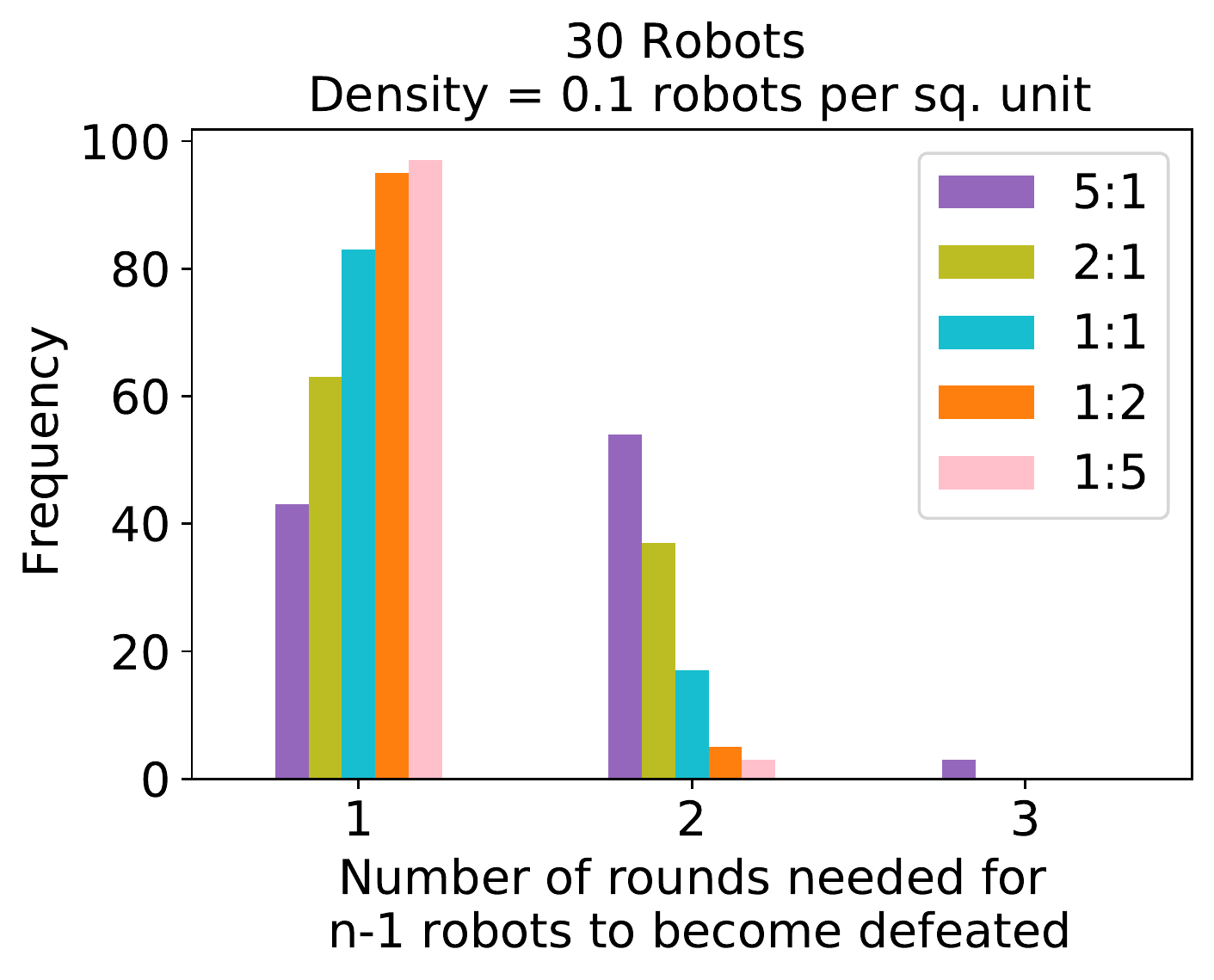}
         \caption{}
         \label{}
     \end{subfigure}
     \\
     \begin{subfigure}[b]{0.38\linewidth}
         \centering
         \includegraphics[width=\linewidth]{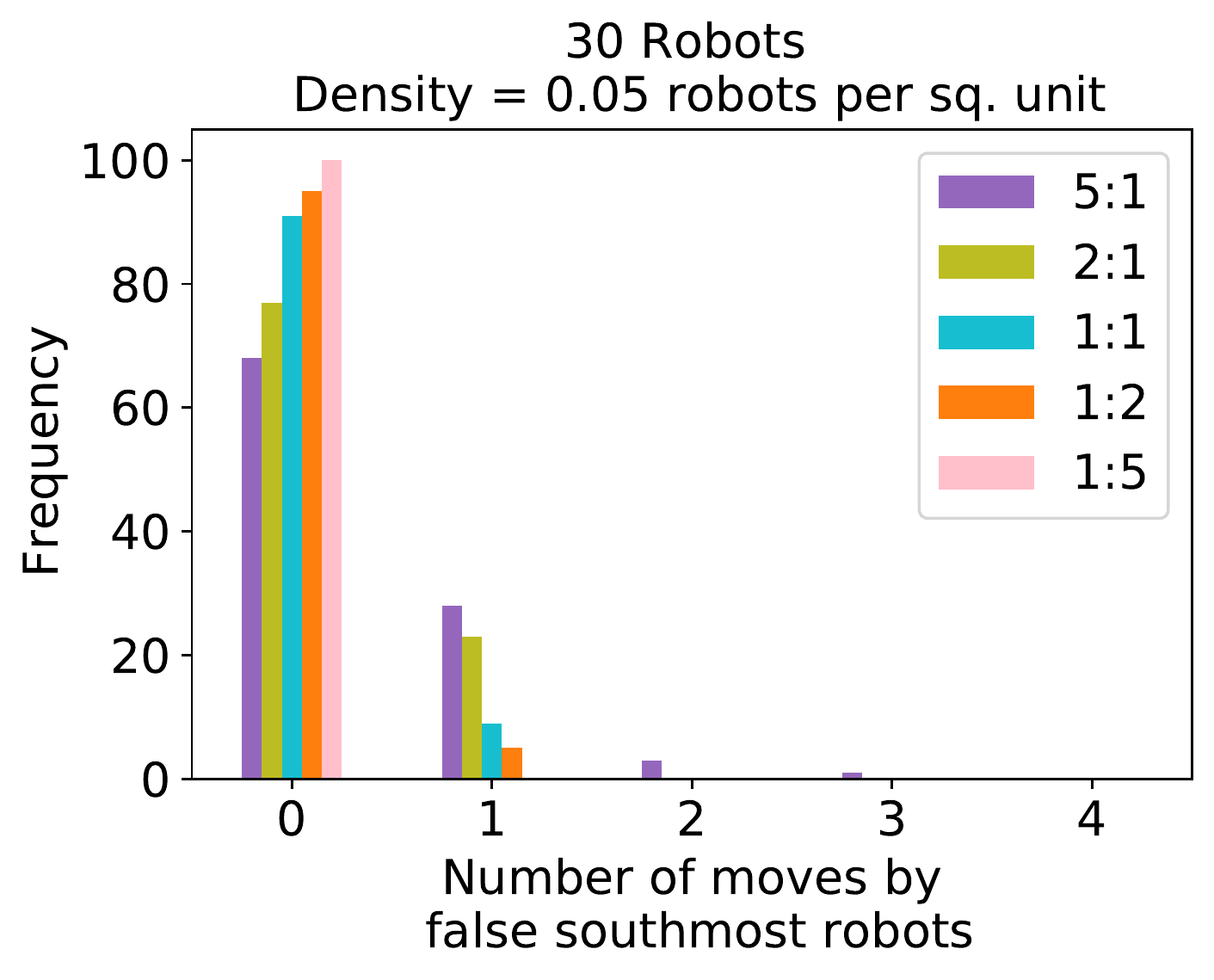}
         \caption{}
         \label{}
     \end{subfigure}
     \hfill
     \begin{subfigure}[b]{0.38\linewidth}
         \centering
         \includegraphics[width=\linewidth]{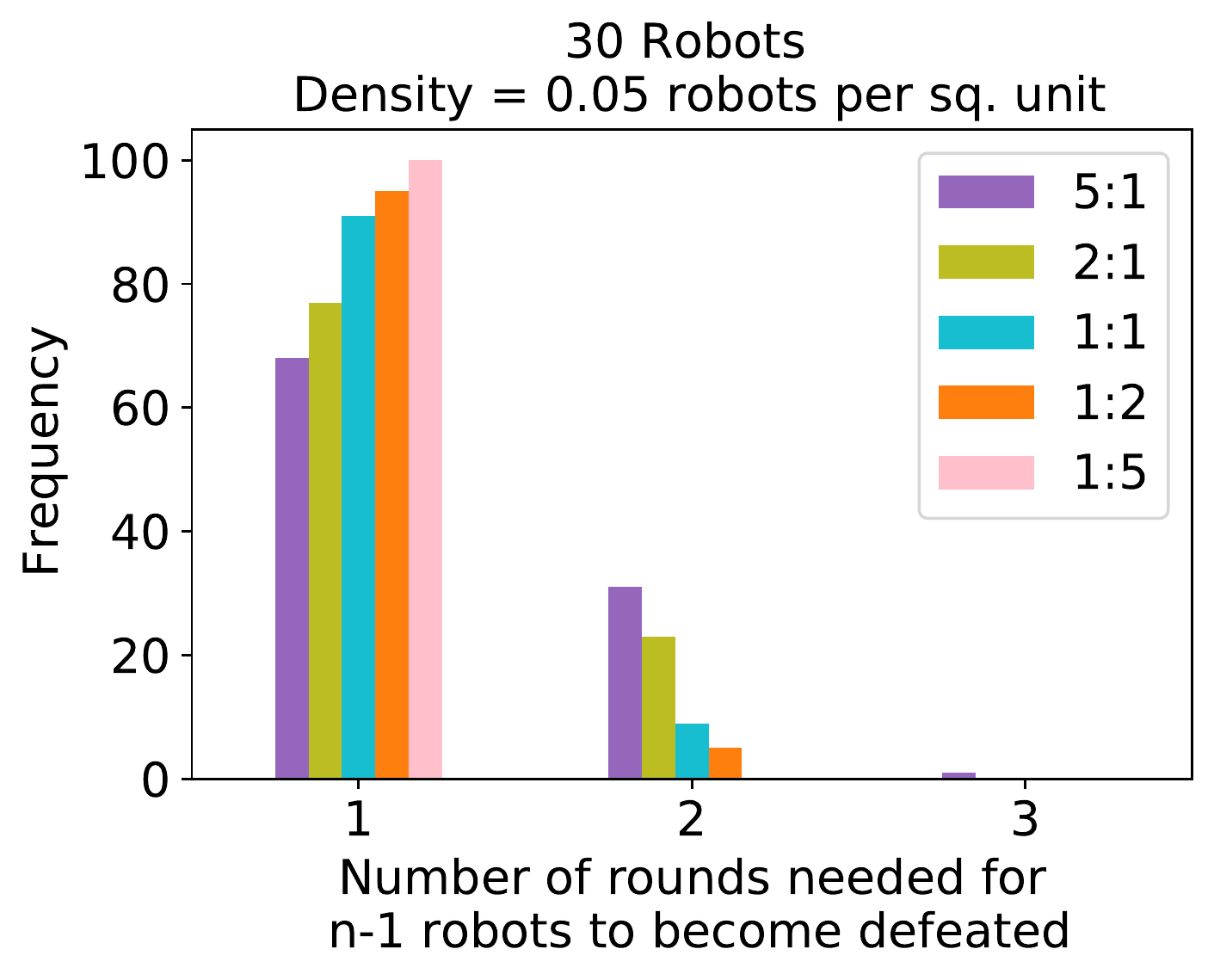}
         \caption{}
         \label{}
     \end{subfigure}
\caption[Short Caption]{Observations for the leader election algorithm run for 30 robots for different robot density and different initial deployment region. The robot density is 0.2 robots per square unit in the first row, 0.1 robots per square unit in the second row and 0.05 robots per square unit in the third row. In each case, the algorithm is run for deployment regions having horizontal side length: vertical side length = 5:1, 2:1, 1:1, 1:2 and 1:5.}
\label{fig: bar1}
\end{figure} 
We first run the algorithm for different rectangular deployment regions for $n = 30$ robots. The algorithm is run 100 times in each case. The results are presented in Fig. \ref{fig: bar1}. The results reveal the effect of the shapes of the rectangles (length of horizontal and vertical sides) and robot density (number of robots per unit$^2$ area). We can see that the problematic moves by false southmost robots are quite rare, in particular $\mathfrak{m} \leq 4$ in each run. As a result, the number of rounds needed for $n-1$ robots to change their colors to \texttt{defeated} is significantly less than what our theoretical analysis suggests. In particular, we see that $\mathfrak{r} \leq 4$ in each run. We also observe that $\mathfrak{m}$ and $\mathfrak{r}$ are higher in the case of deployment regions with larger horizontal sides. This is because false southmost robots are usually found in the southmost layer of the configuration (see Fig. \ref{south1}). As a result, we are more likely to have numerous false southmost robots in horizontally stretched deployment regions. However, it is possible to have false southmost robots at any part of a configuration as shown in Fig. \ref{south2} where even a northmost robot is false southmost. Although one can contrive many such configurations, they are unlikely to show up in random deployments. We also see that $\mathfrak{m}$ and $\mathfrak{r}$ tend to drop as robot density decreases. This is quite natural, as there are more visibility obstruction in dense configurations. We can infer from these observations that the horizontal extent and the density of the initial configuration are the main factors determining $\mathfrak{m}$ and $\mathfrak{r}$. To further test the above statement, we repeat the experiments for different sizes of the swarm, but choosing the deployment regions in such a way that the density and horizontal width are the same in each case. In particular, the density and horizontal width in each case are respectively 0.2 robots/unit$^2$ and 25 units. The results are presented in Fig. \ref{fig: bar2}. As expected, we did not see any discernible effects due to the varying size of the swarm. The experiments we have discussed till now were all carried out in the $\mathcal{FSYNC}$ setting. Notice that in $\mathcal{FSYNC}$, the robot that becomes the leader is the actual southmost robot in the initial configuration. However, in $\mathcal{SSYNC}$, a false southmost robot may also become the leader. The main observation from the experiments conducted in $\mathcal{FSYNC}$ was that the problematic moves by robots other than the eventual leader are only a few in number. In $\mathcal{SSYNC}$, they are even rarer. We ran the algorithm for different activation probabilities, observing that these moves become rarer as the activation probability decreases (see Fig. \ref{fig: bar3}). Recall that the algorithm asks the leader to establish  $\geq$ max $\{10, \frac{D}{\sqrt{3}}\}$ units of vertical separation from the rest of the swarm. The $\geq$ $\frac{D}{\sqrt{3}}$ unit vertical separation ensures that no moves by \texttt{off} colored false southmost robots occur after the leader is elected. This property helps to simplify the correctness proof, but requires the assumption that the robots have the knowledge of $D$. However, the experiments show that such moves are quite rare. Also, even if such a move occurs after the leader is elected, it is unlikely to hamper the algorithm. Therefore, it is fine in practice to ask for only $\geq 10$ units of vertical separation and drop the assumption about the knowledge of $D$.

    % We also repeat the experiments for different sizes of the swarm.  The results for 5:1 deployment region and 0.2 robots/unit$^2$ density are shown in Fig.   

% activation probability $p = 1$ ($\mathcal{FSYNC}$ scheduler) 

 \begin{figure}[thb!]
     \centering
     \begin{subfigure}[b]{0.49\linewidth}
         \centering
         \includegraphics[width=\linewidth]{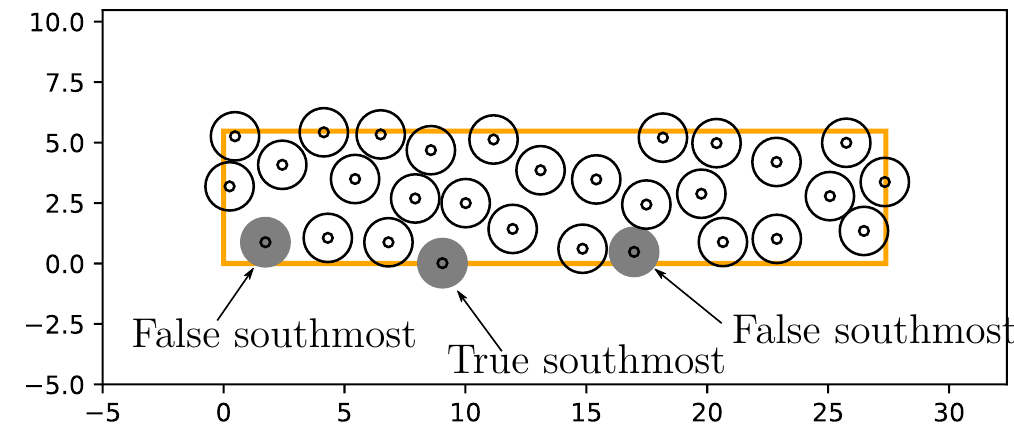}
         \caption{}
         \label{south1}
     \end{subfigure}
     \hfill
     \begin{subfigure}[b]{0.49\linewidth}
         \centering
         \includegraphics[width=\linewidth]{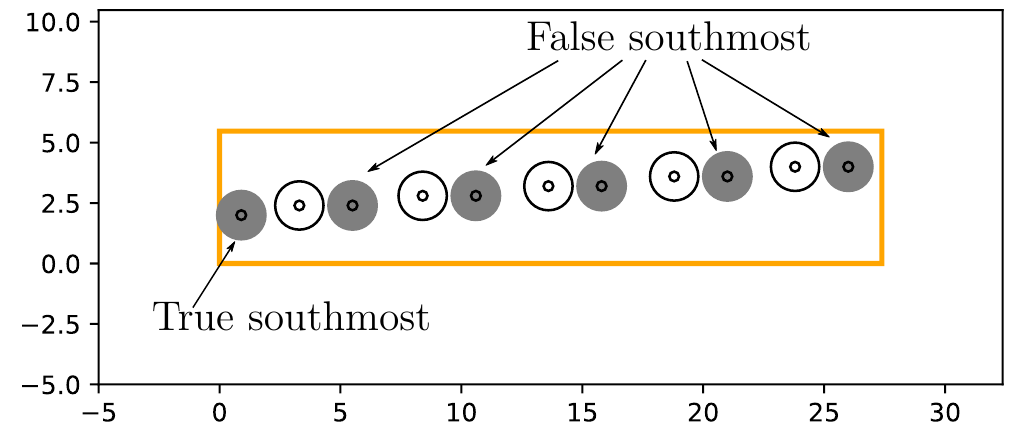}
         \caption{}
         \label{south2}
     \end{subfigure}
     \caption[Short Caption]{(a) The false southmost robots are from the southmost layer in case of a randomly generated configuration. (b) A configuration in which even the northmost robot is false southmost.}
\label{}
\end{figure}

\begin{figure}[thb!]
     \centering
     \begin{subfigure}[b]{0.38\linewidth}
         \centering
         \includegraphics[width=\linewidth]{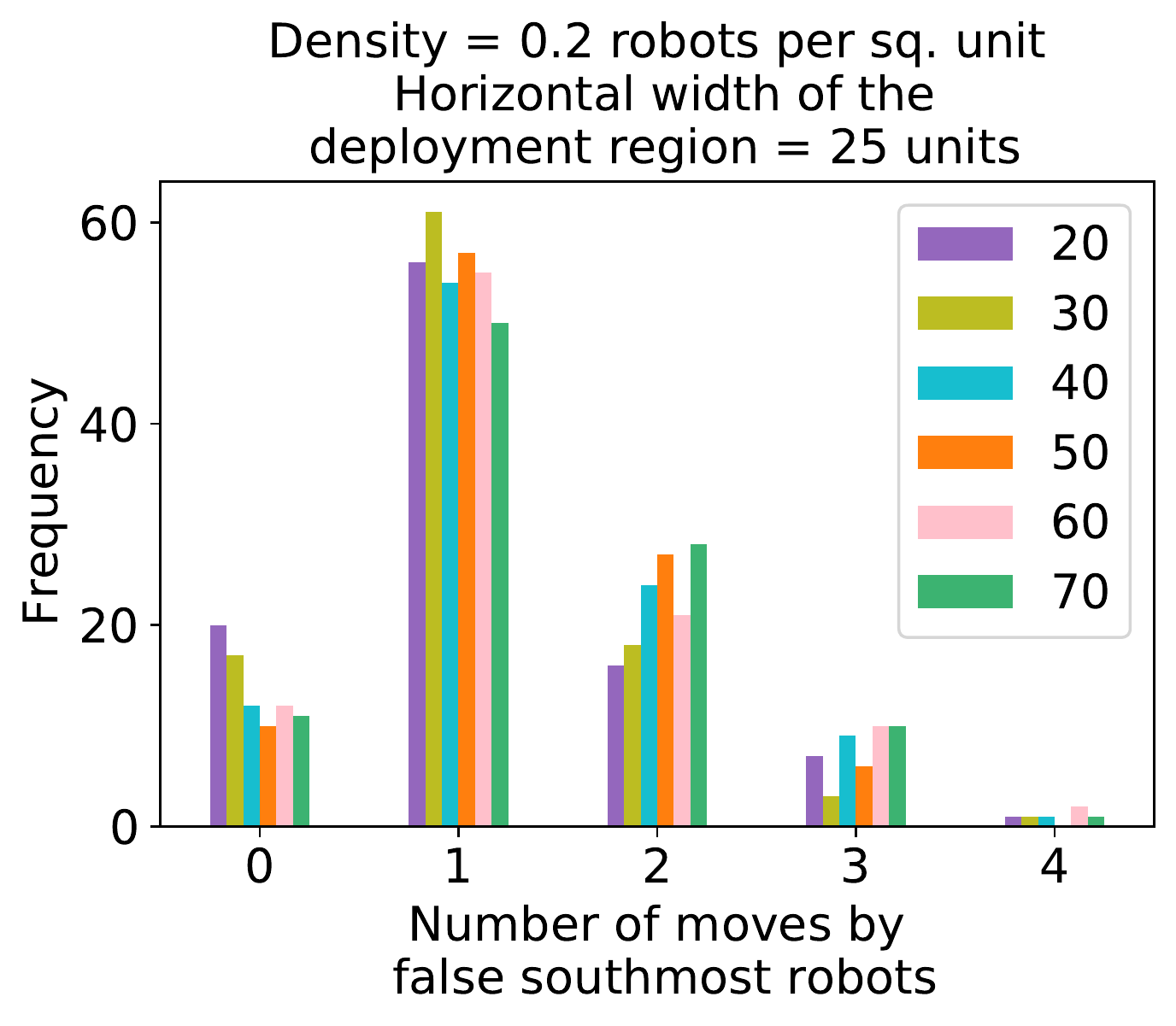}
         \caption{}
         \label{}
     \end{subfigure}
     \hfill
     \begin{subfigure}[b]{0.38\linewidth}
         \centering
         \includegraphics[width=\linewidth]{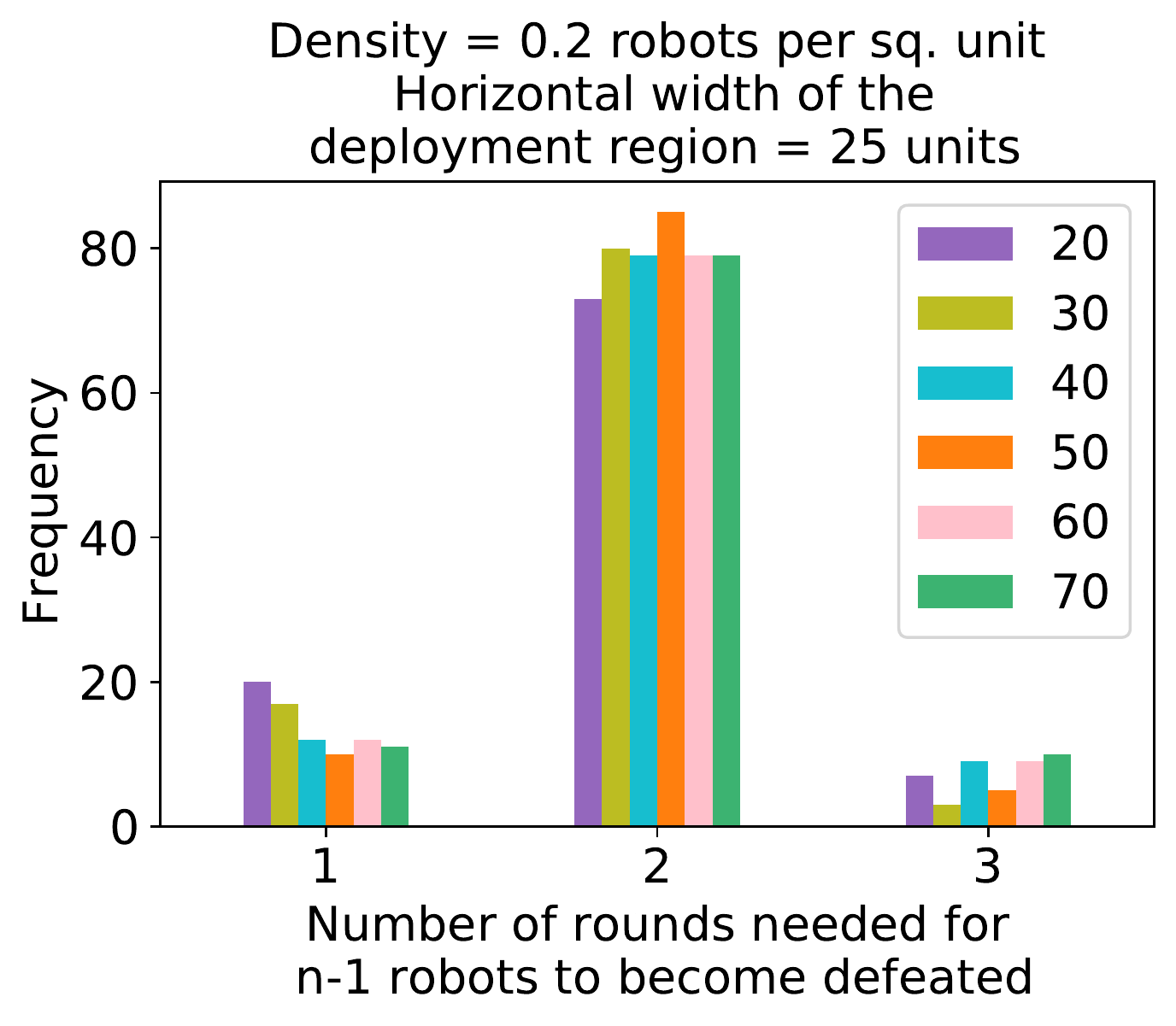}
         \caption{}
         \label{}
     \end{subfigure}
          \caption[Short Caption]{Observations for the leader election algorithm for 20, 30, 40, 50, 60 and 70 robots. In each case, the length of the horizontal side of the deployment region is 25 units and the length of the vertical side is adjusted so the robot density is 0.2 robots per square unit.}
\label{fig: bar2}
\end{figure}

%%%%%%%%%%%%%%%%%%%%%%%%%%%%%%%%%%%%%%%%%%%%%%%%%%%%%%%%%%%%%%%%%%%%%%%%%%%%%%%%%%%%%%%%%%%%%%%

% In Section \ref{sec 6}, we proved upper bounds of  time complexity of the algorithm and total distance traversed by the robots. 

\begin{figure}[thb!]
     \centering
     
         \includegraphics[width=0.38\linewidth]{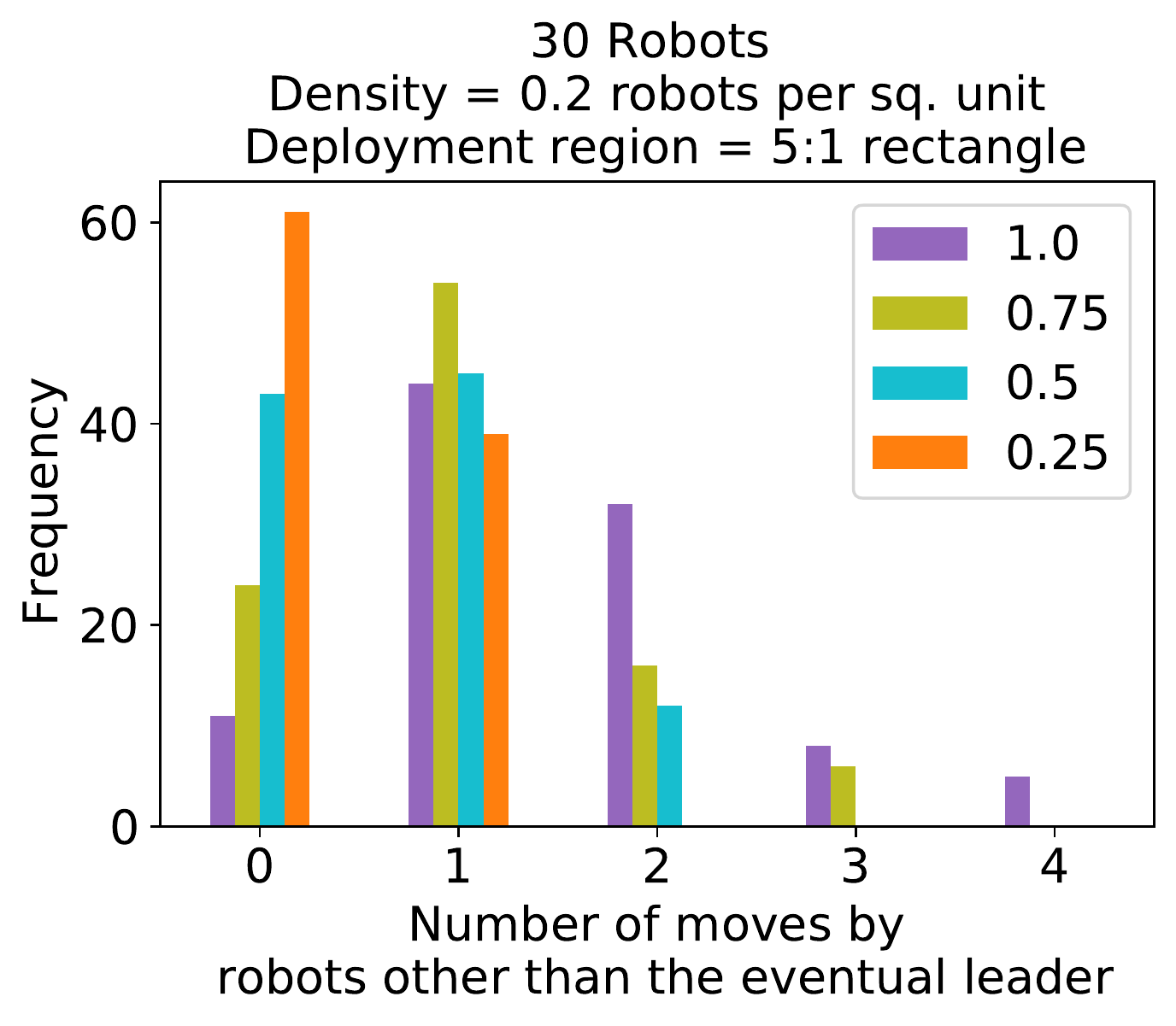}

          \caption[Short Caption]{Observations for the leader election algorithm for 30 robots in $\mathcal{SSYNC}$ for activation probabilities $p =$ 1.0, 0.75, 0.5 and 0.25. In each case, the deployment region is a 5:1 rectangle and the robot density is 0.2 robots per square unit.}
\label{fig: bar3}
\end{figure}

\clearpage

We now focus on the entire mutual visibility algorithm. We evaluate the performance of our algorithm in terms of time complexity and total distance traversed by the robots. Recall that we assumed that the size of the team is not known beforehand. If the number of robots was known, then the optimum stretch required for accommodating all robots on the base chain can be calculated in advance. Without this knowledge, we adopt a trial and error approach, i.e., start with some value of stretch and increase it by some amount when we find it to be inadequate. In Fig. \ref{fig: stretch_plot}, we compare the stretch set by our algorithm with the optimum value sufficient to form the base chain. The jumps in the graph correspond to the expansions, i.e., when we find that our previously set stretch is not large enough and increase it (by setting the position of the first robot to be the position of the second robot of the previous base chain). The stretch determines the horizontal and vertical widths of the final configuration. They are shown in Fig. \ref{fig: final_plot}. One can see that these graphs also have bumps corresponding to the ones in Fig. \ref{fig: stretch_plot}. The expansion also affects the time complexity and total distance traversed. Observed values of time and total distance traversed are presented in Fig. \ref{fig: time_plot} and Fig. \ref{fig: dist_plot} respectively. These graphs also have similar bumps as expected. For these experiments, we chose a square deployment region with sides of length 25 units, $\mathcal{FSYNC}$ scheduler and rigid movement model.

\begin{figure}[thb!]
     \centering
     \begin{subfigure}[b]{0.4\linewidth}
         \centering
         \includegraphics[width=\linewidth]{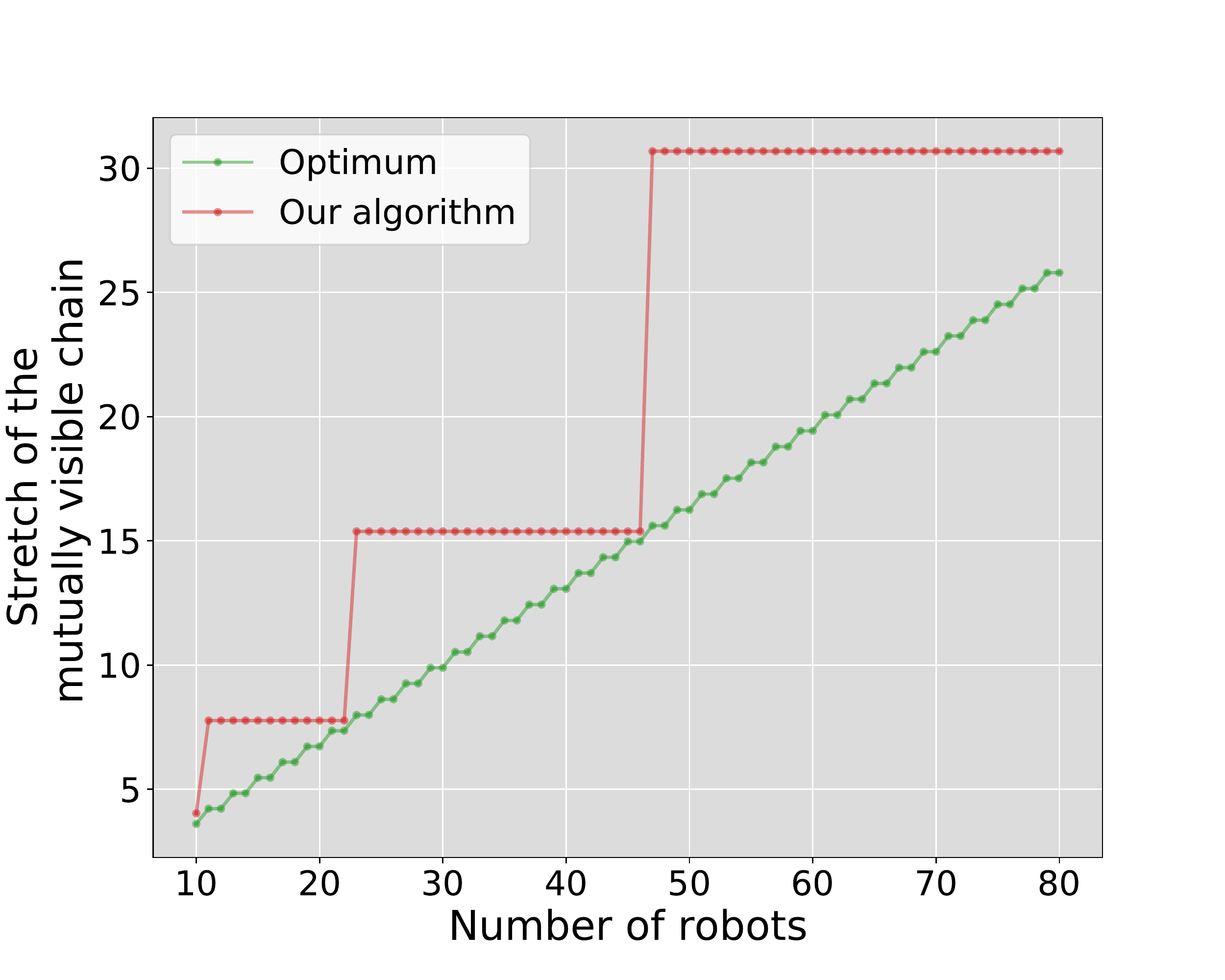}
         \caption{}
         \label{fig: stretch_plot}
     \end{subfigure}
     \hfill
     \begin{subfigure}[b]{0.4\linewidth}
         \centering
         \includegraphics[width=\linewidth]{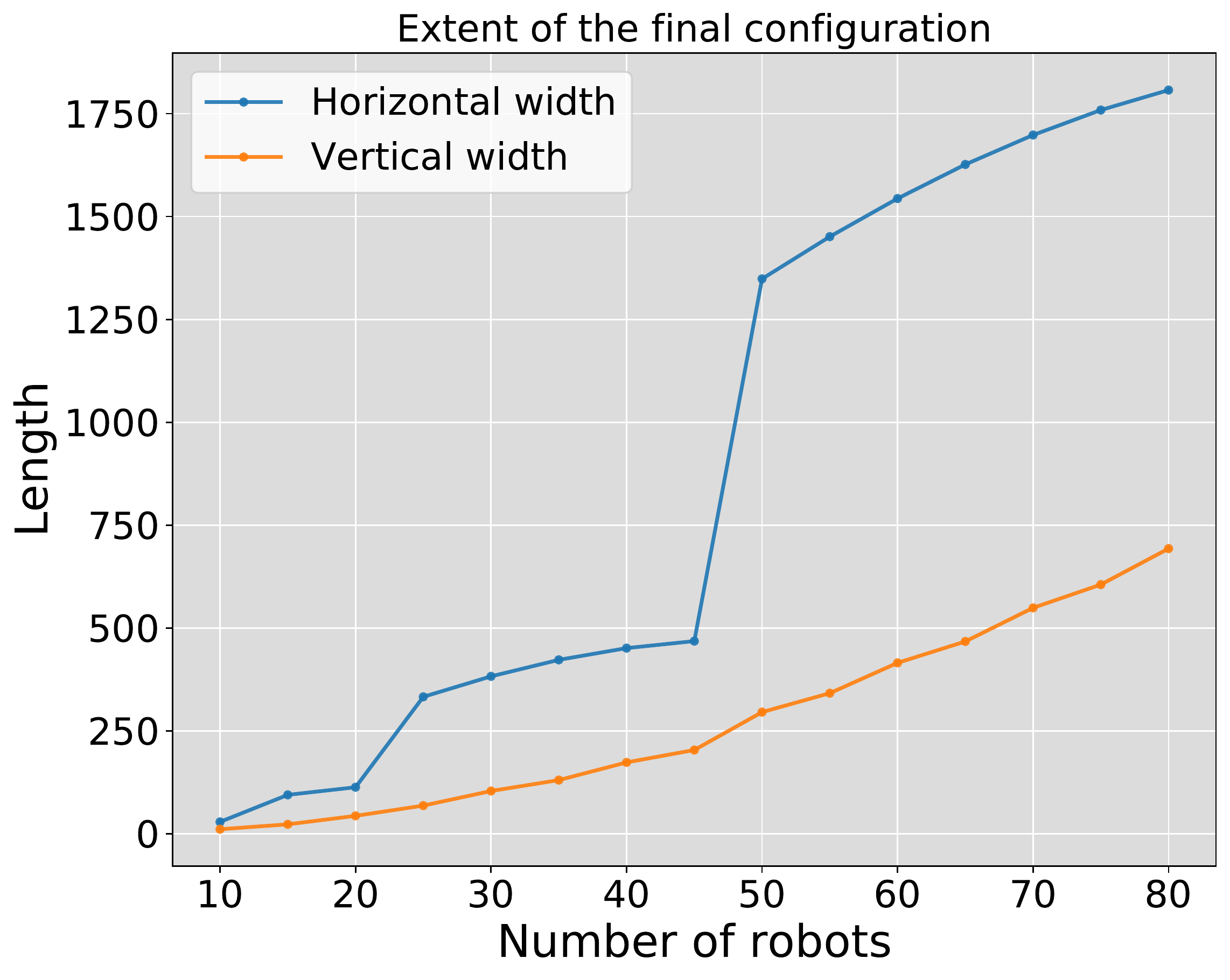}
         \caption{}
         \label{fig: final_plot}
     \end{subfigure}
          \caption[Short Caption]{(a) Comparison between the optimal stretch required (so that the base chain can accommodate $n-1$ robots) and the stretch set by our algorithm (when the size of the swarm is not known beforehand). (b) The horizontal and vertical width of the final configuration for varying swarm size.}
\label{}
\end{figure}

\begin{figure}[thb!]
     \centering
     \begin{subfigure}[b]{0.4\linewidth}
         \centering
         \includegraphics[width=\linewidth]{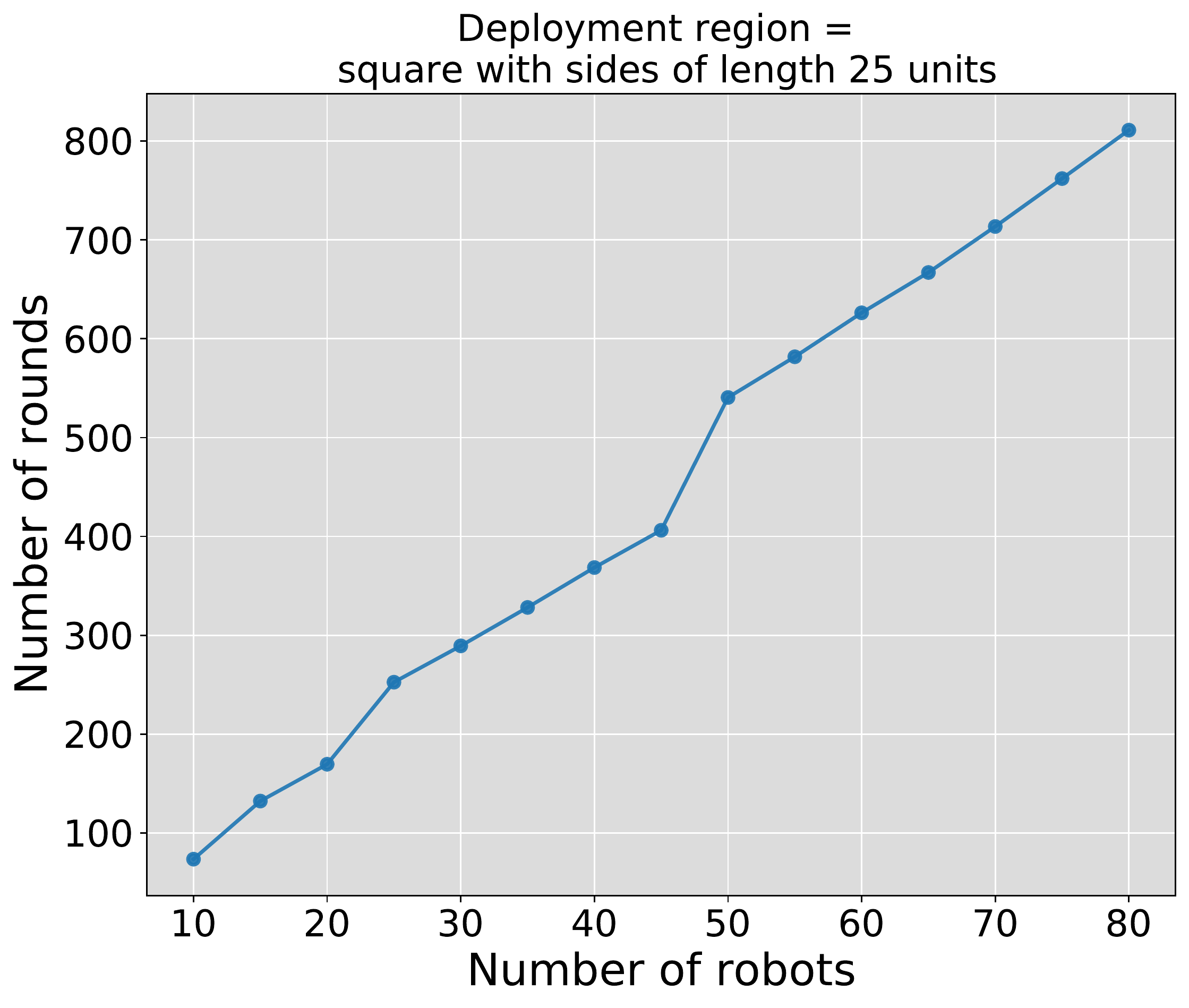}
         \caption{}
         \label{fig: time_plot}
     \end{subfigure}
     \hfill
     \begin{subfigure}[b]{0.4\linewidth}
         \centering
         \includegraphics[width=\linewidth]{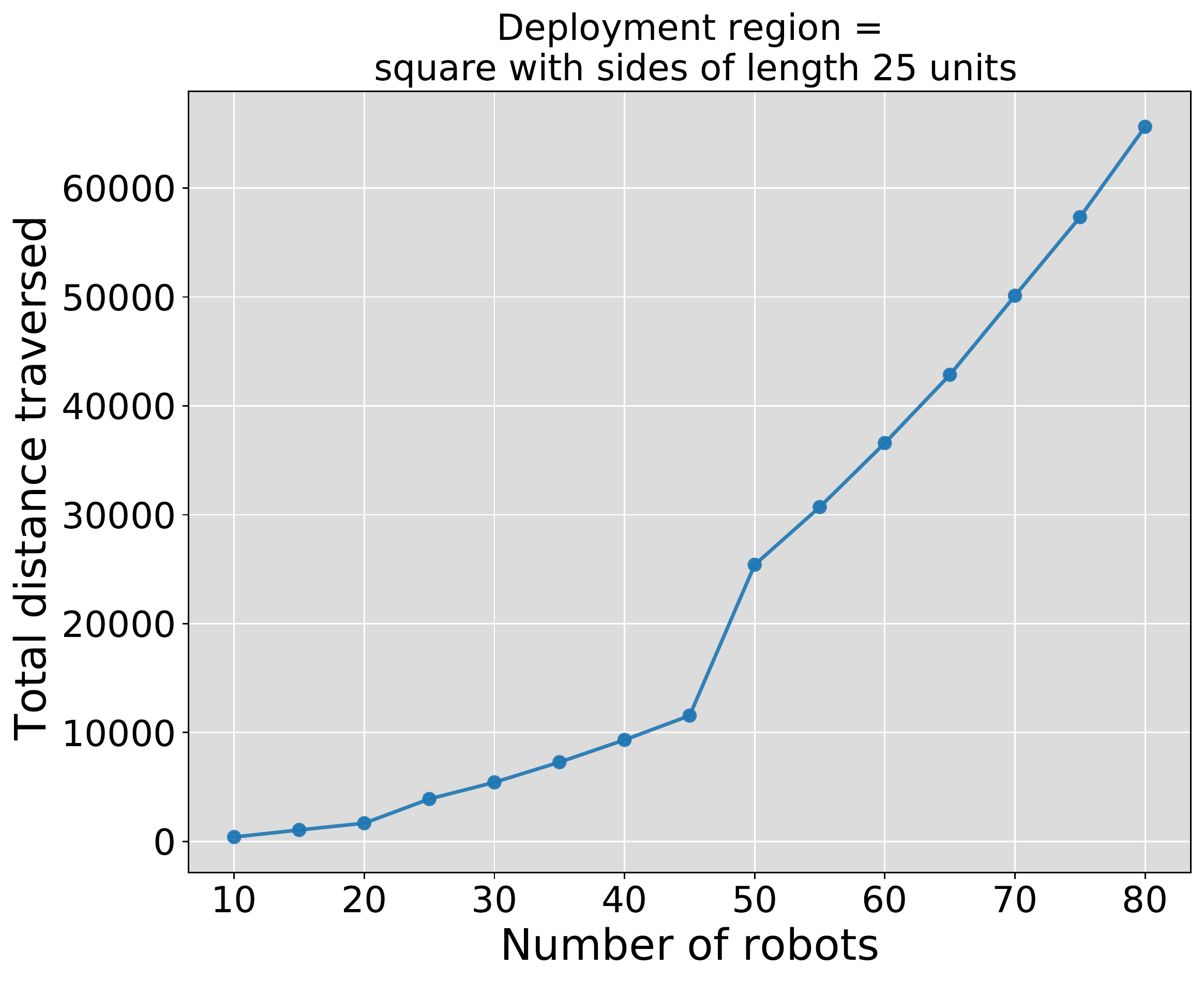}
         \caption{}
         \label{fig: dist_plot}
     \end{subfigure}
          \caption[Short Caption]{(a) Number of rounds required and (b) total distance traversed for varying swarm size.}
\label{}
\end{figure}

Note that the total distance traversed should also depend on the horizontal width $H$  and vertical width $V$ of the initial configuration. To observe their effects, we ran the algorithm for horizontally stretched (10:1) and vertically stretched (1:10) deployment regions with 30 robots. The results are presented in Fig. \ref{fig: dist2_plot}. Our previous experiments suggest that in the case of horizontally stretched regions, there will be more moves by false southmost robots. This will slightly increase the total distance traversed in the case of horizontally stretched regions. However, the results in Fig. \ref{fig: dist2_plot} show the opposite. The reason is the following. $H$ affects the total distance traversed because the robots have to move horizontally to reach the $Y$-axis (passing through the leader). On the other hand, there is an influence of $V$ because the robots have to move southwards to reach the leader. Notice that in the case of a horizontally stretched $10a \times a$ deployment region, the best case (with respect to the horizontal moves) is when the elected leader is from the middle portion of the region. In this case, the robots have to move roughly at most $5a$ units to reach the $Y$-axis. The worst case (with respect to the horizontal moves) is when the leader is from the eastmost or westmost portion of the region, as in this case, a large section (roughly half) of the robots have to move more than $5a$ units. For the vertically stretched $a \times 10a$ deployment region, in all cases, roughly half of the robots move more than $5a$ units to come out of the deployment region. For this reason, the total distance traversed on average for the horizontally stretched deployment regions is lower, but the standard deviation is higher; on the other hand, the total distance traversed on average for the vertically stretched deployment regions is higher, but the standard deviation is lower (see Fig. \ref{fig: dist2_plot}). 
 \begin{figure}[thb!]
     \centering
    \includegraphics[width=.5\linewidth]{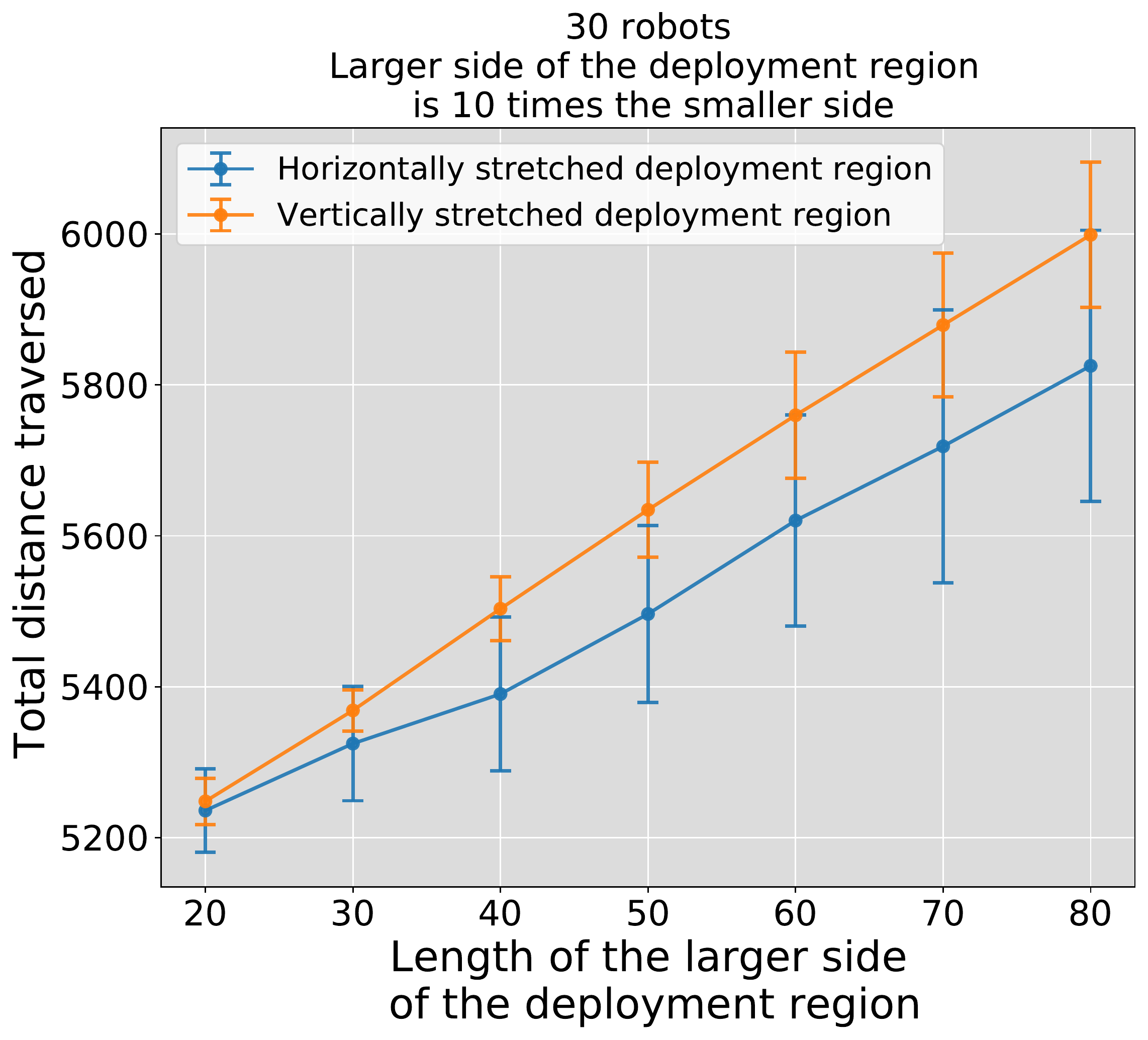}
     \caption[Short Caption]{Comparison between total distance traversed for horizontally and vertically stretched deployment regions for varying swarm size.}
\label{fig: dist2_plot}
\end{figure}

\section{Concluding Remarks}

 In this paper, we presented an algorithm for \textsc{Mutual Visibility} by opaque fat robots with slim omnidirectional camera. We proved the correctness of our algorithm in the $\mathcal{SSYNC}$ setting. The $\mathcal{SSYNC}$ assumption is required to prove the correctness of the leader election stage of the algorithm. The rest of the algorithm can be easily adapted to the $\mathcal{ASYNC}$ setting with some minor modifications. The main difficulty of the leader election stage arises due to the moves by false southmost robots. However, they are quite rare, as we have seen in our simulations. Hence, our algorithm should work well also in the $\mathcal{ASYNC}$ setting as a heuristic. However, it might be of theoretical interest to have a provably correct algorithm for leader election in $\mathcal{ASYNC}$. We believe that it might be possible to adapt our approach to the $\mathcal{ASYNC}$ setting to obtain a provably correct algorithm.

 We assumed that the robots have an agreement on the direction and orientation of both $X$ and $Y$ axes of their local coordinate systems. An interesting direction for future research is to solve the problem without any agreement on the coordinate system. Our approach to solve the problem was to first elect a leader and then arrange the robots on a mutually visible chain. Without any agreement on the coordinate system, there are some initial symmetric configurations in which leader election is deterministically unsolvable. So, in this case, one can try to solve leader election using randomization and then try to build the mutually visible chain. However, it might be of theoretical interest to have a fully deterministic solution. Since leader election is not always deterministically solvable, a new approach is needed so that the problem can be directly solved without requiring to elect a leader.

 \bibliographystyle{plainurl} 
 \bibliography{ref}

\begin{thebibliography}{10}

\bibitem{ADHIKARY2022}
Ranendu Adhikary, Kaustav Bose, Manash~Kumar Kundu, and Buddhadeb Sau.
\newblock Mutual visibility on grid by asynchronous luminous robots.
\newblock {\em Theor. Comput. Sci.}, 2022.
\newblock URL:
  \url{https://www.sciencedirect.com/science/article/pii/S0304397522002481},
  \href {https://doi.org/https://doi.org/10.1016/j.tcs.2022.04.026}
  {\path{doi:https://doi.org/10.1016/j.tcs.2022.04.026}}.

\bibitem{DBLP:conf/podc/AgathangelouGM13}
Chrysovalandis Agathangelou, Chryssis Georgiou, and Marios Mavronicolas.
\newblock A distributed algorithm for gathering many fat mobile robots in the
  plane.
\newblock In Panagiota Fatourou and Gadi Taubenfeld, editors, {\em {ACM}
  Symposium on Principles of Distributed Computing, {PODC} '13, Montreal, QC,
  Canada, July 22-24, 2013}, pages 250--259. {ACM}, 2013.
\newblock \href {https://doi.org/10.1145/2484239.2484266}
  {\path{doi:10.1145/2484239.2484266}}.

\bibitem{aljohani2018complete}
Aisha Aljohani and Gokarna Sharma.
\newblock Complete visibility for mobile robots with lights tolerating faults.
\newblock {\em International Journal of Networking and Computing}, 8(1):32--52,
  2018.
\newblock URL: \url{http://www.ijnc.org/index.php/ijnc/article/view/166}.

\bibitem{Bhagat20}
Subhash Bhagat.
\newblock Optimum algorithm for the mutual visibility problem.
\newblock In {\em {WALCOM:} Algorithms and Computation - 14th International
  Conference, {WALCOM} 2020, Singapore, March 31 - April 2, 2020, Proceedings},
  volume 12049 of {\em Lecture Notes in Computer Science}, pages 31--42.
  Springer, 2020.
\newblock \href {https://doi.org/10.1007/978-3-030-39881-1\_4}
  {\path{doi:10.1007/978-3-030-39881-1\_4}}.

\bibitem{bhagat2017optimum}
Subhash Bhagat and Krishnendu Mukhopadhyaya.
\newblock Optimum algorithm for mutual visibility among asynchronous robots
  with lights.
\newblock In {\em Stabilization, Safety, and Security of Distributed Systems -
  19th International Symposium, {SSS} 2017, Boston, MA, USA, November 5-8,
  2017, Proceedings}, pages 341--355, 2017.
\newblock \href {https://doi.org/10.1007/978-3-319-69084-1_24}
  {\path{doi:10.1007/978-3-319-69084-1_24}}.

\bibitem{BhagatM19}
Subhash Bhagat and Krishnendu Mukhopadhyaya.
\newblock Mutual visibility by robots with persistent memory.
\newblock In {\em Frontiers in Algorithmics - 13th International Workshop,
  {FAW} 2019, Sanya, China, April 29 - May 3, 2019, Proceedings}, volume 11458
  of {\em Lecture Notes in Computer Science}, pages 144--155. Springer, 2019.
\newblock \href {https://doi.org/10.1007/978-3-030-18126-0\_13}
  {\path{doi:10.1007/978-3-030-18126-0\_13}}.

\bibitem{flocchini2019distributed}
Paola Flocchini, Giuseppe Prencipe, and Nicola Santoro.
\newblock Distributed computing by mobile entities.
\newblock {\em Current Research in Moving and Computing}, 11340, 2019.

\bibitem{HectorVST22}
Rory Hector, R.~Vaidyanathan, Gokarna Sharma, and Jerry~L. Trahan.
\newblock Optimal convex hull formation on a grid by asynchronous robots with
  lights.
\newblock {\em IEEE Transactions on Parallel and Distributed Systems}, 2022.
\newblock \href {https://doi.org/10.1109/TPDS.2022.3158202}
  {\path{doi:10.1109/TPDS.2022.3158202}}.

\bibitem{tcs/HonoratPT14}
Anthony Honorat, Maria Potop{-}Butucaru, and S{\'{e}}bastien Tixeuil.
\newblock Gathering fat mobile robots with slim omnidirectional cameras.
\newblock {\em Theor. Comput. Sci.}, 557:1--27, 2014.
\newblock \href {https://doi.org/10.1016/j.tcs.2014.08.004}
  {\path{doi:10.1016/j.tcs.2014.08.004}}.

\bibitem{di2017mutual}
Giuseppe Antonio~Di Luna, Paola Flocchini, Sruti~Gan Chaudhuri, Federico
  Poloni, Nicola Santoro, and Giovanni Viglietta.
\newblock Mutual visibility by luminous robots without collisions.
\newblock {\em Inf. Comput.}, 254:392--418, 2017.
\newblock \href {https://doi.org/10.1016/j.ic.2016.09.005}
  {\path{doi:10.1016/j.ic.2016.09.005}}.

\bibitem{di2014mutual}
Giuseppe Antonio~Di Luna, Paola Flocchini, Federico Poloni, Nicola Santoro, and
  Giovanni Viglietta.
\newblock The mutual visibility problem for oblivious robots.
\newblock In {\em Proceedings of the 26th Canadian Conference on Computational
  Geometry, {CCCG} 2014, Halifax, Nova Scotia, Canada, 2014}, 2014.
\newblock URL: \url{http://www.cccg.ca/proceedings/2014/papers/paper51.pdf}.

\bibitem{POUDEL2021116}
Pavan Poudel, Aisha Aljohani, and Gokarna Sharma.
\newblock Fault-tolerant complete visibility for asynchronous robots with
  lights under one-axis agreement.
\newblock {\em Theoretical Computer Science}, 850:116 -- 134, 2021.
\newblock \href {https://doi.org/10.1016/j.tcs.2020.10.033}
  {\path{doi:10.1016/j.tcs.2020.10.033}}.

\bibitem{sharmafat18}
Gokarna Sharma, Rusul Alsaedi, Costas Busch, and Supratik Mukhopadhyay.
\newblock The complete visibility problem for fat robots with lights.
\newblock In {\em Proceedings of the 19th International Conference on
  Distributed Computing and Networking, {ICDCN} 2018, Varanasi, India, January
  4-7, 2018}, pages 21:1--21:4, 2018.
\newblock \href {https://doi.org/10.1145/3154273.3154319}
  {\path{doi:10.1145/3154273.3154319}}.

\bibitem{sharma2015bounds}
Gokarna Sharma, Costas Busch, and Supratik Mukhopadhyay.
\newblock Bounds on mutual visibility algorithms.
\newblock In {\em Proceedings of the 27th Canadian Conference on Computational
  Geometry, {CCCG} 2015, Kingston, Ontario, Canada, August 10-12, 2015}, 2015.
\newblock URL:
  \url{http://research.cs.queensu.ca/cccg2015/CCCG15-papers/43.pdf}.

\bibitem{sharma2015mutual}
Gokarna Sharma, Costas Busch, and Supratik Mukhopadhyay.
\newblock Mutual visibility with an optimal number of colors.
\newblock In {\em Algorithms for Sensor Systems - 11th International Symposium
  on Algorithms and Experiments for Wireless Sensor Networks, {ALGOSENSORS}
  2015, Patras, Greece, September 17-18, 2015, Revised Selected Papers}, pages
  196--210, 2015.
\newblock \href {https://doi.org/10.1007/978-3-319-28472-9_15}
  {\path{doi:10.1007/978-3-319-28472-9_15}}.

\bibitem{SharmaBM18}
Gokarna Sharma, Costas Busch, and Supratik Mukhopadhyay.
\newblock How to make fat autonomous robots see all others fast?
\newblock In {\em 2018 {IEEE} International Conference on Robotics and
  Automation, {ICRA} 2018, Brisbane, Australia, May 21-25, 2018}, pages 1--9,
  2018.
\newblock \href {https://doi.org/10.1109/ICRA.2018.8460899}
  {\path{doi:10.1109/ICRA.2018.8460899}}.

\bibitem{SharmaVT20}
Gokarna Sharma, Ramachandran Vaidyanathan, and Jerry~L. Trahan.
\newblock Optimal randomized complete visibility on a grid for asynchronous
  robots with lights.
\newblock In {\em 2020 {IEEE} International Parallel and Distributed Processing
  Symposium Workshops, {IPDPSW} 2020, New Orleans, LA, USA, May 18-22, 2020},
  pages 607--616, 2020.
\newblock \href {https://doi.org/10.1109/IPDPSW50202.2020.00103}
  {\path{doi:10.1109/IPDPSW50202.2020.00103}}.

\bibitem{sharma2016complete}
Gokarna Sharma, Ramachandran Vaidyanathan, Jerry~L. Trahan, Costas Busch, and
  Suresh Rai.
\newblock Complete visibility for robots with lights in {O(1)} time.
\newblock In {\em Stabilization, Safety, and Security of Distributed Systems -
  18th International Symposium, {SSS} 2016, Lyon, France, November 7-10, 2016,
  Proceedings}, pages 327--345, 2016.
\newblock \href {https://doi.org/10.1007/978-3-319-49259-9_26}
  {\path{doi:10.1007/978-3-319-49259-9_26}}.

\bibitem{sharma2017log}
Gokarna Sharma, Ramachandran Vaidyanathan, Jerry~L. Trahan, Costas Busch, and
  Suresh Rai.
\newblock O(log n)-time complete visibility for asynchronous robots with
  lights.
\newblock In {\em 2017 {IEEE} International Parallel and Distributed Processing
  Symposium, {IPDPS} 2017, Orlando, FL, USA, May 29 - June 2, 2017}, pages
  513--522, 2017.
\newblock \href {https://doi.org/10.1109/IPDPS.2017.51}
  {\path{doi:10.1109/IPDPS.2017.51}}.

\bibitem{vaidyanathan2015logarithmic}
Ramachandran Vaidyanathan, Costas Busch, Jerry~L Trahan, Gokarna Sharma, and
  Suresh Rai.
\newblock Logarithmic-time complete visibility for robots with lights.
\newblock In {\em Parallel and Distributed Processing Symposium (IPDPS), 2015
  IEEE International}, pages 375--384. IEEE, 2015.

\end{thebibliography}

\end{document}